\newcolumntype{L}[1]{>{\raggedright\let\newline\\\arraybackslash\hspace{0pt}}m{#1}}
\newcolumntype{C}[1]{>{\centering\let\newline\\\arraybackslash\hspace{0pt}}m{#1}}
\newcolumntype{R}[1]{>{\raggedleft\let\newline\\\arraybackslash\hspace{0pt}}m{#1}}
\let\MYcaption\@makecaption
\let\@makecaption\MYcaption
\let\oldgls\gls
\let\oldglspl\glspl
\newcommand\fussy@ifnextchar[3]{%
	\let\reserved@d=#1%
	\def\reserved@a{#2}%
	\def\reserved@b{#3}%
	\futurelet\@let@token\fussy@ifnch}
\def\fussy@ifnch{%
	\ifx\@let@token\reserved@d
		\let\reserved@c\reserved@a
	\else
		\let\reserved@c\reserved@b
	\fi
	\reserved@c}
\renewcommand{\gls}[1]{%
\oldgls{#1}\fussy@ifnextchar.{\@checkperiod}{\@}}
\renewcommand{\glspl}[1]{%
\oldglspl{#1}\fussy@ifnextchar.{\@checkperiod}{\@}}
\newcommand{\@checkperiod}[1]{%
	\ifnum\sfcode`\.=\spacefactor\else#1\fi
}
\newacronym{wrt}{w.r.t.}{with respect to}
\newacronym{RHS}{R.H.S.}{right-hand side}
\newacronym{LHS}{L.H.S.}{left-hand side}
\newacronym{iid}{i.i.d.}{independent and identically distributed}
\newacronym{SOTA}{SOTA}{state-of-the-art}
\let\saved@bibitem\@bibitem\makeatother
\let\@bibitem\saved@bibitem\makeatother
\crefname{equation}{}{}
\Crefname{equation}{}{}
\crefname{claim}{claim}{claims}
\crefname{step}{step}{steps}
\crefname{line}{line}{lines}
\crefname{condition}{condition}{conditions}
\crefname{dmath}{}{}
\crefname{dseries}{}{}
\crefname{dgroup}{}{}
\crefname{Problem}{Problem}{Problems}
\crefname{Theorem}{Theorem}{Theorems}
\crefname{Corollary}{Corollary}{Corollaries}
\crefname{Proposition}{Proposition}{Propositions}
\crefname{Lemma}{Lemma}{Lemmas}
\crefname{Definition}{Definition}{Definitions}
\crefname{Example}{Example}{Examples}
\crefname{Assumption}{Assumption}{Assumptions}
\crefname{Remark}{Remark}{Remarks}
\crefname{Rem}{Remark}{Remarks}
\crefname{remarks}{Remarks}{Remarks}
\crefname{Appendix}{Appendix}{Appendices}
\crefname{Supplement}{Supplement}{Supplements}
\crefname{Exercise}{Exercise}{Exercises}
\crefname{Theorem_A}{Theorem}{Theorems}
\crefname{Corollary_A}{Corollary}{Corollaries}
\crefname{Proposition_A}{Proposition}{Propositions}
\crefname{Lemma_A}{Lemma}{Lemmas}
\crefname{Definition_A}{Definition}{Definitions}
		\let\Cref\crtCref
		\let\cref\crtcref
\def\cleartheorem#1{%
    \expandafter\let\csname#1\endcsname\relax
    \expandafter\let\csname c@#1\endcsname\relax
}
\def\clearthms#1{ \@for\tname:=#1\do{\cleartheorem\tname} }
		\newtheorem{Theorem}{Theorem}
		\newtheorem{Corollary}{Corollary}
		\newtheorem{Proposition}{Proposition}
		\newtheorem{Theorem}{Theorem}
	\newtheorem{Remark}{Remark}
	\newtheorem{Assumption}{Assumption}
\theoremstyle{remark}
\theoremstyle{plain}
\newcommand{\qednew}{\nobreak \ifvmode \relax \else
		\ifdim\lastskip<1.5em \hskip-\lastskip
			\hskip1.5em plus0em minus0.5em \fi \nobreak
		\vrule height0.75em width0.5em depth0.25em\fi}
\newcommand{\nn}{\nonumber\\ }
\NewDocumentCommand{\movedownsub}{e{^_}}{%
	\IfNoValueTF{#1}{%
		\IfNoValueF{#2}{^{}}
	}{%
		^{#1}
	}%
	\IfNoValueF{#2}{_{#2}}
}
\let\latexchi\chi
\RenewDocumentCommand{\chi}{}{\latexchi\movedownsub}
\newcommand{\Real}{\mathbb{R}}
\newcommand{\calE}{\mathcal{E}}
\newcommand{\calG}{\mathcal{G}}
\newcommand{\calL}{\mathcal{L}}
\newcommand{\calM}{\mathcal{M}}
\newcommand{\calO}{\mathcal{O}}
\newcommand{\calV}{\mathcal{V}}
\newcommand{\bbA}{\mathbb{A}}
\newcommand{\bbR}{\mathbb{R}}
\DeclareSymbolFont{bsfletters}{OT1}{cmss}{bx}{n}
\DeclareSymbolFont{ssfletters}{OT1}{cmss}{m}{n}
\DeclareMathSymbol{\bsfGamma}{0}{bsfletters}{'000}
\DeclareMathSymbol{\ssfGamma}{0}{ssfletters}{'000}
\DeclareMathSymbol{\bsfDelta}{0}{bsfletters}{'001}
\DeclareMathSymbol{\ssfDelta}{0}{ssfletters}{'001}
\DeclareMathSymbol{\bsfTheta}{0}{bsfletters}{'002}
\DeclareMathSymbol{\ssfTheta}{0}{ssfletters}{'002}
\DeclareMathSymbol{\bsfLambda}{0}{bsfletters}{'003}
\DeclareMathSymbol{\ssfLambda}{0}{ssfletters}{'003}
\DeclareMathSymbol{\bsfXi}{0}{bsfletters}{'004}
\DeclareMathSymbol{\ssfXi}{0}{ssfletters}{'004}
\DeclareMathSymbol{\bsfPi}{0}{bsfletters}{'005}
\DeclareMathSymbol{\ssfPi}{0}{ssfletters}{'005}
\DeclareMathSymbol{\bsfSigma}{0}{bsfletters}{'006}
\DeclareMathSymbol{\ssfSigma}{0}{ssfletters}{'006}
\DeclareMathSymbol{\bsfUpsilon}{0}{bsfletters}{'007}
\DeclareMathSymbol{\ssfUpsilon}{0}{ssfletters}{'007}
\DeclareMathSymbol{\bsfPhi}{0}{bsfletters}{'010}
\DeclareMathSymbol{\ssfPhi}{0}{ssfletters}{'010}
\DeclareMathSymbol{\bsfPsi}{0}{bsfletters}{'011}
\DeclareMathSymbol{\ssfPsi}{0}{ssfletters}{'011}
\DeclareMathSymbol{\bsfOmega}{0}{bsfletters}{'012}
\DeclareMathSymbol{\ssfOmega}{0}{ssfletters}{'012}
\newcommand*\rel@kern[1]{\kern#1\dimexpr\macc@kerna}
\newcommand*\widebar[1]{%
  \begingroup
  \def\mathaccent##1##2{%
    \rel@kern{0.8}%
    \overline{\rel@kern{-0.8}\macc@nucleus\rel@kern{0.2}}%
    \rel@kern{-0.2}%
  }%
  \macc@depth\@ne
  \let\math@bgroup\@empty \let\math@egroup\macc@set@skewchar
  \mathsurround\z@ \frozen@everymath{\mathgroup\macc@group\relax}%
  \macc@set@skewchar\relax
  \let\mathaccentV\macc@nested@a
  \macc@nested@a\relax111{#1}%
  \endgroup
}
\DeclareMathOperator{\var}{var}
\DeclareMathOperator{\cov}{cov}
\newcommand{\ifbcdot}[1]{\ifblank{#1}{\cdot}{#1}}
\DeclarePairedDelimiterX\abs[1]{\lvert}{\rvert}{\ifbcdot{#1}}
\DeclarePairedDelimiterX\parens[1]{(}{)}{\ifbcdot{#1}}
\DeclarePairedDelimiterX\brk[1]{[}{]}{\ifbcdot{#1}}
\DeclarePairedDelimiterX\braces[1]{\{}{\}}{\ifbcdot{#1}}
\DeclarePairedDelimiterX\angles[1]{\langle}{\rangle}{\ifblank{#1}{\cdot,\cdot}{#1}}
\DeclarePairedDelimiterX\ip[2]{\langle}{\rangle}{\ifbcdot{#1},\ifbcdot{#2}}
\DeclarePairedDelimiterX\norm[1]{\lVert}{\rVert}{\ifbcdot{#1}}
\DeclarePairedDelimiterX\ceil[1]{\lceil}{\rceil}{\ifbcdot{#1}}
\DeclarePairedDelimiterX\floor[1]{\lfloor}{\rfloor}{\ifbcdot{#1}}
\DeclareFontFamily{U}{matha}{\hyphenchar\font45}
\DeclareFontShape{U}{matha}{m}{n}{
      <5> <6> <7> <8> <9> <10> gen * matha
      <10.95> matha10 <12> <14.4> <17.28> <20.74> <24.88> matha12
      }{}
\DeclareSymbolFont{matha}{U}{matha}{m}{n}
\DeclareFontFamily{U}{mathx}{\hyphenchar\font45}
\DeclareFontShape{U}{mathx}{m}{n}{
      <5> <6> <7> <8> <9> <10>
      <10.95> <12> <14.4> <17.28> <20.74> <24.88>
      mathx10
      }{}
\DeclareSymbolFont{mathx}{U}{mathx}{m}{n}
\DeclareMathDelimiter{\vvvert}{0}{matha}{"7E}{mathx}{"17}
\DeclarePairedDelimiterX\vertiii[1]{\vvvert}{\vvvert}{\ifbcdot{#1}}
\DeclarePairedDelimiterXPP\trace[1]{\operatorname{Tr}}{(}{)}{}{\ifbcdot{#1}} 
\DeclarePairedDelimiterXPP\col[1]{\operatorname{col}}{\{}{\}}{}{\ifbcdot{#1}} 
\DeclarePairedDelimiterXPP\row[1]{\operatorname{row}}{\{}{\}}{}{\ifbcdot{#1}} 
\DeclarePairedDelimiterXPP\erf[1]{\operatorname{erf}}{(}{)}{}{\ifbcdot{#1}}
\DeclarePairedDelimiterXPP\erfc[1]{\operatorname{erfc}}{(}{)}{}{\ifbcdot{#1}}
\DeclarePairedDelimiterXPP\KLD[2]{D}{(}{)}{}{\ifbcdot{#1}\, \delimsize\|\, \ifbcdot{#2}} 
\DeclarePairedDelimiterXPP\op[2]{\operatorname{#1}}{(}{)}{}{#2} 
\newcommand{\ud}{\,\mathrm{d}} 
\DeclarePairedDelimiterXPP\indicate[1]{{\bf 1}}{\{}{\}}{}{\ifbcdot{#1}}
\newcommand{\indicator}[1]{{\bf 1}_{\braces*{\ifbcdot{#1}}}}
\newcommand{\tc}[1]{^{(#1)}}
\NewDocumentCommand\ofrac{s m}{%
	\IfBooleanTF#1%
	{\dfrac{1}{#2}}%
	{\frac{1}{#2}}%
}
\NewDocumentCommand\ddfrac{s m m}{%
	\IfBooleanTF#1%
	{\dfrac{\mathrm{d} {#2}}{\mathrm{d} {#3}}}%
	{\frac{\mathrm{d} {#2}}{\mathrm{d} {#3}}}%
}
\NewDocumentCommand\ppfrac{s m m}{%
	\IfBooleanTF#1%
	{\dfrac{\partial {#2}}{\partial {#3}}}%
	{\frac{\partial {#2}}{\partial {#3}}}%
}
\providecommand\given{}
\DeclarePairedDelimiterX\Set[2]\{\}{%
\renewcommand\given{\SetSymbol[\delimsize]{#1}}
#2
}
\DeclarePairedDelimiterX\Setc[1]\{\}{%
\renewcommand\given{\SetSymbol{:}}
#1
}
\NewDocumentCommand\set{s o m}{%
	\IfBooleanTF#1%
	{\IfValueTF{#2}{\Set*{#2}{#3}}{\Setc*{#3}}}%
	{\IfValueTF{#2}{\Set{#2}{#3}}{\Setc{#3}}}%
}
\NewDocumentCommand{\evalat}{ s O{\big} m e{_^} }{%
\IfBooleanTF{#1}%
{\left. #3 \right|}{#3#2|}%
\IfValueT{#4}{_{#4}}%
\IfValueT{#5}{^{#5}}%
}
\providecommand\given{}
\DeclarePairedDelimiterXPP\cprob[1]{}(){}{
\renewcommand\given{\nonscript\,\delimsize\vert\allowbreak\nonscript\,\mathopen{}}%
\DeclarePairedDelimiterXPP\cexp[1]{}[]{}{
\renewcommand\given{\nonscript\,\delimsize\vert\allowbreak\nonscript\,\mathopen{}}%
#1%
}
\DeclareDocumentCommand \P { s e{_^} d() g } {%
	\mathbb{P}%
	\IfBooleanTF{#1}%
		{
			\IfValueT{#2}{_{#2}}%
			\IfValueT{#3}{^{#3}}%
			\IfValueTF{#5}{\cprob{#4 \given #5}}{\IfValueT{#4}{\cprob{#4}}}%
		}%
		{
			\IfValueT{#2}{_{#2}}%
			\IfValueT{#3}{^{#3}}%
			\IfValueTF{#5}{\cprob*{#4 \given #5}}{\IfValueT{#4}{\cprob*{#4}}}%
		}%
}
\DeclareDocumentCommand \E { s e{_^} o g } {%
	\mathbb{E}%
	\IfBooleanTF{#1}%
		{
			\IfValueT{#2}{_{#2}}%
			\IfValueT{#3}{^{#3}}%
			\IfValueTF{#5}{\cexp{#4 \given #5}}{\IfValueT{#4}{\cexp{#4}}}%
		}%
		{
			\IfValueT{#2}{_{#2}}%
			\IfValueT{#3}{^{#3}}%
			\IfValueTF{#5}{\cexp*{#4 \given #5}}{\IfValueT{#4}{\cexp*{#4}}}%
		}%
}
\DeclareDocumentCommand \Var { s e{_^} d() g } {%
	\var%
	\IfBooleanTF{#1}%
		{
			\IfValueT{#2}{_{#2}}%
			\IfValueT{#3}{^{#3}}%
			\IfValueTF{#5}{\cprob{#4 \given #5}}{\IfValueT{#4}{\cprob{#4}}}%
		}%
		{
			\IfValueT{#2}{_{#2}}%
			\IfValueT{#3}{^{#3}}%
			\IfValueTF{#5}{\cprob*{#4 \given #5}}{\IfValueT{#4}{\cprob*{#4}}}%
		}%
}
\DeclareDocumentCommand \Cov { s e{_^} d() g } {%
	\cov%
	\IfBooleanTF{#1}%
		{
			\IfValueT{#2}{_{#2}}%
			\IfValueT{#3}{^{#3}}%
			\IfValueTF{#5}{\cprob{#4 \given #5}}{\IfValueT{#4}{\cprob{#4}}}%
		}%
		{
			\IfValueT{#2}{_{#2}}%
			\IfValueT{#3}{^{#3}}%
			\IfValueTF{#5}{\cprob*{#4 \given #5}}{\IfValueT{#4}{\cprob*{#4}}}%
		}%
}
\NewDocumentCommand \dist {m o o} {%
\mathrm{#1}\left(%
	\IfValueT{#3}{%
		\tl_if_blank:nTF{ #3 }{\cdot\, \middle|\, }{#3\, \middle|\, }%
	}
	\IfValueT{#2}{#2}%
\right)%
}
\NewDocumentCommand {\cbrace} {t+ D[]{black} D(){\widthof{#5}} m m } {%
	\begingroup%
		\color{#2}
		\IfBooleanTF{#1}{%
			\overbrace{#4}^%
		}{
			\underbrace{#4}_%
		}%
		{\parbox[c]{#3}{\centering\footnotesize{#5}}}%
	\endgroup%
}
\let\oldforall\forall
\renewcommand{\forall}{\oldforall \, }
\let\oldexist\exists
\renewcommand{\exists}{\oldexist \, }
\newcommand{\rankcolor}[2]{%
	\expandafter\renewcommand\csname #1\endcsname[1]{%
		\ifblank{##1}{%
			{\color{#2} \textbf{#2}}%
		}{%
			\ifmmode
				\textcolor{#2}{\bm{##1}}%
			\else%
				{\color{#2} \textbf{##1}}%
			\fi	
		}%
	}
}
\DeclareDocumentCommand{\includeCroppedPdf}{ o O{./Figures/} m }{
	\IfFileExists{#2#3-crop.pdf}{}{%
		\immediate\write18{pdfcrop #2#3.pdf #2#3-crop.pdf}}%
	\includegraphics[#1]{#2#3-crop.pdf}
}
\newcommand*{\addFileDependency}[1]{
  \typeout{(#1)}
  \@addtofilelist{#1}
  \IfFileExists{#1}{}{\typeout{No file #1.}}
}
\definecolor{gray90}{gray}{0.9}
\def\colorlist{red,blue,brown,cyan,darkgray,gray,lightgray,green,lime,magenta,olive,orange,pink,purple,teal,violet,white,yellow}
\def\startcomment{[}
	\newcommand{\createcolor}[1]{%
			\expandafter\newcommand\csname #1\endcsname[1]{{\color{#1} ##1}}%
	}
	\newcommand{\msout}[1]{\text{\color{green} \sout{\ensuremath{#1}}}}
	\newcommand{\del}[1]{{\color{green}\ifmmode \msout{#1}\else\sout{#1}\fi}}
	\newcommand{\createcolor}[1]{%
			\expandafter\newcommand\csname #1\endcsname[1]{%
				\noexpandarg%
				\StrChar{##1}{1}[\firstletter]%
				\if\firstletter\startcomment%
					\relax
				\else%
					##1
				\fi
			}%
	}
	\newcommand{\msout}[1]{}
	\newcommand{\del}[1]{}
\def\@tempa#1,{%
    \ifx\relax#1\relax\else
        \createcolor{#1}%
        \expandafter\@tempa
    \fi
}
\newcommand{\hhide}[1]{}
	\def\@testdef #1#2#3{%
		\def\reserved@a{#3}\expandafter \ifx \csname #1@#2\endcsname
			\reserved@a  \else
			\typeout{^^Jlabel #2 changed:^^J%
				\meaning\reserved@a^^J%
				\expandafter\meaning\csname #1@#2\endcsname^^J}%
			\@tempswatrue \fi}
\newcommand{\pfgm}{\rho_{x \leftrightarrow y}}
\newcommand{\pfgu}{\rho_{x \nleftrightarrow y}}
\newacronym{GNN}{GNN}{graph neural network}
\newcommand{\tildepfgm}{{\tilde \rho}_{x \leftrightarrow y}}
\newcommand{\tildepfgu}{{\tilde \rho}_{x \nleftrightarrow y}}
\begin{document}

\title{RobustMat: Neural Diffusion for Street Landmark Patch Matching under Challenging Environments}

\author{Rui~She*,
        Qiyu~Kang*,
        Sijie~Wang*, 
        Yu\'an-Ru\`i~Y\'ang,
        Kai~Zhao,
        Yang~Song,
        and
        Wee~Peng~Tay,~\IEEEmembership{Senior~Member,~IEEE}
\thanks{*R. She, Q. Kang and S. Wang contributed equally to this work. 
This research is supported by A*STAR under its RIE2020 Advanced Manufacturing and Engineering (AME) Industry Alignment Fund – Pre Positioning (IAF-PP) (Grant No. A19D6a0053), and the Singapore Ministry of Education Academic Research Fund Tier 2 grant MOE-T2EP20220-0002.
}
\thanks{
        R. She, Q. Kang, S. Wang, Y. Y\'ang, K. Zhao, Y. Song, and W. P. Tay are with Nanyang Technological University, Singapore.  
        {\{rui.she@; qiyu.kang@; wang1679@e.; yuanrui.yang@; kai.zhao@;  songy@; wptay@\}ntu.edu.sg}
        } 
}



\maketitle

\begin{abstract}
For autonomous vehicles (AVs), visual perception techniques based on sensors like cameras play crucial roles in information acquisition and processing. 
In various computer perception tasks for AVs, it may be helpful to match landmark patches taken by an onboard camera with other landmark patches captured at a different time or saved in a street scene image database. 
To perform matching under challenging driving environments caused by changing seasons, weather, and illumination, we utilize the spatial neighborhood information of each patch. 
We propose an approach, named \textit{RobustMat}, which derives its robustness to perturbations from neural differential equations. 
A convolutional neural ODE diffusion module is used to learn the feature representation for the landmark patches. A graph neural PDE diffusion module then aggregates information from neighboring landmark patches in the street scene. Finally, feature similarity learning outputs the final matching score. 
Our approach is evaluated on several street scene datasets and demonstrated to achieve state-of-the-art matching results under environmental perturbations. 
\end{abstract}

\begin{IEEEkeywords}
Image matching, landmarks, neural diffusion, graph neural networks, autonomous driving. 
\end{IEEEkeywords}

\section{Introduction}\label{sec:introduction}
\IEEEPARstart{I}{mage} matching \cite{wang2020deep,quan2021multi,she2023image} plays a vital role in autonomous driving and advanced driving assistant systems. 
In the context of street scene applications related to autonomous driving, landmarks such as traffic signs, traffic lights, and roadside poles can be identified as semantic objects \cite{wang2020robust}. Image patches captured at different locations that contain the same object can be matched using image patch matching techniques. This kind of matching is crucial in practical applications like loop-closure detection \cite{wang2020robust}, visual place recognition \cite{sunderhauf2015place}, camera relocalization for vehicles \cite{Wang2022robustloc}, and simultaneous localization and mapping (SLAM) \cite{Kang22itsc}.

\begin{figure}[!htb]
\centering
\includegraphics[width=\linewidth]{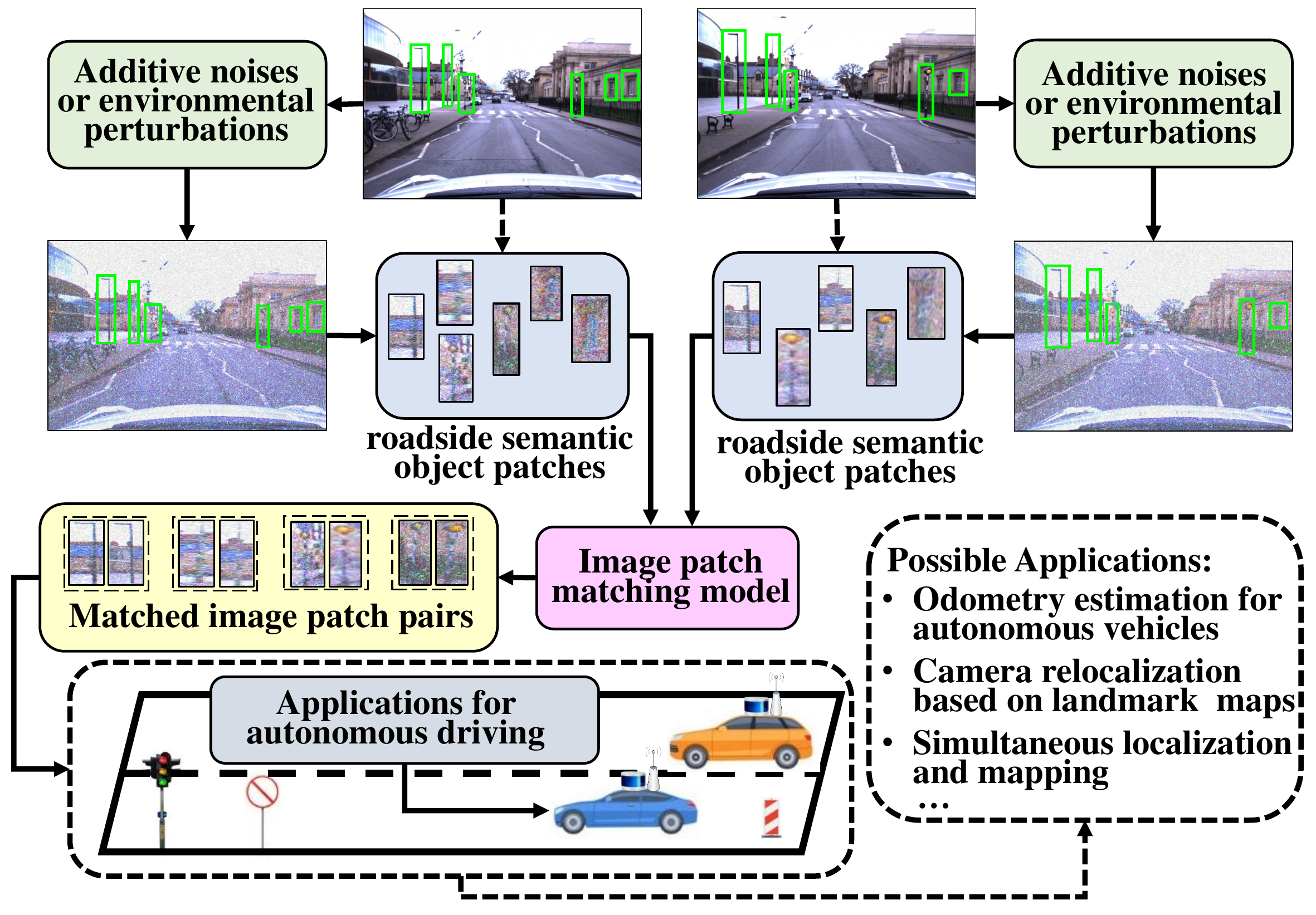}
\caption{The diagram of image patch matching for street scene landmarks under a noisy environment. The landmark patches in the green bounding boxes include traffic signs, traffic lights, poles, and windows.}
\label{fig:figure_overview}
\vspace{-0.5cm}
\end{figure}

In conventional matching approaches, handcrafted features based on local pixel statistics or gradient differences, such as SIFT, SURF, ORB and FREAK, are often used \cite{rublee2011orb,alahi2012freak}. The matching similarity is measured by the $\mathcal{L}_2$ distance, cosine distance, or other metrics. However, these approaches are not robust under varying environments, illuminations, and transformations, causing unstable matching performance \cite{tian2017l2,Wang2023HypLiLoc}. Deep learning methods such as \cite{han2015matchnet, quan2019afd, sarlin2020superglue, sun2021loftr} have been exploited to represent the image features but are still susceptible to noise and perturbations. 

In the joint feature and metric learning methods \cite{subramaniam2018ncc,quan2019afd}, the high-level feature representations and their similarity metrics are simultaneously learned. The feature descriptor learning methods \cite{tian2019sosnet,wang2019better,ng2020solar,miao2021learning} try to learn easily distinguishable high-level features under a predefined similarity metric. 
Keypoint correspondence learning methods like D2-Net \cite{dusmanu2019d2}, ASLFeat \cite{luo2020aslfeat} and SuperGlue \cite{sarlin2020superglue}, which achieve keypoint-level correspondence for the input image pairs, can use the keypoint matching score to perform image matching indirectly. 
In autonomous driving in cluttered environments, such as in the downtown area or central business districts (CBD), dynamic objects like vehicles and pedestrians can negatively impact pixel- or point-level matching. There may be more matched pixels from dynamic objects across different street scene images than from static landmarks. Such matching can introduce noise into applications such as place recognition. 
On the other hand, landmark patch-level matching can be more robust since one is performing matching based on identified static objects like traffic lights, signs, poles, and windows. 
We also note that matching based on single landmarks is unlikely to produce accurate results for applications like place recognition. Therefore, our approach is to make use of multiple landmarks and their \emph{relationships} with each other to learn a unique ``fingerprint''. Experimental results in \cref{sec:ablation} bear out this observation. 

In this paper, we focus on matching image patches of static semantic objects shown as \cref{fig:figure_overview}. 
Matching models in the literature achieve good performance in ideal environments, e.g., with clear visibility.
However, in practice, image matching often needs to deal with environmental perturbations such as varied weather or seasonal conditions, and dramatic illumination changes. To this end, we exploit neural ordinary differential equation (ODE) \cite{chen2018neural, kang2021Neurips} and partial differential equation (PDE) \cite{chamberlain2021grand, song2022robustness} modules to improve perception robustness. 
We construct a neighborhood graph for each landmark patch and use the graph neural PDE embedding to enhance the landmark patch representation. Each landmark patch is regarded as a vertex in the graph, which includes the $K$-nearest neighbors (\gls{wrt} the pixel positions) of the patch. The final matching score is a weighted average of the landmark feature graph vertex embedding similarities. 
Our contributions are summarized as follows:
\begin{itemize}
\item We propose a robust patch-matching method for challenging street scene landmark patches with the aid of neural ODE/PDE diffusion learning modules and neighboring patch information aggregation in a \gls{GNN}.
\item Theoretical analysis on the robustness of the feature embedding modules provides insights into the effect of perturbations. 
\item We empirically demonstrate that our landmark patch-matching method achieves \gls{SOTA} performance compared to other benchmarks under challenging noisy street environments. 
\end{itemize}

The rest of this paper is organized as follows. In \cref{sect:relatedworks}, related works are introduced. Our model and its corresponding details are presented in \cref{sect:model} and the theoretical basis for our model design is discussed in \cref{sect:theory}. 
In \cref{sect:exper}, we conduct extensive experiments and ablation studies. 
We discuss possible computer vision applications in \cref{sect:application} and finally conclude in \cref{sect:conc}. 

\section{Related Work}\label{sect:relatedworks}

\subsection{Image matching}

Since current state-of-the-art image patch-matching methods mainly rely on deep learning or neural networks, we give a brief overview of such learning-based methods in the literature. 

As a conventional learning-based matching method, MatchNet \cite{han2015matchnet} utilized deep convolutional neural networks (CNNs) for image feature extraction and exploits fully connected (FC) layers to measure the similarity of feature pairs. 
Moreover, different network architectures, such as SiameseNet, Pseudo-SiameseNet, and the 2-channel network \cite{melekhov2016siamese}, were studied to improve the matching performance.
NCC-Net \cite{subramaniam2018ncc} with the normalized cross correlation (NCC) measured the feature similarity for smaller neighborhood regions of pixels of patches to achieve matching robustness to illumination changes. 
AFD-Net \cite{quan2019afd} improved the discrimination of neural networks by aggregating multi-level features. 
DELG \cite{cao2020unifying} unified global and local features to obtain an efficient image feature representation.
CVNet \cite{lee2022correlation} introduced deep 4D convolutional layers and dense feature correlation to achieve image matching. 
Furthermore, feature matching based on images can also be utilized for visual localization, as demonstrated by CRBNet \cite{zhang2022leveraging}, which employs local and global descriptors in parallel to efficiently acquire correspondences.

Loss functions also have an impact on the feature descriptor learning for the image-matching task. Pairwise loss and triplet loss were introduced to train the descriptor networks \cite{kumar2016learning,balntas2016learning}.
The regularization of the loss function was studied in \cite{zhang2017learning}, which maximized the spread of local feature descriptors to obtain a better image-level embedding.
L2-Net \cite{tian2017l2} utilized a progressive sampling strategy to input more samples within a few epochs. 
To make the feature descriptor more efficient, HardNet \cite{mishchuk2017working} made use of the hard negative samples that are the closest negative samples far away from the closest positives one in a batch. 
Based on the exponential siamese and triplet losses, Exp-TLoss \cite{wang2019better} was also designed for hard sample learning. 
SOSNet \cite{tian2019sosnet} introduced second order similarity (SOS) to the loss function to refine the local feature learning. 
SOLAR \cite{ng2020solar} used the second-order spatial information and descriptor similarity to reweight feature maps and learn global descriptors.
HyNet \cite{tian2020hynet} presented a new local descriptor and a hybrid similarity measure for the triplet margin loss to improve the matching performance. 
The topology consistent descriptors (TCDesc) \cite{pan2021tcdesc} was designed based on the neighborhood information of descriptors, which could be trained by the triplet loss.

Keypoint and pixel-level correspondence methods such as LIFT \cite{yi2016lift} and SuperPoint \cite{detone2018superpoint} were also used for the matching task. 
When there are enough matched keypoints or pixels, a pair of images or patches are considered to be matched. 
Recently, D2-Net \cite{dusmanu2019d2} was designed based on a kind of CNN to detect and describe dense features simultaneously. 
Using the framework of D2-Net, ASLFeat \cite{luo2020aslfeat} provided effective modifications to estimate local shapes and keypoint locations more accurately.
SuperGlue \cite{sarlin2020superglue} made use of GNNs and the Sinkhorn algorithm to achieve feature matching based on keypoints. 
Moreover, LoFTR \cite{sun2021loftr} introduced self- and cross-attention layers into the transformers to obtain semi-dense matching. 
In addition, to improve image matching performance, spatial information is also considered in \cite{brogan2021fast,sarlin2020superglue,sun2021loftr}. 

The image patch matching task under additive perturbations, like white Gaussian noise and image
corruption, such as those due to rain and spatter, is still a challenging task that has not been addressed comprehensively. 
Several works in image matching have investigated the robustness of descriptors for image-level or local features \cite{chen2022robust,jiang2022robust,chen2021statenet,huang2020robust,jiang2020robust,yang2019robust} and that of keypoint-level matching \cite{mousavi2022two,wiles2021co,li2019rift} to changes in viewpoint, illumination, context and style.
The intrinsic robustness of patch-level matching models from the perspective of theoretical interpretability has not been sufficiently addressed.
To mitigate perturbations on images, several image denoising or restoration methods have been studied \cite{zamir2022restormer,wang2022uformer,YananZhao2023,tu2022maxim,zamir2021multi,chen2021hinet,gu2019self} as preprocessing steps. 
Although a two-stage approach can be a possible solution for image-matching in the presence of perturbations, the preprocessing step may introduce artifacts.
Therefore, in this paper, we investigate a single-stage patch-matching approach that is intrinsically robust, can be interpreted theoretically, and does not introduce unexpected artifacts.

\begin{figure*}[!htb]
\centering
\fbox{\includegraphics[width=0.98\linewidth, height=0.41\textheight]{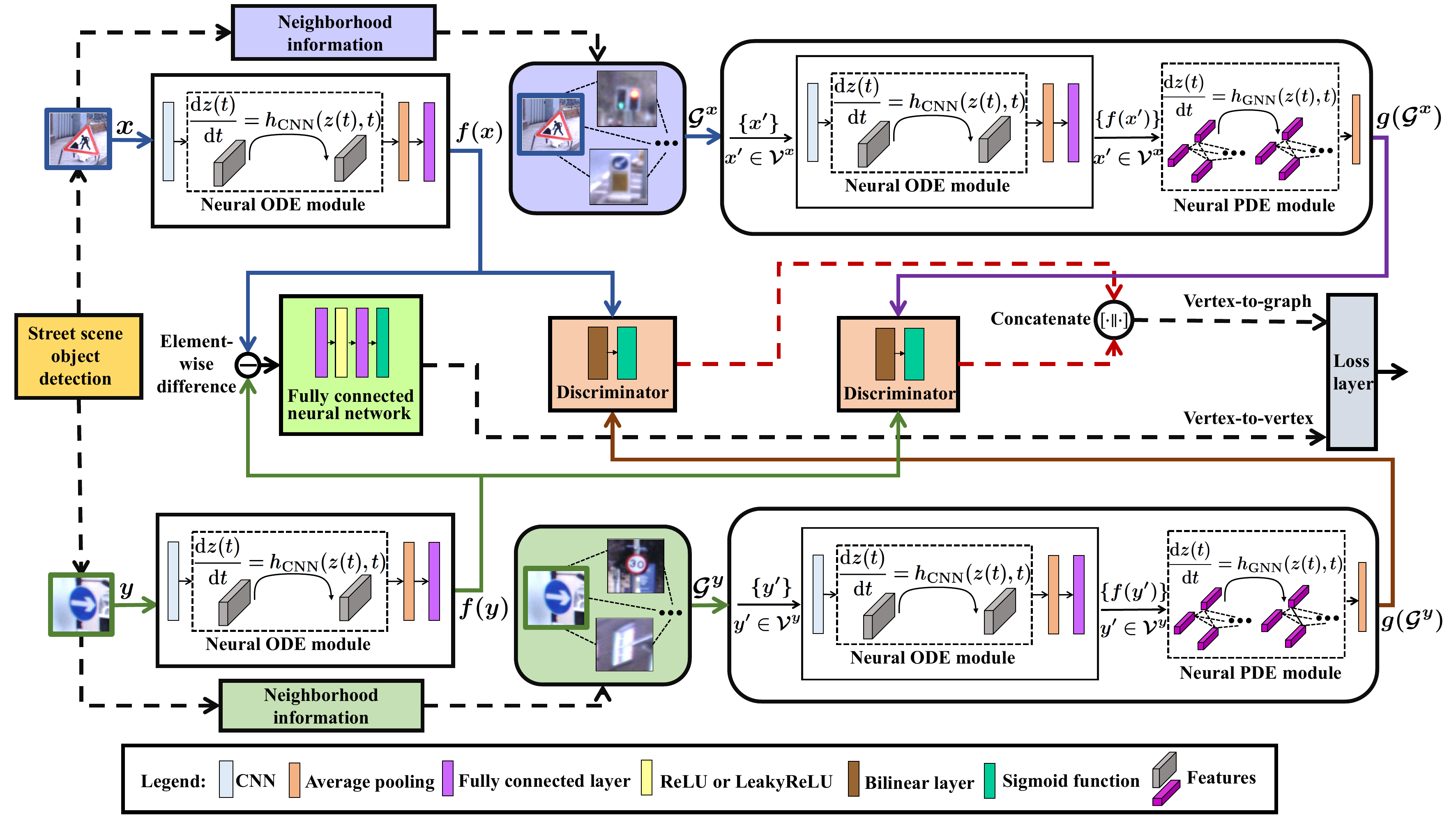}}
\caption{RobustMat: robust landmark patch matching model with the neural diffusion learning. 
The modules for neural ODE/PDE and the discriminators are shared networks, respectively.} 
\label{fig:model}
\label{fig_Matching_model}
\vspace{-0.5cm}
\end{figure*}

\subsection{Neural Diffusion and Robustness}

The neural diffusion models are based on ordinary or partial differential equations \cite{chen2018neural,song2022robustness}. 
In a neural ODE, the relationship between the input $z(0)$ and output $z(T)$ is given by 
\begin{align}
\ddfrac{z(t)}{t}=h_{\mathrm{ODE}}(z(t), t), \label{eq:ode}
\end{align}
where $h_{\mathrm{ODE}}:\Real^n \times [0, T)\to \Real^n$ is a trainable neural network, $z(t): [0, T) \to \Real^n$ denotes the state of the neural ODE at time $t$, and the terminal time $T \in [0,\infty)$.
The output $z(T)$ of the neural ODE layer is given by 
\begin{align}
z(T) = z(0) + \int_{0}^{T} h_{\mathrm{ODE}}(z(t), t) {\ud}t.
\end{align}
Graph neural PDEs are used for more stable representations of the graph features. In practice, neural ODE solvers in \cite{chen2018neural} are used for the graph neural PDEs to obtain approximate solutions \cite{chamberlain2021grand}.  

In the literature, neural ODEs have been demonstrated to be more robust against perturbations than vanilla deep neural networks like CNN, and it is possible to achieve Lyapunov stability \cite{kang2021Neurips}. 
Furthermore, graph neural PDE modules have their advantages in robustifying feature representations \cite{song2022robustness}. 

\section{Robust Landmark Image Patch Matching}\label{sect:model}

In this section, we describe in detail the different modules of our model.

A semantic object of the street scene, such as a traffic sign, traffic light, roadside pole, and window, is regarded as a landmark. An image (also called a \textit{frame} in this paper) is captured by a camera onboard a vehicle. An image patch corresponding to a semantic object is called a \textit{landmark patch}.
We assume that the detection and extraction for landmark patches from frames are available, e.g., by using techniques like Faster R-CNN \cite{ren2016faster}.  
A pair of image patches $(x,y)$ are said to be matched if they contain the \emph{same} landmark object in the field of view. This event is denoted as $x\leftrightarrow y$. The unmatched event is denoted as $x\nleftrightarrow y$.

We call our image patch matching approach \textit{Robust Matching (RobustMat)}. We first present the overall pipeline and then the details for each module. We also discuss the objective function used in RobustMat. 

\subsection{Overall Pipeline of RobustMat}\label{sect:pipeline}
The main modules in the RobustMat pipeline are as follows:
\begin{enumerate}
    \item Within a frame, the landmark patches are represented as vertices of a graph. The center pixel location of a landmark patch in the image plane is regarded as the position of the vertex. For an image patch or vertex $x$, its $K$ nearest neighbors (\gls{wrt} the $\calL_2$ distance of the positions) are acquired. A complete graph between the $K$ nearest neighbors is constructed as the neighborhood graph of $x$ and denoted by $\calG^x=\{\calV^x, \calE^x\}$, where $\calV^x$ contains the $K$ nearest neighbors and $\calE^x$ is the edge set.
    Correspondingly, for an image patch $y$ in another frame, the neighborhood graph $\mathcal{G}^{y}$ is constructed.
    \item To determine whether the two landmark patches $x$ and $y$ are matched, neural ODE and graph neural PDE layers are applied to obtain feature representations based on $\mathcal{G}^{x}$ and $\mathcal{G}^{y}$. The neural ODE is used for feature map self-updating within each vertex, while the graph neural PDE is used for aggregating neighboring patch information in $\mathcal{G}^{x}$ and $\mathcal{G}^{y}$. 
    \item  Using the learned feature representations for $x$ and $y$, we finally use similarity layers to determine whether the two landmark patches $x$ and $y$ are matched or not.
\end{enumerate}
RobustMat is illustrated in \cref{fig_Matching_model}.
More details are given as follows.

\subsection{Model Details}
In this subsection, we present details on each module in RobustMat, the loss function used for training, and the matching score used in testing.

\subsubsection{Vertex-diffusion Embedding Based on Neural ODE} 
Following \cite{chen2018neural} in processing images, we first downsample the input image patches to a feature map with smaller width and height using a downsampling CNN module. For each patch $x$, the feature map from the downsampling CNN is denoted by $h_\mathrm{DS}(x)\in \mathbb{R}^{c\times H\times W}$, where $c$ is the number of channels, $H$ is the feature map height, and $W$ is the feature map width. We then apply a CNN-based neural ODE to achieve stable feature map self-updating within each vertex. Formally, the vertex feature update is given by 
\begin{align}
\ddfrac{z(t)}{t}=h_\mathrm{CNN}(z(t),t), \label{eq:NODE}
\end{align}
where $h_\mathrm{CNN}(\cdot)$ denotes a CNN module applied to the feature map in $\mathbb{R}^{c\times H\times W}$. The above ODE has initial state $z(0)=h_\mathrm{DS}(x)$. The solution of \cref{eq:NODE} at time $T_f$, denoted by $z(T_f)\in\mathbb{R}^{c\times H\times W}$, can be obtained by using differential equation solvers \cite{chen2018neural} by integrating $h_\mathrm{CNN}(\cdot)$ from $t=0$ to $t=T_f$. 
Finally, an average pooling operation is applied to the output and subsequently passed into an FC layer to obtain the final vertex embedding.

To make it clearer, let $F_\mathrm{CNN}(z(0))$ denote the solution of \cref{eq:NODE} at $t=T_f$ with the initial condition $z(0)$.
We use the \emph{single notation}
\begin{align}\label{vertexembedding}
    f(x) = h_\mathrm{fc}(h_\mathrm{fp}(F_\mathrm{CNN}(h_\mathrm{DS}(x)))) \in \Real^{n_f}
\end{align}
to denote the full vertex-diffusion embedding module including the downsampling $h_\mathrm{DS}$, the CNN-based neural ODE $F_\mathrm{CNN}$, the average pooling $h_\mathrm{fp}$ and the FC layer $h_\mathrm{fc}$. 

\subsubsection{Graph-diffusion Embedding Based on Neural PDE} 
For a patch $x$ and its neighborhood graph $\calG^x$, we first apply the vertex-diffusion module  $f$ to all the vertices $\{x'\}_{x'\in \calV^x}$ in the graph to obtain the embeddings $\set{f(x') \given x'\in \calV^x}$. Then, to aggregate neighboring patch information in the graph $\calG^x$, we apply a graph neural PDE feature updating module \cite{chamberlain2021grand, song2022robustness} for $\calG^x$ with the initial features $z(0)=\set{f(x') \given x'\in \calV^x}$ as follows:
\begin{align}
\ddfrac{z(t)}{t}=h_\mathrm{GNN}(z(t),t), \label{eq:GRAPH}
\end{align}
where $h_\mathrm{GNN}(\cdot)$ is generally regarded as a layer of a GNN, e.g., graph attention network (GAT) or graph convolutional network (GCN). Essentially, \cref{eq:GRAPH} is an ODE. However, as pointed out by \cite{chamberlain2021grand}, many popular GNN architectures can be viewed as partial differential diffusion equations with different discretization schemes, and therefore \cref{eq:GRAPH} is generally considered as a neural PDE module. 
Similar to \cref{eq:NODE}, we solve \cref{eq:GRAPH} using \cite{chen2018neural}. 

The output of the graph neural PDE at time $T_g$ is given by $F_\mathrm{GNN}(\set*{f(x') \given x'\in \calV^x})$,
where $F_\mathrm{GNN}(\cdot)$ denotes the solution of \cref{eq:GRAPH} at $t=T_g$ by integrating $h_\mathrm{GNN}(\cdot)$ from $t=0$ to $t=T_g$.
Finally, the average pooling $h_\mathrm{gp}$ of the vertex features in $F_\mathrm{GNN}(\set{f(x') \given x'\in \calV^x})$ over all vertices is considered as the final graph embedding: 
\begin{align}
g(\calG^x) = h_\mathrm{gp}((F_\mathrm{GNN}(\set*{f(x') \given x'\in \calV^x}))) \in \Real^{n_f}. \label{eq:g_graph}
\end{align}
Analogously, the graph embedding $g(\calG^y)$ is obtained following the same procedure for the patch $y$ in another frame together with its neighborhood graph $\calG^y$.

\subsubsection{Correspondence Comparison}
In our model, we have two feature comparisons for \emph{vertex-to-vertex} correspondence and \emph{vertex-to-graph} correspondence. 

In the \emph{vertex-to-vertex} comparison module $r$, we use FC layers with a sigmoid function in the last layer. Its input is the element-wise squared difference between the vertex-diffusion features $f(\cdot)$ for the two patches $x$ and $y$ from different frames. The output of this module is
\begin{align}
   r\left((f(x)-f(y))^2\right)\in [0,1],
\end{align}
where $(\cdot)^2$ is element-wise squaring. 

In the \emph{vertex-to-graph} comparison module, for the patch pair $(x,y)$, the vertex-diffusion feature $f(x)$ is compared with the graph-diffusion embedding $g(\mathcal{G}^{y})$ using a discriminator $d$. The discriminator $d$ is based on a \emph{bilinear} layer, whose form is given by
\begin{align}
d(a,b)= \sigma(a^{\top}M b), \label{eq:ddd}
\end{align}
where $M\in \Real^{n_f\times n_f}$ and $\sigma(\cdot)$ denote the trainable matrix and the sigmoid function, respectively. We thereby obtain a score $d(f(x),g(\mathcal{G}^{y}))$.
The same procedure is applied for the comparison between $f(y)$ and $g(\mathcal{G}^{x})$ to yield the score $d(f(y),g(\mathcal{G}^{x}))$. 
The final similarity score for the patch pair $(x,y)$ is the output average of $r$ and $d$, which we describe in the following subsection.

\subsubsection{Loss Function and Matching Score}\label{sect:loss_score}
The \emph{vertex-to-vertex loss} function for the similarity model is the empirical cross-entropy loss
\begin{align}
    \ell_{\mathrm{vv}} 
    = \ofrac{|\calM|} {\sum_{(x,y)\in\calM}\ell_{\mathrm{ce}}\parens*{r\left((f(x)-f(y))^2\right),\indicator{x\leftrightarrow y}}},
\end{align}
where $\calM$ is the set of patch training pairs, $\ell_{\mathrm{ce}}(\cdot,\cdot)$ is the  cross entropy function, and $\indicator{\cdot}$ is the indicator function.
We also include a \emph{vertex-to-graph loss} defined as 
\begin{align}
    \ell_{\mathrm{v\calG}}
    & = - \frac{1}{2} \braces*{\hat{L}_d(f(x),g(\mathcal{G}^{y})) 
           + \hat{L}_d(f(y),g(\mathcal{G}^{x})) },  \label{eq.Lce_ID} 
\end{align}
where 
\begin{align}
& \hat{L}_d(f(x),g(\mathcal{G}^{y}))  \nn
& = \frac{1}{|\calM|} \sum_{(x,y)\in \calM} \Big\{ \indicator{x \leftrightarrow y} \log d(f(x),g(\mathcal{G}^{y})) \nn
& \qquad + \indicator{x \nleftrightarrow {y}} \log \parens*{1-d(f(x),g(\mathcal{G}^{y}))} \Big\}.
\end{align}
By minimizing $\ell_{\mathrm{v\calG}}$, we \emph{maximize} the information distance (cf.\ the supplementary material for further details). 

The \emph{total loss} function is given by
\begin{align}\label{eq.L_loss}
    \ell_{\mathrm{total}}
    & = (1-\lambda)\ell_{\mathrm{vv}} + \lambda \ell_{\mathrm{v\calG}},
\end{align}
where $\lambda>0$ is a predefined weight parameter ($\lambda= 0.5$ is chosen in this paper).

In the testing phase, we set the final matching score as 
\begin{align}
    S_{\mathrm{match}}(x,y) 
    & = r(f(x),f(y)) \nn
    & \quad +\ofrac{2}\parens*{d(f(x),g(\calG^y))+d(f(y),g(\calG^x))}.\label{eq:ssss}
\end{align}
 A larger score than a predefined threshold can be regarded as a matched prediction.

\section{Theoretical Analysis}\label{sect:theory}

In this section, we theoretically analyze the effects of additive perturbations on image patches on our model, leading to motivations for our model design. 
Our results require Lipschitz continuity assumptions and other mild technical conditions. 
Specifically, if the input is perturbed, the embeddings from the neural vertex-diffusion and graph-diffusion modules have the same order of magnitude of perturbation as the input (cf.\ \cref{thm.Robustness_ode} and \cref{thm.Robustness_gde}). 

Let ${\tilde x}$ be the perturbed version of $x$ and $h_\mathrm{DS}({\tilde x}) = h_\mathrm{DS}(x) + \varepsilon_{DS}$, where $\varepsilon_{DS}$ denotes the  perturbation on the feature map. 
Denote $\varepsilon := \| \varepsilon_{DS} \|_2$, where $\| \cdot \|_2$ is the Frobenius norm. 
For a tensor $w\in\Real^{c\times H \times W}$, we index its elements using a predefined ordering. For $i=1,\dots,c\times H\times W$, let $w\tc{i}$ denote the $i$-th element in the tensor $w$.

\begin{Assumption}\label{ass:ass_ivp}
Suppose $\| h_\mathrm{DS}({\tilde x}) - h_\mathrm{DS}(x) \|_2 = \varepsilon$. 
Let $h_{\mathrm{CNN}}(\cdot, \cdot)$ in \cref{eq:NODE} be Lipschitz continuous. 
Assume that $z(t)$ in \cref{eq:NODE} is restricted to an open bounded set $\Omega \subset \Real^{c\times H \times W}$ for all $t\in[0,T_f]$ and
\begin{align}
    \frac{\partial h_{\mathrm{CNN}}^{(i)}(u, t)}{\partial u^{(i)}} \le -C^*_z, \label{eq.partial_f_CNN}
\end{align}
for all $u\in\Omega$, where $C_{z}^*$ is a positive parameter.
\end{Assumption}

\begin{Assumption}\label{ass:ass_fc}
We have $\| h_{\mathrm{fc}}(\omega_1)- h_{\mathrm{fc}}(\omega_2)  \|_2 \le C_{\mathrm{fc}} \| \omega_1-\omega_2 \|_2$ and 
$\| h_{\mathrm{fp}}(\omega_a)- h_{\mathrm{fp}}(\omega_b)  \|_2 \le C_{\mathrm{fp}} \| \omega_a-\omega_b \|_2$,
where $C_{\mathrm{fc}}$ and $C_{\mathrm{fp}}$ are positive constants. 
\end{Assumption}
\Cref{{ass:ass_fc}} is a common assumption \cite{yan2019robustness,dupont2019augmented}, which indicates that the FC layer and the pooling layer are Lipschitz continuous. 

\begin{Theorem}[Perturbation on the vertex-diffusion embedding]
\label{thm.Robustness_ode}
Suppose \cref{ass:ass_ivp,ass:ass_fc} hold. 
For a sufficiently small $\varepsilon>0$, the effect of the perturbation on the vertex-diffusion embedding $f$ in \cref{vertexembedding} is given by\footnote{For non-negative $f(u)$ and $g(u)$, we write $f(u)=o(g(u))$ if $\lim_{u \to 0} \frac{f(u)}{g(u)} = 0$, and $f(u)=\calO(g(u))$ if $\lim_{u \to 0} \frac{f(u)}{g(u)} < \infty$.} 
\begin{align}
    \norm*{f(\tilde x) - f(x)}_2 \le  \calO(\varepsilon\exp{(-C_{z}^* T_f)}). \label{eq.f_inequality}
\end{align}
\end{Theorem}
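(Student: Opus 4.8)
The plan is to decompose the perturbation on $f$ through the three Lipschitz/contractive stages in \cref{vertexembedding}, reducing everything to a bound on the ODE flow $F_\mathrm{CNN}$. First I would observe that by \cref{ass:ass_fc},
\begin{align}
\norm*{f(\tilde x) - f(x)}_2 \le C_\mathrm{fc} C_\mathrm{fp} \norm*{F_\mathrm{CNN}(h_\mathrm{DS}(\tilde x)) - F_\mathrm{CNN}(h_\mathrm{DS}(x))}_2,
\end{align}
so it suffices to control how the terminal state $z(T_f)$ of \cref{eq:NODE} depends on its initial condition, given that the two initial conditions differ by $\varepsilon$ in Frobenius norm. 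Write $z(t)$ and $\tilde z(t)$ for the two solutions and $\delta(t) := \tilde z(t) - z(t)$, so $\norm{\delta(0)}_2 = \varepsilon$.

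The core step is a Grönwall-type differential inequality for $\tfrac{\ud}{\ud t}\norm{\delta(t)}_2^2$. Differentiating, $\tfrac{\ud}{\ud t}\norm{\delta(t)}_2^2 = 2\angles*{\delta(t), h_\mathrm{CNN}(\tilde z(t),t) - h_\mathrm{CNN}(z(t),t)}$. I would expand the increment of $h_\mathrm{CNN}$ componentwise along the segment joining $z(t)$ to $\tilde z(t)$ (which stays in $\Omega$ by the boundedness hypothesis, or by a convexity/connectedness argument on $\Omega$), using the fundamental theorem of calculus. The diagonal contributions $\partial h_\mathrm{CNN}^{(i)}/\partial u^{(i)} \le -C_z^*$ from \cref{eq.partial_f_CNN} give a term $\le -2C_z^* \norm{\delta(t)}_2^2$; the off-diagonal Jacobian entries are bounded by the Lipschitz constant of $h_\mathrm{CNN}$, contributing a term of order $\norm{\delta(t)}_2^2$ but with a constant that does \emph{not} dominate once we absorb it — this is where the ``sufficiently small $\varepsilon$'' and the structure of the one-sided estimate matter, and it is the main obstacle. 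The cleanest route is to assume (as the statement implicitly does) that the negative diagonal dominance in \cref{eq.partial_f_CNN} is genuinely a bound on the logarithmic norm / one-sided Lipschitz constant of $h_\mathrm{CNN}$, i.e.\ $\angles*{\delta, h_\mathrm{CNN}(\tilde z,t)-h_\mathrm{CNN}(z,t)} \le -C_z^* \norm{\delta}_2^2$; then $\tfrac{\ud}{\ud t}\norm{\delta(t)}_2^2 \le -2C_z^*\norm{\delta(t)}_2^2$.

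With that inequality in hand, Grönwall's lemma yields $\norm{\delta(T_f)}_2^2 \le \varepsilon^2 \exp(-2C_z^* T_f)$, hence $\norm{\delta(T_f)}_2 \le \varepsilon \exp(-C_z^* T_f)$. Feeding this back through the Lipschitz constants of the pooling and FC layers gives
\begin{align}
\norm*{f(\tilde x) - f(x)}_2 \le C_\mathrm{fc} C_\mathrm{fp}\, \varepsilon \exp(-C_z^* T_f) = \calO\!\parens*{\varepsilon \exp(-C_z^* T_f)},
\end{align}
which is \cref{eq.f_inequality}. The only delicate points I anticipate are (i) justifying that the segment between $z(t)$ and $\tilde z(t)$ remains inside $\Omega$ so that \cref{eq.partial_f_CNN} applies along it — handled either by taking $\varepsilon$ small enough that $\tilde z$ stays in a slightly shrunk sub-domain, or by assuming $\Omega$ convex — and (ii) reconciling the stated \emph{componentwise} diagonal bound with the \emph{vector} inner-product estimate one actually needs; if the off-diagonal terms are not controlled by $C_z^*$ one instead gets $\exp((L - C_z^*)T_f)$ for the Lipschitz constant $L$ of $h_\mathrm{CNN}$, and the $\calO(\cdot)$ in the theorem absorbs the resulting constant provided $T_f$ is treated as fixed. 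I would present the clean version and remark that the general case only changes the constant inside the exponent.
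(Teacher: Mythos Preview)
Your proposal is correct and follows essentially the same route as the paper: the Lipschitz bounds for $h_\mathrm{fc}$ and $h_\mathrm{fp}$ reduce everything to the ODE flow, where a Gr\"onwall-type differential inequality gives $\norm{\tilde z(T_f)-z(T_f)}_2 \le \varepsilon\exp(-C_z^* T_f)$. The paper reaches that inequality via an Euler-step limit on $\norm{\tilde z(t)-z(t)}_2$ rather than your inner-product argument on $\norm{\delta(t)}_2^2$, and it handles your point~(ii) by simply asserting the componentwise Hadamard form $h_\mathrm{CNN}(\tilde z,t)-h_\mathrm{CNN}(z,t)=-C_\mathrm{CNN}(t)\odot(\tilde z-z)$ (after first using ordinary Gr\"onwall to keep $\tilde z$ close to $z$) without explicitly addressing off-diagonal Jacobian contributions, so your caveat there is well-placed.
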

\begin{proof}
See Appendix~\ref{app:thm.Robustness_ode}.
\end{proof}

\Cref{thm.Robustness_ode} shows that the vertex-diffusion embedding does not amplify the order of magnitude of the perturbation in the input feature map. In other words, the vertex-diffusion module is stable \gls{wrt} its downsampled input. 

We next discuss the perturbation on the graph-diffusion embedding. 
Let $Z(t) = \parens{z(v,t)}_{v \in \calV^{x}} \in \Real^{|\calV^{x}|\times n_f}$ denote the vertex embeddings stacked together with $z(v,t)$ being the diffusion embedding for vertex $v$ at time $t\in [0,T_g]$. 
Taking the vertex-diffusion features $(f(v))_{v \in \calV^x}$, the graph-diffusion module applies $Z(0)=(f(v))_{v \in \calV^x}$ to \cref{eq:GRAPH} to obtain an output embedding $Z(T_g)$.

\begin{Assumption}\label{ass:ass_Amatrix}
Let $h_{\mathrm{GNN}}(\cdot, \cdot)$ in \cref{eq:GRAPH} and $h_\mathrm{gp}(\cdot)$ in \cref{eq:g_graph} be Lipschitz continuous. Assume the trajectory $Z(t)$ based on \cref{eq:GRAPH} is restricted to an open bounded set $\Theta \subset \Real^{|\calV^{x}|\times n_f}$ for all $t\in[0,T_g]$ and 
\begin{align}\label{Alip_condition}
    \frac{\partial h_{\mathrm{GNN}}^{(i)}(Z, t)}{\partial Z^{(i)}} \le -C^*_A, 
\end{align}
for all $Z\in\Theta$, where $C_{A}^*$ is a positive parameter.
\end{Assumption}

\begin{Theorem}[Perturbation on graph-diffusion embedding]
\label{thm.Robustness_gde}
For a patch $x$ and its perturbation $\tilde{x}$ corresponding to $v \in \calV^x$ and $\tilde{v} \in\calV^{\tilde{x}}$, suppose \cref{ass:ass_ivp,ass:ass_fc,ass:ass_Amatrix} hold.
Then, for a sufficiently small $\varepsilon$, the graph-diffusion embedding $g(\cdot)$ satisfies
\begin{align}
    \norm*{g(\calG^{\tilde x}) - g(\calG^x)}_2 \le  \calO(\varepsilon\exp{( -C_A^* T_g)}). \label{eq.g_inequality}
\end{align}
\end{Theorem}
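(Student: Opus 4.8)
The plan is to reduce the statement to \Cref{thm.Robustness_ode} together with a Gr\"onwall-type contraction estimate for the graph flow \cref{eq:GRAPH}, in direct analogy with the argument used for the vertex-diffusion module. I would proceed in three stages: control the discrepancy of the two initial conditions of the graph PDE, propagate it through the flow up to time $T_g$, and push it through the average-pooling map $h_\mathrm{gp}$. For the first stage, note that the neighborhood graph is determined by the pixel positions of the patches, which an additive perturbation of the image content leaves unchanged; hence $\calG^{x}$ and $\calG^{\tilde x}$ share the same vertex and edge sets, and the two graph-PDE trajectories may both be indexed by $\calV^x$, with initial rows $f(v)$ and $f(\tilde v)$ for $v\in\calV^x$. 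Applying \Cref{thm.Robustness_ode} vertex-by-vertex under \cref{ass:ass_ivp,ass:ass_fc} gives $\norm{f(\tilde v)-f(v)}_2\le\calO(\varepsilon\exp(-C_z^*T_f))$ for each $v$, and since $|\calV^x|=K$, collecting these into a Frobenius norm yields $\norm{Z(0)-\tilde Z(0)}_2\le\calO(\varepsilon\exp(-C_z^*T_f))=\calO(\varepsilon)$.

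For the second stage, let $\delta(t)$ be the difference of the two trajectories of \cref{eq:GRAPH}, both of which remain in $\Theta$ by \cref{ass:ass_Amatrix}, and consider $V(t)=\norm{\delta(t)}_2^2$. Writing the increment of $h_\mathrm{GNN}$ along the segment joining the two states as a Jacobian line integral gives $\dot V(t)=2\int_0^1\delta(t)^{\top}J(\cdot,t)\,\delta(t)\,\ud s$; splitting $J$ into its diagonal part, which is bounded above by $-C_A^*$ entrywise by \cref{Alip_condition}, and its off-diagonal part, which is controlled by the global Lipschitz constant of $h_\mathrm{GNN}$ and absorbed into the constant, a Gr\"onwall inequality yields $\norm{\delta(T_g)}_2\le\norm{\delta(0)}_2\exp(-C_A^*T_g)$ up to a multiplicative constant, hence $\norm{Z(T_g)-\tilde Z(T_g)}_2\le\calO(\varepsilon\exp(-C_A^*T_g))$. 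The third stage is then immediate: since $h_\mathrm{gp}$ is Lipschitz by \cref{ass:ass_Amatrix}, $\norm{g(\calG^{\tilde x})-g(\calG^x)}_2=\norm{h_\mathrm{gp}(\tilde Z(T_g))-h_\mathrm{gp}(Z(T_g))}_2\le\calO(\varepsilon\exp(-C_A^*T_g))$, which is \cref{eq.g_inequality}.

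The main obstacle is the second stage: hypothesis \cref{Alip_condition} constrains only the diagonal entries $\partial h_\mathrm{GNN}^{(i)}/\partial Z^{(i)}$ of the Jacobian, so on its own it does not deliver a contraction in Euclidean norm; the off-diagonal coupling induced by message passing across graph edges must be dominated, e.g.\ by invoking the Lipschitz bound on $h_\mathrm{GNN}$ and absorbing the resulting additive term into $\calO(\cdot)$, or through a coordinate-wise comparison argument in the spirit of \cite{yan2019robustness}. This is the same delicate point already handled for the CNN-based neural ODE in the proof of \Cref{thm.Robustness_ode}, so I would reuse that estimate essentially verbatim with $(h_\mathrm{CNN},C_z^*,T_f)$ replaced by $(h_\mathrm{GNN},C_A^*,T_g)$; the only genuinely new ingredients are the reduction in the first stage and the final Lipschitz step.
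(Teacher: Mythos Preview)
Your proposal is correct and follows essentially the same three-stage route as the paper: bound $\norm{\tilde Z(0)-Z(0)}_2$ by invoking \Cref{thm.Robustness_ode} vertex-wise, reuse the contraction argument from the proof of \Cref{thm.Robustness_ode} verbatim with $(h_\mathrm{GNN},C_A^*,T_g)$ in place of $(h_\mathrm{CNN},C_z^*,T_f)$ to obtain $\norm{\tilde Z(T_g)-Z(T_g)}_2\le\exp(-C_A^*T_g)\norm{\tilde Z(0)-Z(0)}_2$, and then apply the Lipschitz bound for $h_\mathrm{gp}$. The paper does not elaborate on the off-diagonal coupling issue you flag; it simply writes ``Similar to \Cref{thm.Robustness_ode}'' and states the contraction inequality, so your more cautious discussion of that point is additional scrutiny rather than a departure.
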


\begin{proof}
See Appendix~\ref{app:thm.Robustness_gde}.
\end{proof}

From \cref{thm.Robustness_gde}, we see that the pipeline consisting of the vertex-diffusion module followed by the graph-diffusion module is stable \gls{wrt} its downsampled input.

\section{Experiments}\label{sect:exper}
\label{headings}
\subsection{Dataset preparation}\label{sect:Datasets}

\subsubsection{Synthetic Noise for Real Datasets}
We use the KITTI dataset \cite{geiger2012we} and the Oxford Radar RobotCar (abbreviated as ``Oxford'') dataset \cite{barnes2020oxford} to derive two landmark patch matching datasets, which consist of the image patches of roadside static landmark objects from sampled images (i.e., frames).
To imitate the challenging environmental perturbations, additive white Gaussian noise is added to each frame. Image corruption like rain and spatter, as well as brightness and saturation changes, are performed with the Python library ``imgaug''. 
Some examples are shown in \cref{fig_image_patches_pair}. 
To build the patch datasets, the object detection neural network Faster R-CNN \cite{ren2016faster} is exploited for landmark patch extraction, where the bounding box labels of the street scene landmark objects are manually included for training. 
More details of the patch-matching datasets are given as follows.\footnote{All codes and datasets are available at \url{https://github.com/AI-IT-AVs/RobustMat}.}

\textbf{KITTI Noisy Landmark Patches} or ``noisy KITTI''. 
The KITTI dataset is a public dataset with multi-sensor data for autonomous driving. 
The landmark objects consist of traffic lights, traffic signs, and poles. We add various types of noise, including additive white Gaussian noise, corruption, brightness and saturation changes, with the peak signal-to-noise ratio (PSNR) around $16$ dB. 
Furthermore, the minimum distance between two locations where image frames are captured is set to be $2$ m to avoid trivial matching due to insignificant landmark appearance changes. A maximum distance of $25$ m ensures the existence of common landmarks. 
In our experiments, $1500$ frames are selected for the landmark patch matching. The training and testing data are in the ratio $2:1$. 

\textbf{Oxford Noisy Landmark Patches} or ``noisy Oxford''. 
The Oxford dataset contains street scene images. 
The landmarks including traffic lights, traffic signs, poles, and windows based on building facades, are manually labeled for training.
We select $3000$ frames for the landmark patch matching in our experiment. 
The preparation of the dataset is similar to that for noisy KITTI.
\begin{figure}[!htb]
\centering
\includegraphics[width=0.95\linewidth]{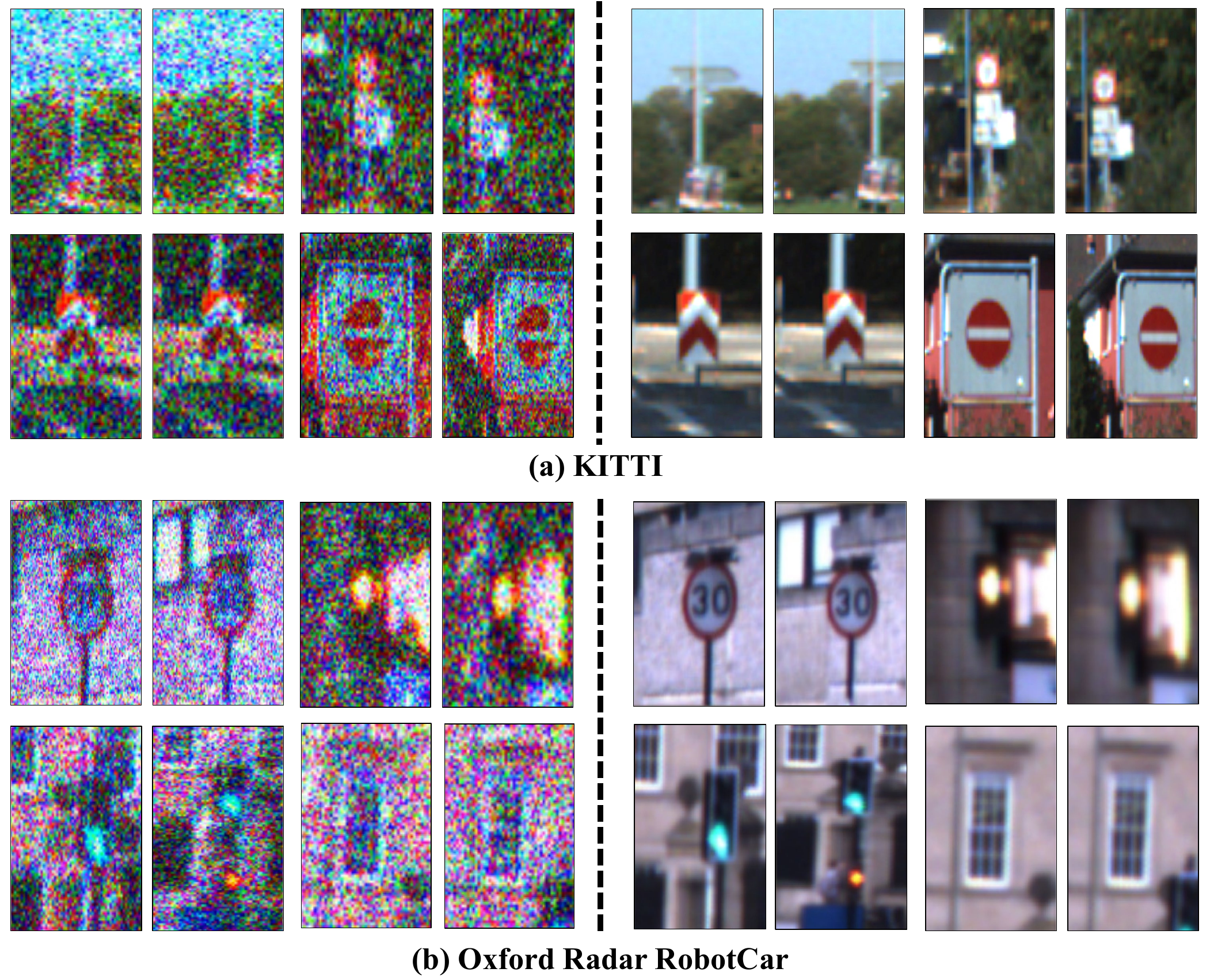}
\caption{Examples of noisy landmark patch pairs and their corresponding clean pairs from the KITTI and Oxford datasets.}
\label{fig_image_patches_pair}
\end{figure}

\subsubsection{Authentic Noisy Datasets}
We utilize the Boreas datasets \cite{burnett2023boreas} to acquire authentic landmark patches with noise. Similar to the procedure for generating synthetic noisy data, we extract landmark patches from different driving environments within the Boreas datasets. 

\textbf{Boreas Sunny (Snowy/Rainy/Cloudy) Landmark Patches} or ``sunny (snowy/rainy/cloudy) Boreas''. The Boreas datasets contain diverse driving scenarios, including snowy, rainy, cloudy, and sunny conditions. Consequently, the snowy, rainy, and cloudy scenarios can be considered as instances with higher levels of noise compared to the sunny scenario. The selected landmarks consist of manually labeled objects such as traffic lights, traffic signs, poles, and windows.
In our experiment, we use $1000$ frames from the sunny scenario for training and $500$ frames from each of the snowy, rainy, and cloudy scenarios for testing. The landmark acquisition from the Boreas datasets is similar to that of the KITTI and Oxford datasets. Several examples are shown in \cref{fig_boreas_image_patches_pair}.

\begin{figure}[!htb]
\centering
\includegraphics[width=0.95\linewidth]{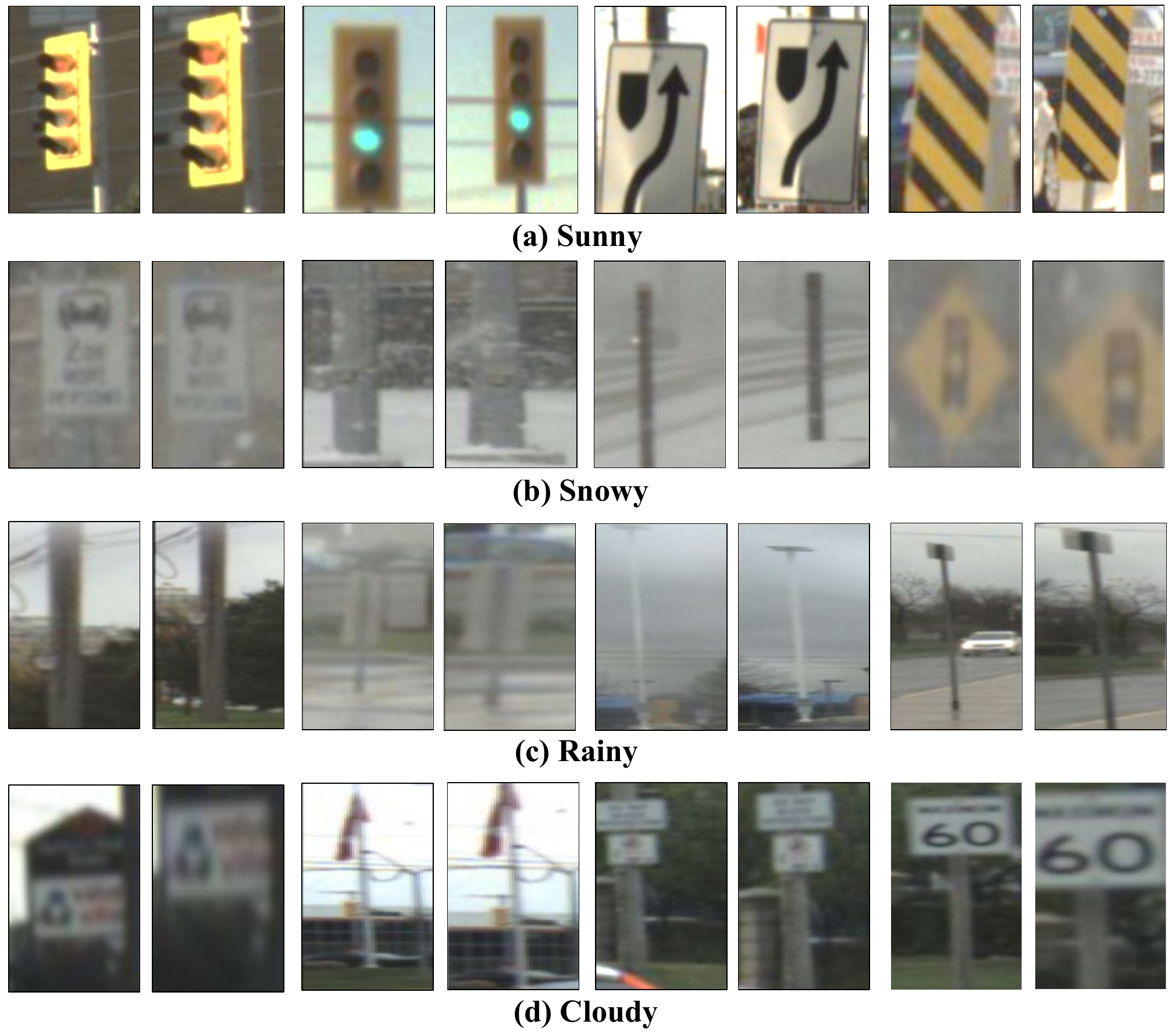}
\caption{
Examples of landmark patch pairs from the Boreas datasets.
}
\label{fig_boreas_image_patches_pair}
\end{figure}
\begin{table*}[!htb]
\scriptsize
\caption{
Matching performance on noisy KITTI. The best and the second-best results under different metrics are highlighted in \textbf{bold} and \underline{underlined}, respectively. Performance deterioration after denoising is highlighted in \blue{blue}.
} 
\label{Kitti-table}
\centering
\resizebox{0.98\textwidth}{!}{\setlength{\tabcolsep}{2pt} 
\newcommand{\tabincell}[2]{\begin{tabular}{@{}#1@{}}#2\end{tabular}}
\begin{tabular}{c|c|c|c|c|c|c|c|c|c |c|c}
\hline\hline
{\bf Methods}  & SiameseNet & MatchNet & HardNet & SOSNet & Exp-TLoss & HyNet & SOLAR & DELG & CVNet & \tabincell{c}{RobustMat \\ (GCN-PDE)\\} 
& \tabincell{c}{RobustMat \\ (GAT-PDE)\\} \\ 
\hline  
{\bf Precision}    & 0.8242 \tiny{$\pm$ 0.0066} & 0.9288 \tiny{$\pm$ 0.0032} & 0.8937 \tiny{$\pm$ 0.0024} & 0.9250 \tiny{$\pm$ 0.0012} & 0.9266 \tiny{$\pm$ 0.0015} & 0.9266 \tiny{$\pm$ 0.0013} & 0.9256 \tiny{$\pm$ 0.0011} 
& 0.9251 \tiny{$\pm$ 0.0054} & 0.9287 \tiny{$\pm$ 0.0026} & \textbf{0.9327} \tiny{$\pm$ 0.0026} & \underline{0.9308} \tiny{$\pm$ 0.0023} \\ 
{\bf Recall} & 0.7728 \tiny{$\pm$ 0.0298} & 0.8490 \tiny{$\pm$ 0.0222} & 0.8528 \tiny{$\pm$ 0.0212} & 0.8549 \tiny{$\pm$ 0.0151} & 0.8752 \tiny{$\pm$ 0.0187} & 0.8757 \tiny{$\pm$ 0.0170} & 0.8629 \tiny{$\pm$ 0.0141} 
& 0.7840 \tiny{$\pm$ 0.0138} & 0.7365 \tiny{$\pm$ 0.0150} & \underline{0.8811} \tiny{$\pm$ 0.0133} & \textbf{0.8970} \tiny{$\pm$ 0.0118} \\ 
{\bf $F_1$-Score}  & 0.7974 \tiny{$\pm$ 0.0169} & 0.8870 \tiny{$\pm$ 0.0130} & 0.8727 \tiny{$\pm$ 0.0123} & 0.8885 \tiny{$\pm$ 0.0088} & 0.9001 \tiny{$\pm$ 0.0106} & 0.9004 \tiny{$\pm$ 0.0096} & 0.8931 \tiny{$\pm$ 0.0080} 
& 0.8487 \tiny{$\pm$ 0.0103} & 0.8215 \tiny{$\pm$ 0.0098} & \underline{0.9061} \tiny{$\pm$ 0.9061} & \textbf{0.9136} \tiny{$\pm$ 0.0072} \\ 
{\bf AUC}          & 0.8040 \tiny{$\pm$ 0.0134} & 0.8269 \tiny{$\pm$ 0.0116} & 0.7744 \tiny{$\pm$ 0.0106} & 0.8234 \tiny{$\pm$ 0.0075} & 0.8336 \tiny{$\pm$ 0.0094} & 0.8339 \tiny{$\pm$ 0.0085} & 0.8275 \tiny{$\pm$ 0.0070} 
& 0.7968 \tiny{$\pm$ 0.0125} & 0.7835 \tiny{$\pm$ 0.0078} & \underline{0.8453} \tiny{$\pm$ 0.0088} & \textbf{0.8485} \tiny{$\pm$ 0.0079} \\ 
\hline\hline
{\bf Methods}  & \tabincell{c}{SiameseNet \\ +Denoising \\} & \tabincell{c}{MatchNet \\
+Denoising\\} & \tabincell{c}{HardNet \\ +Denoising\\} & \tabincell{c}{SOSNet\\+Denoising\\} & \tabincell{c}{Exp-TLoss\\+Denoising\\} & \tabincell{c}{HyNet \\+Denoising\\} & \tabincell{c}{SOLAR\\+Denoising\\} 
& \tabincell{c}{DELG \\ +Denoising\\} & \tabincell{c}{CVNet \\ +Denoising\\} & \tabincell{c}{RobustMat \\ (GCN-PDE)\\ +Denoising\\} 
& \tabincell{c}{RobustMat \\ (GAT-PDE)\\+Denoising\\} \\ 
\hline  
{\bf Precision}    & 0.8350 \tiny{$\pm$ 0.0118} & \blue{0.9136} \tiny{$\pm$ 0.0011} & 0.8968 \tiny{$\pm$ 0.0025} & \blue{0.9108} \tiny{$\pm$ 0.0018} & 0.9319 \tiny{$\pm$ 0.0014} & 0.9266 \tiny{$\pm$ 0.0018} & 0.9257 \tiny{$\pm$ 0.0012} 
& \underline{0.9334} \tiny{$\pm$ 0.0029} & 0.9305 \tiny{$\pm$ 0.0039} & \blue{0.9317} \tiny{$\pm$ 0.0033} & \textbf{0.9353} \tiny{$\pm$ 0.0031} \\ 
{\bf Recall} & 0.7760 \tiny{$\pm$ 0.0209} & 0.8624 \tiny{$\pm$ 0.0115} & 0.8811 \tiny{$\pm$ 0.0235} & 0.8720 \tiny{$\pm$ 0.0190} & 0.8763 \tiny{$\pm$ 0.0187} & 0.8763 \tiny{$\pm$ 0.0238} & 0.8640 \tiny{$\pm$ 0.0152} 
& 0.8224 \tiny{$\pm$ 0.0090} & 0.8496 \tiny{$\pm$ 0.0207} & \textbf{0.8949} \tiny{$\pm$ 0.0116} & \textbf{\blue{0.8949}} \tiny{$\pm$ 0.0120} \\ 
{\bf $F_1$-Score}  & 0.8042 \tiny{$\pm$  0.0121} & 0.8872 \tiny{$\pm$ 0.0062} & 0.8887 \tiny{$\pm$ 0.0132} & 0.8909 \tiny{$\pm$ 0.0108} & 0.9031 \tiny{$\pm$ 0.0106} & 0.9006 \tiny{$\pm$ 0.0134} & 0.8937 \tiny{$\pm$ 0.0086} 
& 0.8744 \tiny{$\pm$ 0.0060} & 0.8881 \tiny{$\pm$ 0.0119} & \underline{0.9129} \tiny{$\pm$ 0.0067} & \textbf{0.9147} \tiny{$\pm$ 0.0068} \\ 
{\bf AUC}          & 0.8112 \tiny{$\pm$ 0.0102} & \blue{0.8088} \tiny{$\pm$ 0.0050} & 0.7885 \tiny{$\pm$ 0.0118} & \blue{0.8080} \tiny{$\pm$ 0.0095} & 0.8421 \tiny{$\pm$ 0.0093} & 0.8341 \tiny{$\pm$ 0.0119} & 0.8280 \tiny{$\pm$ 0.0076} 
& 0.8232 \tiny{$\pm$ 0.0069} & 0.8296 \tiny{$\pm$ 0.0113} & \underline{0.8491} \tiny{$\pm$ 0.0078} & \textbf{0.8547} \tiny{$\pm$  0.0077} \\ 
\hline\hline
\end{tabular}}
\end{table*}
\begin{figure*}[!htb]
\centering
\includegraphics[width=\linewidth]{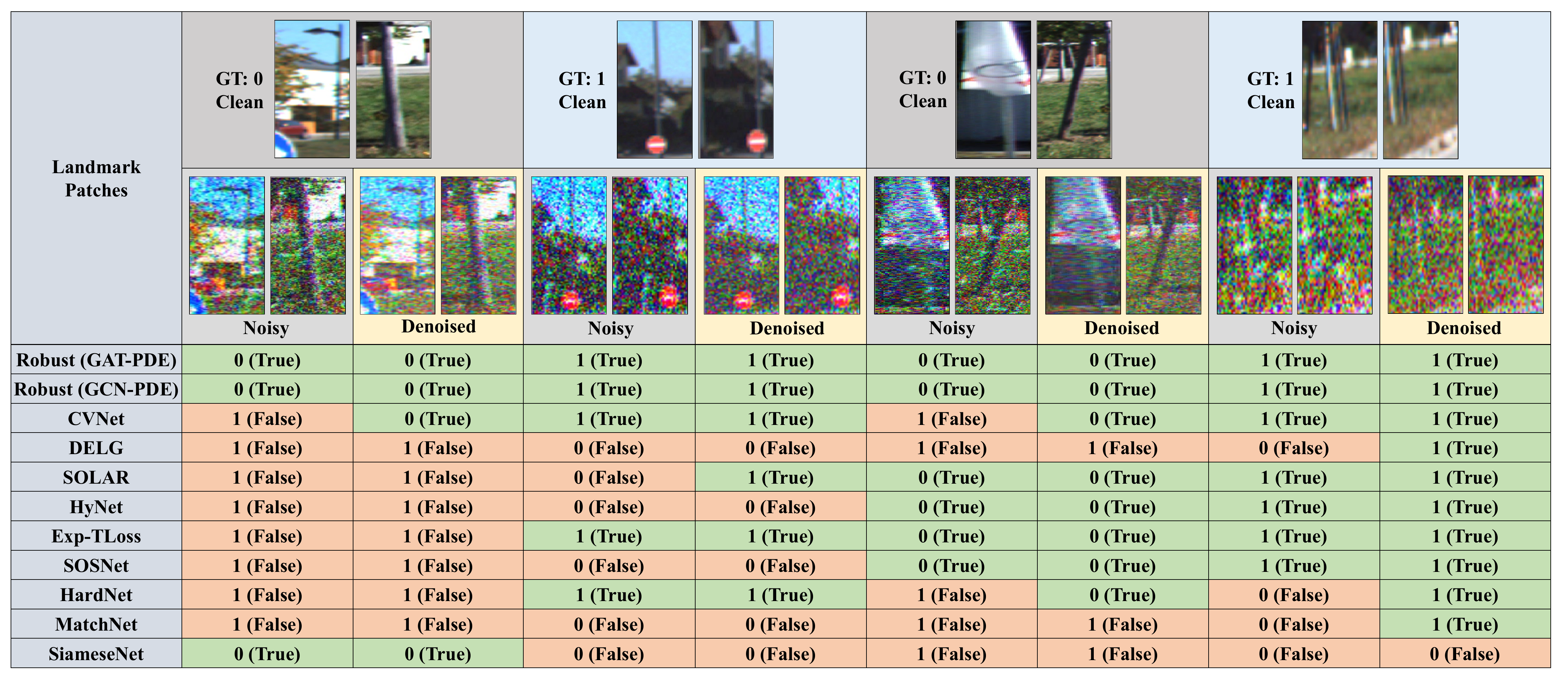}
\caption{
Matching prediction results for example landmark patch pairs from the noisy KITTI dataset. The prediction ``1'' or ``0'' indicates matched or unmatched. GT stands for ``ground truth''.
}
\label{fig_kitti_noise}
\end{figure*}

\subsection{Experimental Details}

\textbf{Model Setting.}
The size of an input landmark patch is normalized as $3 \times 256\times 256$.
We set $c=128$, $n_f=512$, and $H=W=64$ in \cref{eq:NODE} and \cref{eq:GRAPH}.
We set the GNN layer $h_\mathrm{GNN}$ in the graph neural PDE \cref{eq:GRAPH} to be GAT \cite{chamberlain2021grand} or GCN \cite{eliasof2021pde} and use $K=3$  nearest neighbors for each patch. 
The GAT-based graph neural PDE uses $4$ attention heads, each with $512$ hidden features in the GAT block. We call this model RobustMat (GAT-PDE).
For the GCN-based graph neural PDE, $512$ hidden features are set in the GCN block. This variant is called RobustMat (GCN-PDE).
The ``dopri5'' solver \cite{torchdiffeq} is used to solve the ODEs with $T_f=T_g=1.0$.
The Adam optimizer is adopted in our training. Its learning rate is set as $0.0001$. The number of training epochs is set as $150$.
More details of the model settings are provided in the supplementary. 

\textbf{Baseline Methods.}
We compare our model with several baseline methods including MatchNet \cite{han2015matchnet}, SiameseNet \cite{melekhov2016siamese}, HardNet \cite{mishchuk2017working}, SOSNet \cite{tian2019sosnet}, Exp-TLoss \cite{wang2019better}, HyNet \cite{tian2020hynet}, SOLAR \cite{ng2020solar}, DELG \cite{cao2020unifying} and CVNet \cite{lee2022correlation}.
MatchNet and SiameseNet are joint-feature-and-metric learning methods, while HardNet, SOSNet and Exp-TLoss make full use of similarity metrics to distinguish the learned features. 
SOLAR, HyNet, DELG, and CVNet combine feature descriptor learning and metric design for the matching task.  
We also exploit an image denoising method self-guided network (SGN) \cite{gu2019self}, to filter out noise in the input first and then perform image matching in a two-stage approach. Note that the denoising procedure is not perfect and often introduces artifacts like blurring into the output, as seen in \cref{fig_kitti_noise,fig_oxford_noise}.

\subsection{Performance Evaluation}\label{sec.Performance_Evaluation}
\textbf{Performance on noisy KITTI.}
To evaluate the methods, the statistics include mean values and standard deviations from five experiments.
From \cref{Kitti-table}, we observe that in the case of no denoising, RobustMat (GAT-PDE) outperforms the other baselines under all four measures. 
RobustMat (GCN-PDE) also outperforms the other baselines but is generally slightly worst than RobustMat (GAT-PDE).
This implies that the neural ODE/PDE modules have a positive impact on the robustness \cite{kang2021Neurips,song2022robustness}. 

Similar observations are made for the two-stage approach using denoising as the first stage. We observe that denoising does not always improve performance (these cases are highlighted in blue) and in general RobustMat without denoising is competitive with the best baseline with denoising. 
The denoising procedure does not remove all noise and may instead introduce artifacts. In this regard, RobustMat with denoising also performs better than the other baselines with denoising.
This indicates that despite image denoising methods that can be used to preprocess the input, RobustMat has the advantage of being a single-stage approach.

\textbf{Performance on noisy Oxford.}
From \cref{Oxford-table}, we observe that RobustMat (GAT-PDE) outperforms the other baseline methods across all four criteria. Notably, the Oxford and KITTI datasets possess distinct image resolutions and contain varying environmental conditions, thereby rendering the effects of additive noise unique to each patch-matching dataset. This difference may influence the overall robustness of the compared methods.

Comparing \cref{Kitti-table} and \cref{Oxford-table}, we see that all methods perform better on noisy KITTI than on noisy Oxford. This could be because the window patches employed in the noisy Oxford dataset exhibit greater similarities with each other, thereby making the distinction between windows in this dataset more challenging.

\begin{table*}[!htb]
\scriptsize
\caption{
Matching performance on noisy Oxford. The best and the second-best results under different metrics are highlighted in \textbf{bold} and \underline{underlined}, respectively. Performance deterioration after denoising is highlighted in \blue{blue}.
}
\label{Oxford-table}
\centering
\resizebox{0.98\textwidth}{!}{\setlength{\tabcolsep}{2pt} 
\newcommand{\tabincell}[2]{\begin{tabular}{@{}#1@{}}#2\end{tabular}}
\begin{tabular}{c|c|c|c|c|c|c|c|c|c|c|c}
\hline\hline
{\bf Methods}  & SiameseNet & MatchNet & HardNet & SOSNet & Exp-TLoss & HyNet & SOLAR & DELG & CVNet & \tabincell{c}{RobustMat \\ (GCN-PDE)\\} 
& \tabincell{c}{RobustMat \\ (GAT-PDE)\\} \\ 
\hline  
{\bf Precision}    & 0.8778 \tiny{$\pm$ 0.0189} & 0.9267 \tiny{$\pm$ 0.0041} & 0.9248 \tiny{$\pm$  0.0010} & 0.9184 \tiny{$\pm$ 0.0031} & 0.9024 \tiny{$\pm$ 0.0014} & 0.9314 \tiny{$\pm$ 0.0015} & 0.9201 \tiny{$\pm$ 0.0024} 
& 0.9223 \tiny{$\pm$ 0.0043} & 0.9302 \tiny{$\pm$ 0.0047} & \underline{0.9421} \tiny{$\pm$ 0.0040} & \textbf{0.9486} \tiny{$\pm$ 0.0037} \\ 
{\bf Recall} & 0.6000 \tiny{$\pm$ 0.0360} & 0.8363 \tiny{$\pm$ 0.0079} & 0.8360 \tiny{$\pm$ 0.0125} & 0.8416 \tiny{$\pm$ 0.0353} & 0.8384 \tiny{$\pm$ 0.0129} & \underline{0.8696} \tiny{$\pm$ 0.0198} & 0.8603 \tiny{$\pm$ 0.0278} 
& 0.8232 \tiny{$\pm$ 0.0082} & 0.8101 \tiny{$\pm$ 0.0105} & 0.8595 \tiny{$\pm$ 0.0041} & \textbf{0.8811} \tiny{$\pm$ 0.0076} \\ 
{\bf $F_1$-Score}  & 0.7120 \tiny{$\pm$ 0.0259} & 0.8792 \tiny{$\pm$ 0.0047} & 0.8781 \tiny{$\pm$ 0.0073} & 0.8780  \tiny{$\pm$ 0.0207} & 0.8692 \tiny{$\pm$ 0.0076} & \underline{0.8993} \tiny{$\pm$ 0.0113} & 0.8890 \tiny{$\pm$  0.0162} 
& 0.8699 \tiny{$\pm$ 0.0054} & 0.8660 \tiny{$\pm$ 0.0075} & 0.8989 \tiny{$\pm$ 0.0038} & \textbf{0.9136} \tiny{$\pm$ 0.0035} \\ 
{\bf AUC}          & 0.7580 \tiny{$\pm$ 0.0160} & 0.8189 \tiny{$\pm$ 0.0067} & 0.8160 \tiny{$\pm$ 0.0062} & 0.8088 \tiny{$\pm$ 0.0176} & 0.7832 \tiny{$\pm$ 0.0065} & 0.8389 \tiny{$\pm$ 0.0099} & 0.8181 \tiny{$\pm$ 0.0139} 
& 0.8076 \tiny{$\pm$ 0.0076} & 0.8139 \tiny{$\pm$ 0.0099} & \underline{0.8505} \tiny{$\pm$ 0.0071} & \textbf{0.8689} \tiny{$\pm$ 0.0046} \\ 
\hline\hline
{\bf Methods}  & \tabincell{c}{SiameseNet \\ +Denoising \\} & \tabincell{c}{MatchNet \\
+Denoising\\} & \tabincell{c}{HardNet \\ +Denoising\\} & \tabincell{c}{SOSNet\\+Denoising\\} & \tabincell{c}{Exp-TLoss\\+Denoising\\} & \tabincell{c}{HyNet \\+Denoising\\} & \tabincell{c}{SOLAR\\+Denoising\\} & \tabincell{c}{DELG \\ +Denoising\\} & \tabincell{c}{CVNet \\ +Denoising\\}  & \tabincell{c}{RobustMat \\ (GCN-PDE)\\ +Denoising\\} 
& \tabincell{c}{RobustMat \\ (GAT-PDE)\\+Denoising\\} \\ 
\hline  
{\bf Precision}    & \blue{0.8435} \tiny{$\pm$ 0.0112} & 0.9273 \tiny{$\pm$ 0.0050} & 0.9271 \tiny{$\pm$ 0.0010} & \blue{0.9068} \tiny{$\pm$ 0.0020} & \blue{0.8887} \tiny{$\pm$ 0.0020} & 0.9325 \tiny{$\pm$ 0.0010} & \blue{0.8969} \tiny{$\pm$ 0.0020} 
& 0.9281 \tiny{$\pm$ 0.0021} & 0.9328 \tiny{$\pm$ 0.0033} & \underline{\blue{0.9345}} \tiny{$\pm$ 0.0030} & \textbf{\blue{0.9350}} \tiny{$\pm$ 0.0027} \\ 
{\bf Recall} & 0.6536 \tiny{$\pm$ 0.0227} & 0.8499 \tiny{$\pm$ 0.0153} & 0.8645 \tiny{$\pm$ 0.0132} & 0.8563 \tiny{$\pm$ 0.0204} & 0.8624 \tiny{$\pm$ 0.0174} & 0.8848 \tiny{$\pm$ 0.0135} & 0.8819 \tiny{$\pm$ 0.0188} 
& 0.8395 \tiny{$\pm$ 0.0152} & 0.8627 \tiny{$\pm$ 0.0145} & \underline{0.8936} \tiny{$\pm$ 0.0054} & \textbf{0.8939} \tiny{$\pm$ 0.0065} \\ 
{\bf $F_1$-Score}  & 0.7362 \tiny{$\pm$ 0.0133} & 0.8868 \tiny{$\pm$ 0.0081} & 0.8947 \tiny{$\pm$ 0.0076} & 0.8807 \tiny{$\pm$ 0.0116} & 0.8753 \tiny{$\pm$ 0.0099} & 0.9080 \tiny{$\pm$ 0.0076} & 0.8892 \tiny{$\pm$ 0.0104} 
& 0.8815 \tiny{$\pm$ 0.0085} & 0.8963 \tiny{$\pm$ 0.0076} & \underline{0.9136} \tiny{$\pm$ 0.0035} & \textbf{0.9140} \tiny{$\pm$ 0.0041} \\ 
{\bf AUC}          & 0.7660 \tiny{$\pm$ 0.0083} & 0.8249 \tiny{$\pm$ 0.0083} & 0.8303 \tiny{$\pm$ 0.0066} & \blue{0.7961} \tiny{$\pm$ 0.0102} & 0.7692 \tiny{$\pm$ 0.0086} & 0.8464 \tiny{$\pm$ 0.0067} &  \blue{0.7889} \tiny{$\pm$ 0.0094} 
& 0.8221 \tiny{$\pm$ 0.0068} & 0.8381 \tiny{$\pm$ 0.0063} & \underline{0.8528} \tiny{$\pm$ 0.0056} & \textbf{\blue{0.8537}} \tiny{$\pm$ 0.0057} \\ 
\hline\hline
\end{tabular}}
\end{table*}
\begin{figure*}[!htb]
\centering
\includegraphics[width=\linewidth]{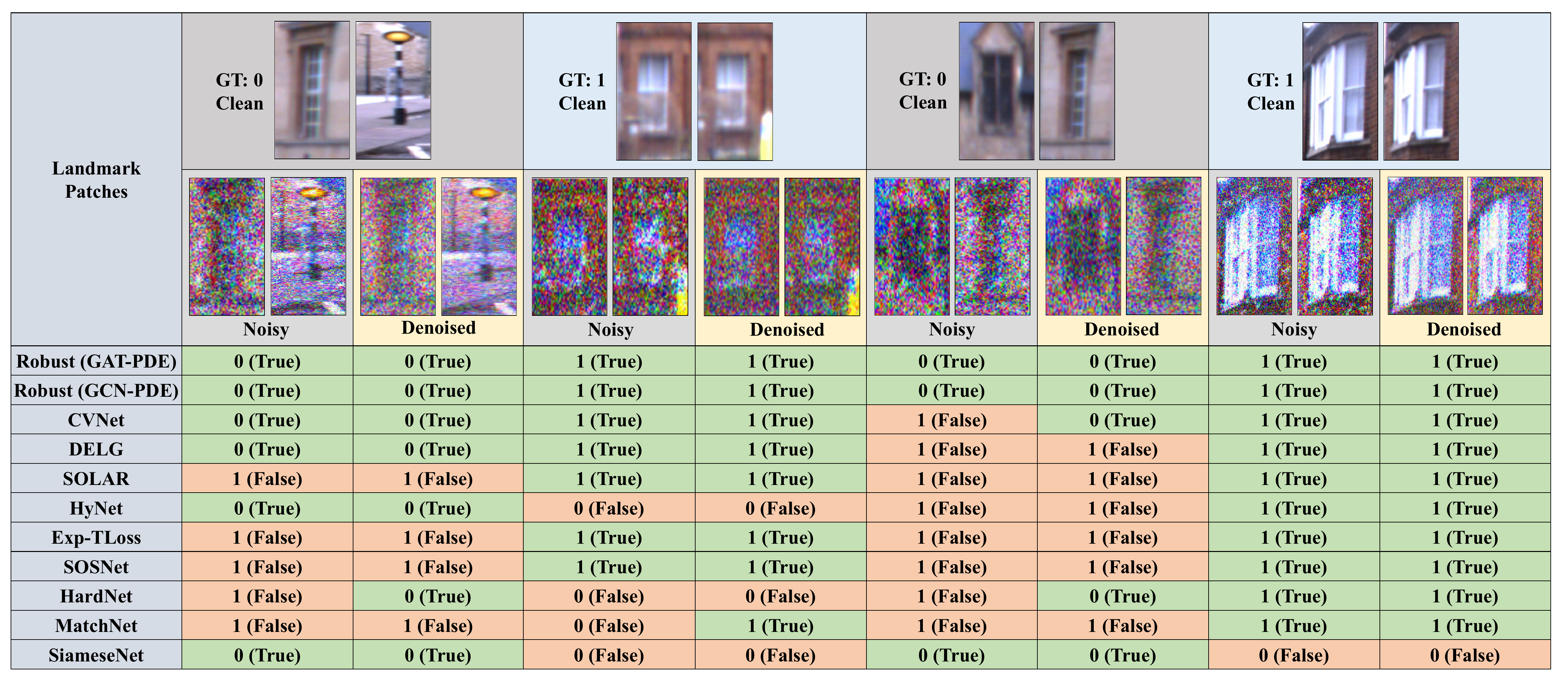}
\caption{
Matching prediction results for example landmark patch pairs from the noisy Oxford dataset. The prediction ``1'' or ``0'' indicates matched or unmatched. GT stands for ``ground truth''.
}
\label{fig_oxford_noise}
\end{figure*}

\textbf{Performance on Snowy/Rainy/Cloudy Boreas.}
From \cref{Boreas-table} and \cref{fig_boreas_noise}, we observe that compared to other baseline methods, RobustMat (GAT-PDE) outperforms other baselines in almost all criteria in the three noisy environments.
Specifically, RobustMat (GAT-PDE) outperforms the other methods in the rainy environment, and it continues to demonstrate superior performance under at least two criteria in snowy and cloudy environments.
Compared to the rainy and cloudy Boreas datasets, the snowy Boreas dataset significantly influences the matching performance of all methods under most criteria. However, RobustMat (GAT-PDE) exhibits less performance degradation.

\begin{table*}[!htb]
\scriptsize
\caption{
Matching performance for the snowy/rainy/cloudy Boreas. The best and the second-best results under different metrics are highlighted in \textbf{bold} and \underline{underlined}, respectively.
}
\label{Boreas-table}
\centering
\resizebox{0.98\textwidth}{!}{\setlength{\tabcolsep}{1.5pt} 
\newcommand{\tabincell}[2]{\begin{tabular}{@{}#1@{}}#2\end{tabular}}
\begin{tabular}{c|c|c|c|c|c|c|c|c|c|c|c|c}
\hline\hline
{\tabincell{c}{{\textbf{\emph{Noise}}} \\ {\textbf{\emph{Types}}} \\}} & {\bf Methods}  & SiameseNet &  MatchNet & HardNet & SOSNet & Exp-TLoss & HyNet & SOLAR & DELG & CVNet & \tabincell{c}{RobustMat \\ (GCN-PDE)\\} 
& \tabincell{c}{RobustMat \\ (GAT-PDE)\\} \\ 
\hline  
\multirow{4}{*}{\tabincell{c}{{\textbf{\emph{Snowy}}}  \\}} 
& {\bf Precision}  & 0.8654 \tiny{$\pm$ 0.0042} & 0.9004 \tiny{$\pm$ 0.0004} & 0.9136 \tiny{$\pm$ 0.0007} & 0.9273 \tiny{$\pm$ 0.0014} & 0.9118 \tiny{$\pm$ 0.0006} & \textbf{0.9406} \tiny{$\pm$ 0.0001} & 0.9256 \tiny{$\pm$ 0.0003} 
& 0.9347 \tiny{$\pm$ 0.0008} & 0.9223 \tiny{$\pm$ 0.0008} & \underline{0.9395} \tiny{$\pm$ 0.0004} & 0.9358 \tiny{$\pm$ 0.0004} \\ 
& {\bf Recall}     & 0.7728 \tiny{$\pm$ 0.0265} & 0.8677 \tiny{$\pm$ 0.0040} & 0.7893 \tiny{$\pm$ 0.0074} & 0.7488  \tiny{$\pm$ 0.0156} & 0.8827 \tiny{$\pm$ 0.0067} & 0.8027 \tiny{$\pm$ 0.0017} & 0.8635 \tiny{$\pm$ 0.0039} 
& 0.7642 \tiny{$\pm$ 0.0095} & \underline{0.8869} \tiny{$\pm$ 0.0102} & 0.8283 \tiny{$\pm$ 0.0062} & \textbf{0.8939} \tiny{$\pm$ 0.0057} \\ 
& {\bf $F_1$-Score} & 0.8163 \tiny{$\pm$ 0.0170} & 0.8838 \tiny{$\pm$ 0.0023} & 0.8469 \tiny{$\pm$ 0.0045} & 0.8285 \tiny{$\pm$ 0.0100} & 0.8970 \tiny{$\pm$ 0.0037} & 0.8662 \tiny{$\pm$ 0.0010} & 0.8935 \tiny{$\pm$ 0.0022} 
& 0.8409 \tiny{$\pm$ 0.0061} & \underline{0.9042} \tiny{$\pm$ 0.0057} & 0.8804 \tiny{$\pm$ 0.0037} & \textbf{0.9143} \tiny{$\pm$ 0.0032} \\ 
& {\bf AUC}        & 0.8264 \tiny{$\pm$ 0.0133} & 0.7899 \tiny{$\pm$ 0.0020} & 0.7827 \tiny{$\pm$ 0.0037} & 0.7864 \tiny{$\pm$ 0.0078} & 0.8133 \tiny{$\pm$ 0.0034} & 0.8253 \tiny{$\pm$ 0.0008} & 0.8277 \tiny{$\pm$ 0.0020} 
& 0.8021 \tiny{$\pm$ 0.0047} & 0.8315 \tiny{$\pm$ 0.0051} & \underline{0.8341} \tiny{$\pm$ 0.0031} & \textbf{0.8549} \tiny{$\pm$ 0.0028} \\ 
\hline
\multirow{4}{*}{\tabincell{c}{{\textbf{\emph{Rainy}}} \\}} 
& {\bf Precision}  & 0.8091 \tiny{$\pm$ 0.0006} & 0.9197 \tiny{$\pm$ 0.0004} & 0.9084 \tiny{$\pm$ 0.0003} & 0.9269 \tiny{$\pm$ 0.0002} & 0.9290 \tiny{$\pm$  0.0004} & 0.9288 \tiny{$\pm$ 0.0005} & 0.9294 \tiny{$\pm$ 0.0003} 
& 0.9205 \tiny{$\pm$ 0.0005} & 0.9179 \tiny{$\pm$ 0.0005} & \underline{0.9375} \tiny{$\pm$ 0.0003} & \textbf{0.9380} \tiny{$\pm$ 0.0006} \\ 
& {\bf Recall}     & 0.8816 \tiny{$\pm$ 0.0032} & 0.8853 \tiny{$\pm$ 0.0045} & 0.8987 \tiny{$\pm$ 0.0038} & 0.8795  \tiny{$\pm$ 0.0020} & 0.9067 \tiny{$\pm$ 0.0051} & 0.7648 \tiny{$\pm$ 0.0059} & 0.9131 \tiny{$\pm$ 0.0040} 
& \underline{0.9264} \tiny{$\pm$ 0.0057} & 0.9237 \tiny{$\pm$ 0.0060} & 0.9205 \tiny{$\pm$ 0.0052} & \textbf{0.9275} \tiny{$\pm$ 0.0088} \\ 
& {\bf $F_1$-Score} & 0.8438 \tiny{$\pm$ 0.0017} & 0.9022 \tiny{$\pm$ 0.0025} & 0.9035 \tiny{$\pm$ 0.0021} & 0.9026 \tiny{$\pm$ 0.0011} & 0.9177 \tiny{$\pm$ 0.0028} & 0.8388 \tiny{$\pm$ 0.0038} & 0.9212 \tiny{$\pm$ 0.0022} 
& 0.9234 \tiny{$\pm$ 0.0031} & 0.9208 \tiny{$\pm$ 0.0032} & \underline{0.9289} \tiny{$\pm$ 0.0028} & \textbf{0.9327} \tiny{$\pm$ 0.0047} \\ 
& {\bf AUC}        & 0.8368 \tiny{$\pm$ 0.0016} & 0.8267 \tiny{$\pm$ 0.0022} & 0.8133 \tiny{$\pm$ 0.0019} & 0.8357 \tiny{$\pm$ 0.0010} & 0.8493 \tiny{$\pm$ 0.0025} & 0.7944 \tiny{$\pm$ 0.0030} & 0.8525 \tiny{$\pm$ 0.0020} 
& 0.8432 \tiny{$\pm$ 0.0029} & 0.8379 \tiny{$\pm$ 0.0030} & \underline{0.8683} \tiny{$\pm$ 0.0026} & \textbf{0.8717} \tiny{$\pm$ 0.0044} \\ 
\hline
\multirow{4}{*}{\tabincell{c}{{\textbf{\emph{Cloudy}}} \\}} 
& {\bf Precision}  & 0.9369 \tiny{$\pm$ 0.0014} & 0.9560 \tiny{$\pm$ 0.0008} & 0.9562 \tiny{$\pm$ 0.0007} & 0.9424 \tiny{$\pm$ 0.0008} & 0.9509 \tiny{$\pm$ 0.0007} & 0.9427 \tiny{$\pm$ 0.0007} & \underline{0.9579} \tiny{$\pm$ 0.0005} 
& 0.9350 \tiny{$\pm$ 0.0004} & 0.9549 \tiny{$\pm$ 0.0002} & 0.9485 \tiny{$\pm$ 0.0001} & \textbf{0.9581} \tiny{$\pm$ 0.0004} \\ 
& {\bf Recall}     & 0.7135 \tiny{$\pm$ 0.0171} & 0.8107 \tiny{$\pm$ 0.0151} & 0.8160 \tiny{$\pm$ 0.0133} & 0.8299  \tiny{$\pm$ 0.0117} & 0.8261 \tiny{$\pm$  0.0120} & 0.8341 \tiny{$\pm$ 0.0109} & 0.8507 \tiny{$\pm$ 0.0100} 
& 0.8437 \tiny{$\pm$ 0.0049} & 0.8464 \tiny{$\pm$ 0.0036} & \textbf{0.8832} \tiny{$\pm$ 0.0026} & \underline{0.8528} \tiny{$\pm$ 0.0095} \\ 
& {\bf $F_1$-Score} & 0.8101 \tiny{$\pm$ 0.0116} & 0.8773 \tiny{$\pm$ 0.0092} & 0.8805 \tiny{$\pm$ 0.0080} & 0.8825 \tiny{$\pm$ 0.0070} & 0.8841 \tiny{$\pm$ 0.0072} & 0.8851 \tiny{$\pm$ 0.0064} & 0.9011 \tiny{$\pm$ 0.0058} 
& 0.8870 \tiny{$\pm$ 0.0029} & 0.8974 \tiny{$\pm$ 0.0021} & \textbf{0.9147} \tiny{$\pm$ 0.0015} & \underline{0.9023} \tiny{$\pm$ 0.0055} \\ 
& {\bf AUC}        & 0.8328 \tiny{$\pm$ 0.0085} & 0.8493 \tiny{$\pm$ 0.8493} & 0.8520 \tiny{$\pm$ 0.0066} & 0.8389 \tiny{$\pm$ 0.0059} & 0.8491 \tiny{$\pm$ 0.0060} & 0.8411 \tiny{$\pm$ 0.0054} & 0.8693 \tiny{$\pm$ 0.0050} 
& 0.8339 \tiny{$\pm$ 0.0025} & 0.8632 \tiny{$\pm$ 0.0018} & \underline{0.8696} \tiny{$\pm$ 0.0013} & \textbf{0.8704} \tiny{$\pm$ 0.0047} \\ 
\hline
\end{tabular}}
\end{table*}
\begin{figure*}[!htb]
\centering
\includegraphics[width=0.75\linewidth]{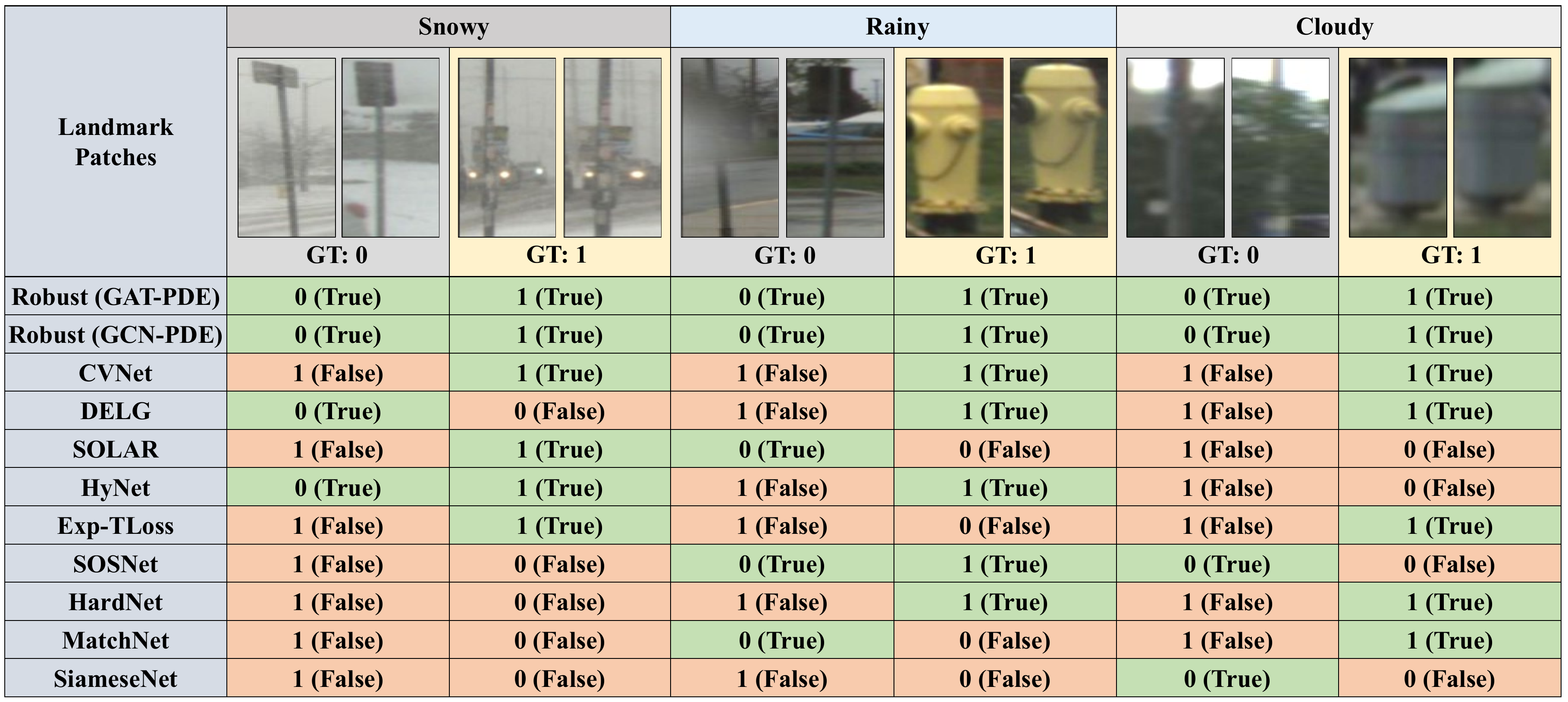}
\caption{
Matching prediction results for example landmark patch pairs from the snowy/rainy/cloudy Boreas datasets. The prediction ``1'' or ``0'' indicates matched or unmatched. GT stands for ``ground truth''.
}
\label{fig_boreas_noise}
\end{figure*}

\subsection{Ablation Studies}\label{sec:ablation}
In this section, we investigate the necessity of the different modules in RobustMat. 

\begin{table}[!htb]
\scriptsize
\caption{Ablation study for including neural ODE/PDE.}
\label{Table.ablation}
\centering
\newcommand{\tabincell}[2]{\begin{tabular}{@{}#1@{}}#2\end{tabular}}
\begin{tabular}{ccccc}
\hline\hline
\multirow{2}{*}{\bf Model}  & \multicolumn{2}{c}{\tabincell{c}{RobustMat with neural ODE + \\ GAT-based neural PDE \\}} & \multicolumn{2}{c}{Resnet+GAT } \\
                            & \emph{noisy} & \emph{clean} & \emph{noisy} & \emph{clean} \\
\hline
\tabincell{c}{{\bf Precision}\ \  \\}      & 0.9418 & 0.9488 & 0.9084 & 0.9146 \\
\hline
\tabincell{c}{{\bf Recall}\ \  \\}         & 0.8853 & 0.9147 & 0.8467 & 0.9133 \\
\hline
\tabincell{c}{{\bf $F_1$-Score}\ \  \\}    & 0.9127 & 0.9314 & 0.8765 & 0.9139 \\
\hline
\tabincell{c}{{\bf AUC}\ \  \\}            & 0.8606 & 0.8833 & 0.7953 & 0.8286 \\
\hline\hline
\end{tabular}
\end{table}

\subsubsection{Advantage of the neural ODE/PDE modules}
We replace the CNN-based neural ODE and GAT-based graph neural PDE modules in our model with vanilla Resnet and GAT, respectively. 
We compare RobustMat (with the neural ODE/PDE modules) with this model with Resnet and GAT under both clean and noisy conditions on the Oxford dataset. 
From \cref{Table.ablation}, we observe that the neural ODE/PDE modules not only improve the matching performance but also the robustness against the noise.

\begin{table}[!htb]
\scriptsize
\caption{Ablation study for only using the neural ODE or the graph neural PDE.}
\label{Table.ablation_only_ODE_PDE}
\centering
\newcommand{\tabincell}[2]{\begin{tabular}{@{}#1@{}}#2\end{tabular}}
\begin{tabular}{cccc}
\hline\hline
{\bf Model}  & \tabincell{c}{Neural \\ ODE \\}  &  \tabincell{c}{GAT-based \\ neural PDE \\} &  \tabincell{c}{Neural ODE \\ + PDE \\}\\
\hline
\tabincell{c}{{\bf Precision}\ \  \\}      & \textbf{0.9609} & 0.7745 & {0.9418}  \\
\hline
\tabincell{c}{{\bf Recall}\ \  \\}         & 0.8200 & \textbf{0.9800} & {0.8853}  \\
\hline
\tabincell{c}{{\bf $F_1$-Score}\ \  \\}    & {0.8849} & 0.8652 & \textbf{0.9127}   \\
\hline
\tabincell{c}{{\bf AUC}\ \  \\}            & {0.8600} & 0.5620 & \textbf{0.8606}   \\
\hline\hline
\end{tabular}
\end{table}

\subsubsection{Combination of the neural ODE and PDE modules}
Since RobustMat combines the neural ODE and the graph neural PDE, we compare with the models using either only the neural ODE or the graph neural PDE. 
The results in \cref{Table.ablation_only_ODE_PDE} indicates the advantage of combining the neural ODE and PDE modules when considering the $F_1$-score and AUC. When using only the neural ODE, we tend to have high precision but lower recall. When using only the GAT-based neural PDE, we tend to have a high recall but low precision. This could be due to the former having more false negative matchings as it does not utilize neighborhood information, while the latter has more false positive matchings due to aggregating non-robust vertex features. RobustMat combines the best of both worlds and achieves the best $F_1$-score and AUC.
This also validates our choice of the loss function in \cref{sect:loss_score}.

\begin{table}[!htb]
\scriptsize
\caption{Ablation for different terminal time $T_f$ and $T_g$ for the neural ODE and PDE modules on RobustMat.}
\label{Table.ablation_diff_T}
\centering
\newcommand{\tabincell}[2]{\begin{tabular}{@{}#1@{}}#2\end{tabular}}
\begin{tabular}{c c c c c}
\hline\hline
{\bf Terminal Time}  & {\bf Precision}  &  {\bf Recall} &  {\bf $F_1$-Score} & {\bf AUC} \\
\hline
\tabincell{c}{$T_f=2.0$ \& $T_g=1.0$ }  & 0.9273      & {0.9026}    & 0.9149      & 0.8453 \\
\tabincell{c}{$T_f=3.0$ \& $T_g=1.0$ }  & {0.9411}    & 0.8960      & 0.9180      & {0.8640} \\
\tabincell{c}{$T_f=4.0$ \& $T_g=1.0$ }  & 0.9176      & {0.9360}    & {0.9267}    & 0.8420 \\ 
\tabincell{c}{$T_f=5.0$ \& $T_g=1.0$ }  & {0.9488}    & 0.8906      & {0.9188}    & {0.8733} \\ 
\hline
\tabincell{c}{$T_f=2.0$ \& $T_g=2.0$ }  & 0.9358      & 0.8947      & 0.9148      & 0.8553 \\
\tabincell{c}{$T_f=3.0$ \& $T_g=3.0$ }  & 0.9353      & {0.9253}    & {0.9303}    & 0.8667 \\
\tabincell{c}{$T_f=4.0$ \& $T_g=4.0$ }  & {0.9401}    & {0.9213}    & {0.9306}    & {0.8727} \\ 
\tabincell{c}{$T_f=5.0$ \& $T_g=5.0$ }  & {0.9513}    & 0.8867      & 0.9179      & {0.8753} \\ 
\hline\hline
\end{tabular}
\end{table}

\subsubsection{Terminal time for neural ODE/PDE modules}
We investigate how different terminal times $T_f$ and $T_g$ influence the matching performance. 
From \cref{Table.ablation_diff_T}, we observe that when the terminal times $T_f$ and $T_g$ increase, the performance of RobustMat improves in general, validating \cref{thm.Robustness_ode,thm.Robustness_gde}. The trade-off of having a larger $T_f$ or $T_g$ is the runtime complexity of the ODE solver. 

\begin{table}[!htb]
\scriptsize
\caption{Ablation study for different regularizations.}
\label{Table.ablation_diff_reg}
\centering
\newcommand{\tabincell}[2]{\begin{tabular}{@{}#1@{}}#2\end{tabular}}
\begin{tabular}{ c c c c}
\hline\hline
{\textbf{Model}}  & \tabincell{c}{RobustMat \\ (no Reg) \\}  & \tabincell{c}{RobustMat \\ (ODE Reg) \\} &  \tabincell{c}{RobustMat \\ (ODE \& PDE Reg) \\}  \\
\hline
{\bf Precision}     & \underline{0.9486} \tiny{$\pm$ 0.0037}  & 0.9426 \tiny{$\pm$ 0.0041}             & \textbf{0.9500} \tiny{$\pm$ 0.0032}    \\ \hline
{\bf Recall}        & 0.8811 \tiny{$\pm$ 0.0076}              & \textbf{0.8888} \tiny{$\pm$ 0.0069}    & \underline{0.8827} \tiny{$\pm$ 0.0054} \\ \hline
{\bf $F_1$-Score}   & 0.9136 \tiny{$\pm$ 0.0035}              & \underline{0.9149} \tiny{$\pm$ 0.0034} & \textbf{0.9151} \tiny{$\pm$ 0.0041}    \\ \hline
{\bf AUC}           & \underline{0.8689} \tiny{$\pm$ 0.0046}  & 0.8632 \tiny{$\pm$ 0.0056}             & \textbf{0.8717} \tiny{$\pm$ 0.0065}    \\ 
\hline\hline
\end{tabular}
\end{table}

\subsubsection{Constraints for neural ODE/PDE modules}
We now test loss regularizations to achieve the neural ODE/PDE constraints in \cref{ass:ass_ivp,ass:ass_Amatrix}. 
Specifically, we first obtain the feature difference for each patch pair in the input and output of a neural ODE/PDE module. 
We use the output-input quotient minimization for the feature difference as the regularization for the neural ODE/PDE module.
We train and test these methods based on the Oxford dataset. 
From \cref{Table.ablation_diff_reg}, we see that the method with the constraint regularizations for both neural ODE and PDE almost outperforms the other variants. 
The experimental results also validate the theoretical analysis in \cref{thm.Robustness_ode,thm.Robustness_gde}. 

\begin{table}[!htb]
\caption{Matching performance comparison with the keypoint-level baselines on noisy KITTI.} 
\label{table-keypoint}
\centering
\resizebox{0.48\textwidth}{!}{\setlength{\tabcolsep}{3.5pt} 
\newcommand{\tabincell}[2]{\begin{tabular}{@{}#1@{}}#2\end{tabular}}
\begin{tabular}{c|c|c|c|c}
\hline\hline
{\bf Methods}  & D2-Net & SuperGlue & LoFTR & \tabincell{c}{RobustMat \\ \&  Trained on Oxford \\}
\\ \hline
{\bf Precision}    & \underline{0.9318} \tiny{$\pm$ 0.0029} & 0.8509 \tiny{$\pm$ 0.0025}             & 0.9243 \tiny{$\pm$ 0.0018} & {\textbf{0.9408}} \tiny{$\pm$ 0.0024} \\ 
{\bf Recall}       & 0.7723 \tiny{$\pm$ 0.0188}             & \underline{0.8496} \tiny{$\pm$ 0.0177} & 0.8405 \tiny{$\pm$ 0.0103} & {\textbf{0.8560}} \tiny{$\pm$ 0.0174} \\  
{\bf $F_1$-Score}  & 0.8444 \tiny{$\pm$ 0.0120}             & 0.8502 \tiny{$\pm$ 0.0099}             & \underline{0.8804} \tiny{$\pm$ 0.0065} & {\textbf{0.8963}} \tiny{$\pm$ 0.0101} \\ 
{\bf AUC}          & 0.8013 \tiny{$\pm$ 0.0102}             & 0.7016 \tiny{$\pm$ 0.0081}             & \underline{0.8171} \tiny{$\pm$ 0.0066} & {\textbf{0.8472}} \tiny{$\pm$ 0.0093} \\  
\hline\hline
{\bf Methods}  & \tabincell{c}{D2-Net \\+ Denoising\\} & \tabincell{c}{SuperGlue \\+ Denoising\\} & \tabincell{c}{LoFTR \\+ Denoising\\} & \tabincell{c}{RobustMat \\ + Denoising \\ \&  Train on Oxford \\} 
\\ \hline 
{\bf Precision}   & \underline{0.9241} \tiny{$\pm$ 0.0029} & 0.8824 \tiny{$\pm$ 0.0036}   & 0.9167 \tiny{$\pm$ 0.0019}  & \textbf{0.9327} \tiny{$\pm$ 0.0037} \\ 
{\bf Recall}      & 0.8181 \tiny{$\pm$ 0.0190}             & 0.8603 \tiny{$\pm$ 0.0078}   & \underline{0.8629} \tiny{$\pm$ 0.0130}  & \textbf{0.8869} \tiny{$\pm$ 0.0191} \\ 
{\bf $F_1$-Score} & 0.8678 \tiny{$\pm$ 0.0116}             & 0.8712 \tiny{$\pm$ 0.0054}   & \underline{0.8890} \tiny{$\pm$ 0.0065} & \textbf{0.9091} \tiny{$\pm$ 0.0113}  \\ 
{\bf AUC}         & 0.8083 \tiny{$\pm$ 0.0106}             & 0.7581 \tiny{$\pm$ 0.0079} & \underline{0.8139} \tiny{$\pm$ 0.0043}  & \textbf{0.8475} \tiny{$\pm$ 0.0120} \\  
\hline\hline
\end{tabular}}
\end{table}

\begin{table*}[!htb]
\scriptsize
\caption{
Matching performance under different noises for the Oxford landmark patches.
}
\label{denoise-table}
\centering
\resizebox{0.98\textwidth}{!}{\setlength{\tabcolsep}{2pt} 
\newcommand{\tabincell}[2]{\begin{tabular}{@{}#1@{}}#2\end{tabular}}
\begin{tabular}{c|c|c|c|c|c|c|c|c|c|c|c|c}
\hline\hline
{\tabincell{c}{{\textbf{\emph{Noise}}} \\ {\textbf{\emph{Level}}} \\}} & {\bf Methods}  & SiameseNet &  MatchNet & HardNet & SOSNet & Exp-TLoss & HyNet & SOLAR & DELG & CVNet & \tabincell{c}{RobustMat \\ (GCN-PDE)\\} 
& \tabincell{c}{RobustMat \\ (GAT-PDE)\\} \\ 
\hline  
\multirow{4}{*}{\tabincell{c}{{\textbf{\emph{Low-}}} \\ {\textbf{\emph{Level}}} \\ {\textbf{\emph{Noise}}} \\}} 
& {\bf Precision}  & 0.8405 \tiny{$\pm$ 0.0162} & 0.9228 \tiny{$\pm$ 0.0048} & 0.9284 \tiny{$\pm$ 0.0013} & 0.9215 \tiny{$\pm$ 0.0008} & 0.9074 \tiny{$\pm$ 0.0014} & 0.9398 \tiny{$\pm$ 0.0012} & 0.9226 \tiny{$\pm$ 0.0020} 
& 0.9341 \tiny{$\pm$ 0.0056} & 0.9400 \tiny{$\pm$ 0.0030} & \textbf{0.9458} \tiny{$\pm$ 0.0042} & \underline{0.9418} \tiny{$\pm$ 0.0055} \\ 
& {\bf Recall}     & 0.6224 \tiny{$\pm$ 0.0395} & 0.8597 \tiny{$\pm$ 0.0059} & 0.8827 \tiny{$\pm$ 0.0175} & 0.8768  \tiny{$\pm$ 0.0102} & 0.8883 \tiny{$\pm$ 0.0147} & \underline{0.8957} \tiny{$\pm$ 0.0185} & 0.8909 \tiny{$\pm$ 0.0253} 
& 0.8616 \tiny{$\pm$ 0.0134} & 0.8557 \tiny{$\pm$ 0.0066} & 0.8795 \tiny{$\pm$ 0.0089} & \textbf{0.9021} \tiny{$\pm$ 0.0042} \\ 
& {\bf $F_1$-Score} & 0.7144 \tiny{$\pm$ 0.0269} & 0.8901 \tiny{$\pm$ 0.0035} & 0.9049 \tiny{$\pm$ 0.0098} & 0.8986 \tiny{$\pm$ 0.0057} & 0.8977 \tiny{$\pm$ 0.0082} & \underline{0.9172} \tiny{$\pm$ 0.0103} & 0.9063 \tiny{$\pm$ 0.0141} 
& 0.8963 \tiny{$\pm$ 0.0092} & 0.8959 \tiny{$\pm$ 0.0035} & 0.9114 \tiny{$\pm$ 0.0060} & \textbf{0.9215} \tiny{$\pm$ 0.0033} \\ 
& {\bf AUC}        & 0.7520 \tiny{$\pm$ 0.0173} & 0.8218 \tiny{$\pm$ 0.0073} & 0.8393 \tiny{$\pm$ 0.0088} & 0.8264 \tiny{$\pm$ 0.0051} & 0.8081 \tiny{$\pm$ 0.0073} & 0.8619 \tiny{$\pm$ 0.0093} & 0.8335 \tiny{$\pm$ 0.0127} 
& 0.8396 \tiny{$\pm$ 0.0125} & 0.8459 \tiny{$\pm$ 0.0045} & \underline{0.8641} \tiny{$\pm$ 0.0088} & \textbf{0.8675} \tiny{$\pm$ 0.0085} \\ 
\hline 
\multirow{4}{*}{\tabincell{c}{{\textbf{\emph{High-}}} \\ {\textbf{\emph{Level}}} \\ {\textbf{\emph{Noise}}} \\}} 
& {\bf Precision}  & 0.8763 \tiny{$\pm$ 0.0189} & \underline{0.9410} \tiny{$\pm$ 0.0018} & 0.9129 \tiny{$\pm$ 0.0012} & 0.9167 \tiny{$\pm$ 0.0011} & 0.8999 \tiny{$\pm$ 0.0023} & 0.9306 \tiny{$\pm$ 0.0021} & 0.9153 \tiny{$\pm$ 0.0014} 
& 0.9100 \tiny{$\pm$ 0.0060} & 0.9121 \tiny{$\pm$ 0.0063} & 0.9333 \tiny{$\pm$ 0.0048} & \textbf{0.9430} \tiny{$\pm$ 0.0034} \\  
& {\bf Recall}     & 0.5776 \tiny{$\pm$ 0.0427} & 0.7733 \tiny{$\pm$ 0.0145} & 0.7125 \tiny{$\pm$ 0.0107} & 0.8221 \tiny{$\pm$ 0.0127} & 0.8160 \tiny{$\pm$ 0.0204} & 0.7696 \tiny{$\pm$ 0.0247} & 0.8072 \tiny{$\pm$ 0.0152} 
& 0.6872 \tiny{$\pm$ 0.0112} & 0.6117 \tiny{$\pm$ 0.0154} & \textbf{0.8400} \tiny{$\pm$ 0.0058} & \underline{0.8256} \tiny{$\pm$ 0.0104} \\ 
& {\bf $F_1$-Score} & 0.6954 \tiny{$\pm$ 0.0323} & 0.8489 \tiny{$\pm$ 0.0084} & 0.8003 \tiny{$\pm$ 0.0072} & 0.8668 \tiny{$\pm$ 0.0075} & 0.8558 \tiny{$\pm$ 0.0124} & 0.8422 \tiny{$\pm$ 0.0159} & 0.8578 \tiny{$\pm$ 0.0093} 
& 0.7830 \tiny{$\pm$ 0.0074} & 0.7322 \tiny{$\pm$ 0.0125} & \textbf{0.8842} \tiny{$\pm$ 0.0048} & \underline{0.8804} \tiny{$\pm$ 0.0064} \\ 
& {\bf AUC}        & 0.7480 \tiny{$\pm$ 0.0206} & 0.8139 \tiny{$\pm$ 0.0053} & 0.7543 \tiny{$\pm$ 0.0054} & 0.7991 \tiny{$\pm$ 0.0063} & 0.7720 \tiny{$\pm$ 0.0102} & 0.7988 \tiny{$\pm$ 0.0124} & 0.7916 \tiny{$\pm$ 0.0076} 
& 0.7416 \tiny{$\pm$ 0.0081} & 0.7175 \tiny{$\pm$ 0.0113} & \underline{0.8300} \tiny{$\pm$ 0.0084} & \textbf{0.8380} \tiny{$\pm$ 0.0072} \\ 
\hline\hline
{\tabincell{c}{{\textbf{\emph{Noise}}} \\ {\textbf{\emph{Level}}} \\}} & {\bf Methods}  & \tabincell{c}{SiameseNet \\ +Denoising \\} & \tabincell{c}{MatchNet \\
+Denoising\\} & \tabincell{c}{HardNet \\ +Denoising\\} & \tabincell{c}{SOSNet\\+Denoising\\} & \tabincell{c}{Exp-TLoss\\+Denoising\\} & \tabincell{c}{HyNet \\+Denoising\\} & \tabincell{c}{SOLAR\\+Denoising\\} & \tabincell{c}{DELG \\ +Denoising\\}  & \tabincell{c}{CVNet \\ +Denoising\\} & \tabincell{c}{RobustMat \\ (GCN-PDE)\\ +Denoising\\} 
& \tabincell{c}{RobustMat \\ (GAT-PDE)\\+Denoising\\} \\ 
\hline 
\multirow{4}{*}{\tabincell{c}{{\textbf{\emph{Low-}}} \\ {\textbf{\emph{Level}}} \\ {\textbf{\emph{Noise}}} \\}} 
& {\bf Precision}  & 0.8337 \tiny{$\pm$ 0.0159} & 0.9164 \tiny{$\pm$ 0.0058} & 0.9287 \tiny{$\pm$ 0.0015} & 0.9110 \tiny{$\pm$ 0.0013} & 0.8936 \tiny{$\pm$ 0.0010} & 0.9362 \tiny{$\pm$ 0.0004} & 0.9002 \tiny{$\pm$ 0.0015} 
& 0.9362 \tiny{$\pm$ 0.0042} & \underline{0.9432} \tiny{$\pm$ 0.0041} & \textbf{0.9445} \tiny{$\pm$ 0.0037} & 0.9380 \tiny{$\pm$ 0.0027} \\ 
& {\bf Recall} & 0.6840 \tiny{$\pm$ 0.0402} & 0.8880 \tiny{$\pm$ 0.0114} & 0.8861 \tiny{$\pm$ 0.0198} & 0.9013 \tiny{$\pm$ 0.0153} & 0.9075 \tiny{$\pm$ 0.0104} & \textbf{0.9389} \tiny{$\pm$ 0.0065} & 0.9144 \tiny{$\pm$ 0.0152} 
& 0.8621 \tiny{$\pm$ 0.0175} & 0.8680 \tiny{$\pm$ 0.0144} & 0.9035 \tiny{$\pm$ 0.0066} & \underline{0.9157} \tiny{$\pm$ 0.0081} \\ 
& {\bf $F_1$-Score} & 0.7507 \tiny{$\pm$ 0.0253} & 0.9019 \tiny{$\pm$ 0.0069} & 0.9068 \tiny{$\pm$ 0.0110} & 0.9061 \tiny{$\pm$ 0.0084} & 0.9004 \tiny{$\pm$ 0.0057} & \textbf{0.9375} \tiny{$\pm$ 0.0035} & 0.9072 \tiny{$\pm$ 0.0082} 
& 0.8976 \tiny{$\pm$ 0.0112} & 0.9040 \tiny{$\pm$ 0.0075} & 0.9235 \tiny{$\pm$ 0.0047} & \underline{0.9267} \tiny{$\pm$ 0.0051} \\ 
& {\bf AUC}        & 0.7736 \tiny{$\pm$ 0.0176} & 0.8224 \tiny{$\pm$ 0.0108} & 0.8411 \tiny{$\pm$ 0.0099} & 0.8187 \tiny{$\pm$ 0.0076} & 0.7917 \tiny{$\pm$ 0.0052} & \textbf{0.8735} \tiny{$\pm$ 0.0032} & 0.8051 \tiny{$\pm$  0.0076} 
& 0.8431 \tiny{$\pm$ 0.0129} & 0.8556 \tiny{$\pm$ 0.0072} & \underline{0.8721} \tiny{$\pm$ 0.0076} & 0.8671 \tiny{$\pm$ 0.0069} \\ 
\hline
\multirow{4}{*}{\tabincell{c}{{\textbf{\emph{High-}}} \\ {\textbf{\emph{Level}}} \\ {\textbf{\emph{Noise}}} \\}} 
& {\bf Precision}  & 0.8630 \tiny{$\pm$ 0.0162} & 0.9260 \tiny{$\pm$ 0.0040} & 0.9166 \tiny{$\pm$ 0.0034} & 0.9047 \tiny{$\pm$ 0.0014} & 0.8868  \tiny{$\pm$ 0.0007} & 0.9239 \tiny{$\pm$ 0.0005} & 0.8948 \tiny{$\pm$ 0.0019} 
& 0.8882 \tiny{$\pm$ 0.0052} & 0.9006 \tiny{$\pm$ 0.0037} & \underline{0.9311} \tiny{$\pm$ 0.0054} & \textbf{0.9418} \tiny{$\pm$ 0.0027} \\ 
& {\bf Recall}     & 0.6088 \tiny{$\pm$ 0.0269} & 0.7843 \tiny{$\pm$ 0.0139} & 0.7493 \tiny{$\pm$ 0.0319} & 0.8355 \tiny{$\pm$ 0.0131} & 0.8464 \tiny{$\pm$ 0.0057} & 0.7768 \tiny{$\pm$ 0.0050} & \underline{0.8621} \tiny{$\pm$ 0.0183} 
& 0.8392 \tiny{$\pm$ 0.0073} & 0.8483 \tiny{$\pm$ 0.0087} & \textbf{0.8648} \tiny{$\pm$ 0.0110} & 0.8328 \tiny{$\pm$ 0.0046} \\ 
& {\bf $F_1$-Score} & 0.7136 \tiny{$\pm$ 0.0203} & 0.8492 \tiny{$\pm$ 0.0087} & 0.8243 \tiny{$\pm$ 0.0212} & 0.8687 \tiny{$\pm$ 0.0077} & 0.8661 \tiny{$\pm$ 0.0033} & 0.8440 \tiny{$\pm$ 0.0031} & 0.8781 \tiny{$\pm$ 0.0104} 
& 0.8630 \tiny{$\pm$ 0.0036} & 0.8736 \tiny{$\pm$ 0.0047} & \textbf{0.8967} \tiny{$\pm$ 0.0078} & \underline{0.8839} \tiny{$\pm$ 0.0031} \\ 
& {\bf AUC}        & 0.7560 \tiny{$\pm$ 0.0140} & 0.7981 \tiny{$\pm$ 0.0082} & 0.7727 \tiny{$\pm$ 0.0160} & 0.7857 \tiny{$\pm$ 0.0065} & 0.7612 \tiny{$\pm$ 0.0028} & 0.7924 \tiny{$\pm$ 0.0025} & 0.7791 \tiny{$\pm$ 0.0092} 
& 0.7612 \tiny{$\pm$ 0.0078} & 0.7837 \tiny{$\pm$ 0.0061} & \underline{0.8364} \tiny{$\pm$ 0.0113} & \textbf{0.8392} \tiny{$\pm$ 0.0046} \\
\hline\hline
\end{tabular}}
\end{table*}

\subsubsection{Comparison with keypoint-level matching}\label{sect:keypoint}
We compare RobustMat with keypoint-level matching methods, including D2-Net \cite{dusmanu2019d2}, SuperGlue \cite{sarlin2020superglue} and LoFTR \cite{sun2021loftr}. 
We use the pre-trained models of these baselines given by the corresponding literature \cite{dusmanu2019d2,sarlin2020superglue,sun2021loftr}. 
To ensure a fair comparison, we train and test RobustMat on \emph{different} datasets: we train on the Oxford dataset but test on the noisy KITTI dataset. 
The landmarks from the Oxford dataset include windows, which are not included in the KITTI dataset. 
From \cref{table-keypoint}, we observe that RobustMat (GAT-PDE) is superior to these keypoint-level matching baselines with and without denoising.  
The reason may be that noise influences the detection of keypoints, causing keypoint-level matching methods to fail to obtain accurate keypoint correspondence. Further results are provided in the supplementary.

\subsubsection{Effect of different noise intensities} 
We vary the additive Gaussian noise power for the noisy Oxford dataset to obtain perturbed input patches with different qualities.
Low-level noise refers to a PSNR of around $19$ dB, and high-level noise has a PSNR of around $13$ dB.

From \cref{denoise-table}, we observe RobustMat is competitive under high-level noise with or without denoising, and under low-level noise without denoising. With denoising under low-level noise, RobustMat is not the best performer. This is because the denoising process can effectively remove low-level noise and the neural ODE/PDE modules lose their advantages.

\subsubsection{Neighborhood information}\label{sect:neighborhood}
We compare RobustMat to other baselines that also use neighborhood information. Since the vanilla baselines we use do not originally utilize this information, we incorporate such information into them for a fair comparison. To accomplish this, we sort the neighbors of a given patch according to its center pixel coordinates and then compare the patch pair and their corresponding neighboring pairs in each baseline. We calculate the average of the predicted scores and decide the matching based on a threshold that we set as a hyperparameter, \emph{tuned to obtain the best performance for each baseline method}. The results in \Cref{neighborhood-table} demonstrate that RobustMat outperforms the baselines even when taking neighborhood information into consideration. 

\begin{table}[!htb]
\caption{
Matching performance on the noisy Oxford for the comparison of the models with neighborhood information.
}
\label{neighborhood-table}
\centering
\resizebox{0.48\textwidth}{!}{\setlength{\tabcolsep}{1.8pt} 
\newcommand{\tabincell}[2]{\begin{tabular}{@{}#1@{}}#2\end{tabular}}
\begin{tabular}{c|c|c|c|c|c|c}
\hline\hline
{\bf Methods}  
& \tabincell{c}{HardNet \\ (neighbor)\\} & \tabincell{c}{HyNet \\ (neighbor)\\} & \tabincell{c}{SOLAR \\ (neighbor) \\} 
& \tabincell{c}{DELG \\ (neighbor)\\} & \tabincell{c}{CVNet \\ (neighbor)\\}
& \tabincell{c}{RobustMat \\ (GAT-PDE)\\} \\ 
\hline  
{\bf Precision}    & \underline{0.9321} \tiny{$\pm$  0.0010}   & 0.9010 \tiny{$\pm$ 0.0014}  & 0.9209 \tiny{$\pm$ 0.0013}  
& 0.9263 \tiny{$\pm$  0.0055} & 0.9183 \tiny{$\pm$ 0.0058} & \textbf{0.9486} \tiny{$\pm$  0.0037} \\ 
{\bf Recall}       & 0.7877 \tiny{$\pm$ 0.0131} & \underline{0.8613} \tiny{$\pm$ 0.0130} & 0.7923 \tiny{$\pm$ 0.0143} 
& 0.7379 \tiny{$\pm$  0.0107} & 0.7736 \tiny{$\pm$ 0.0183} & \textbf{0.8811} \tiny{$\pm$ 0.0076} \\
{\bf $F_1$-Score}  & 0.8538\tiny{$\pm$ 0.0081}  & \underline{0.8807} \tiny{$\pm$ 0.0075}  & 0.8517 \tiny{$\pm$ 0.0088} 
& 0.8214 \tiny{$\pm$  0.0085} & 0.8397 \tiny{$\pm$ 0.0126} & \textbf{0.9136} \tiny{$\pm$ 0.0035}  \\
{\bf AUC}          & \underline{0.8079} \tiny{$\pm$ 0.0066}  & 0.7887 \tiny{$\pm$ 0.0065} & 0.7941\tiny{$\pm$  0.0071} 
& 0.7809 \tiny{$\pm$  0.0109} & 0.7836 \tiny{$\pm$ 0.0136} & \textbf{0.8689} \tiny{$\pm$ 0.0046} \\
\hline\hline
{\bf Methods}  
& \tabincell{c}{HardNet \\ (neighbor)\\ +Denoising\\} & \tabincell{c}{HyNet \\ (neighbor)\\ +Denoising\\} & \tabincell{c}{SOLAR \\ (neighbor)\\ +Denoising\\} 
& \tabincell{c}{DELG \\ (neighbor)\\ +Denoising\\} & \tabincell{c}{CVNet \\ (neighbor)\\ +Denoising\\} & \tabincell{c}{RobustMat \\ (GAT-PDE)\\+Denoising\\} \\ 
\hline  
{\bf Precision}    & \underline{0.9346} \tiny{$\pm$ 0.0015} & 0.8820 \tiny{$\pm$ 0.0006} & 0.9149 \tiny{$\pm$ 0.0023} 
& 0.9281 \tiny{$\pm$ 0.0044} & 0.9323 \tiny{$\pm$ 0.0033} & \textbf{0.9350} \tiny{$\pm$ 0.0027} \\ 
{\bf Recall}       & 0.8197 \tiny{$\pm$ 0.0198} & \underline{0.8771}  \tiny{$\pm$ 0.0047} & 0.8035 \tiny{$\pm$ 0.0232} 
& 0.8427 \tiny{$\pm$ 0.0122} & 0.8707 \tiny{$\pm$ 0.0141} & \textbf{0.8939} \tiny{$\pm$ 0.0065} \\
{\bf $F_1$-Score}  & 0.8733 \tiny{$\pm$ 0.0120}  & 0.8795 \tiny{$\pm$ 0.0027} & 0.8554 \tiny{$\pm$ 0.0142} 
& 0.8833 \tiny{$\pm$  0.0065} & \underline{0.9004} \tiny{$\pm$ 0.0076} & \textbf{0.9140} \tiny{$\pm$ 0.0041} \\
{\bf AUC}          & 0.8239 \tiny{$\pm$ 0.0099}  & 0.7625 \tiny{$\pm$ 0.0024} & 0.7897 \tiny{$\pm$ 0.0116} 
& 0.8233 \tiny{$\pm$  0.0069} & \underline{0.8405} \tiny{$\pm$ 0.0070} & \textbf{0.8537} \tiny{$\pm$ 0.0057} \\
\hline\hline
\end{tabular}}
\end{table}

\subsubsection{Relation with Theory}\label{sect:relation_theory}
RobustMat exhibits robustness in performing the matching task for the synthetic and authentic noisy datasets in \cref{sec.Performance_Evaluation}, indirectly verifying the theoretical findings presented in \cref{thm.Robustness_ode,thm.Robustness_gde}. 
To see this more specifically, recall that $h_\mathrm{DS}(x)$ is the output of the downsampling CNN module, which is then input into the vertex-diffusion module to obtain $f(x)$. Finally, based on $f(x)$ and the embeddings of the neighbors of $x$, we obtain the graph-diffusion embedding $g(\calG^x)$. For each of these three embeddings, we provide the kernel density estimate plots and box plots of the Frobenius norm of the difference between the embedding of a noisy sample and its corresponding clean sample. 

In \cref{fig:h_feature}, we observe that $\| h_\mathrm{DS}({\tilde x}) - h_\mathrm{DS}(x) \|_2$ tends to increase with the level of additive noise. On the other hand, \cref{fig:f_feature,fig:g_feature} demonstrate that the vertex-diffusion embedding $f$ and graph-diffusion embedding $g$ do not differ very much between the clean and noisy samples. This observation indicates that both the vertex-diffusion and graph-diffusion embeddings of RobustMat are robust to input noise perturbations, which is consistent with the theoretical findings in \cref{thm.Robustness_ode,thm.Robustness_gde}.

\begin{figure}[!htb]
\centering
\includegraphics[width=\linewidth]{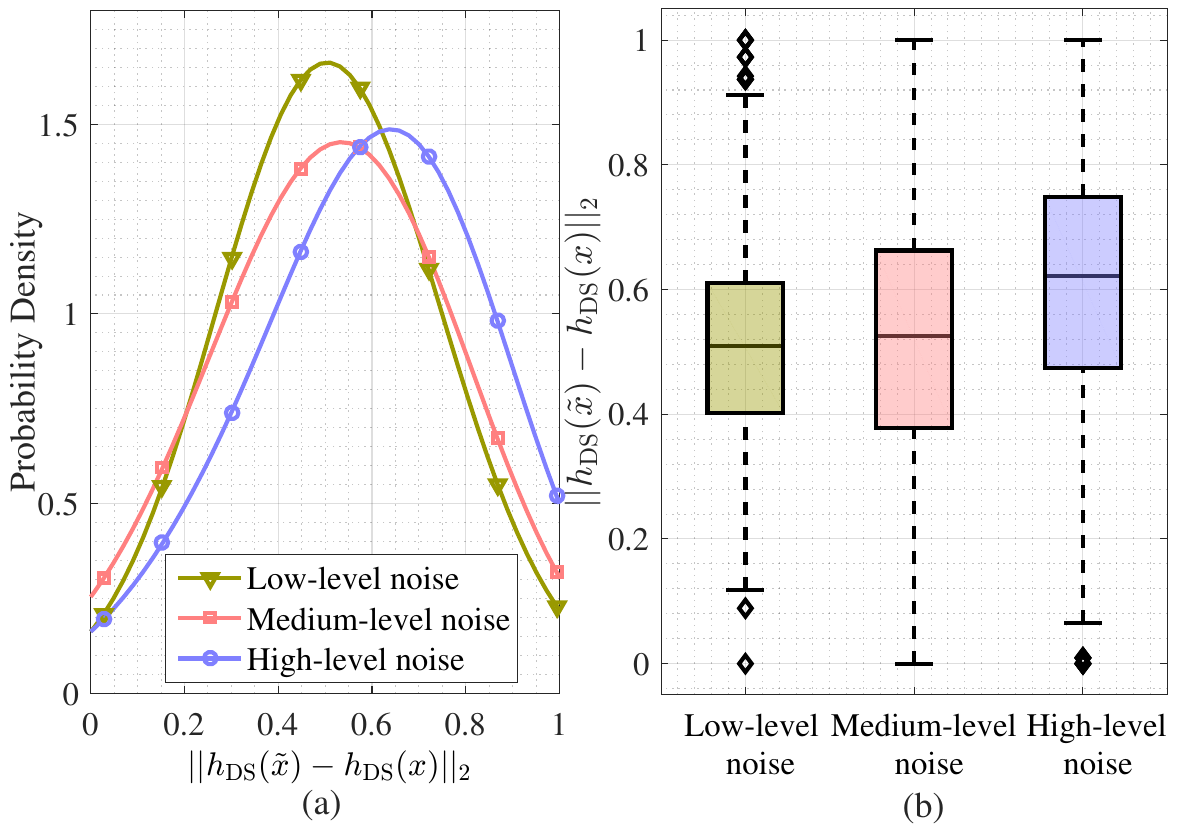}
\caption{
(a) Kernel density estimate plots and (b) box plots of $\| h_\mathrm{DS}({\tilde x}) - h_\mathrm{DS}(x) \|_2$ based on the RobustMat (GAT-PDE) using the Oxford dataset. The low-level, medium-level and high-level noises refer to a PSNR of around $13$ dB, $16$ dB and $19$ dB respectively, when comparing noisy samples with their corresponding clean samples.
}
\label{fig:h_feature}
\end{figure}

\begin{figure}[!htb]
\centering
\includegraphics[width=\linewidth]{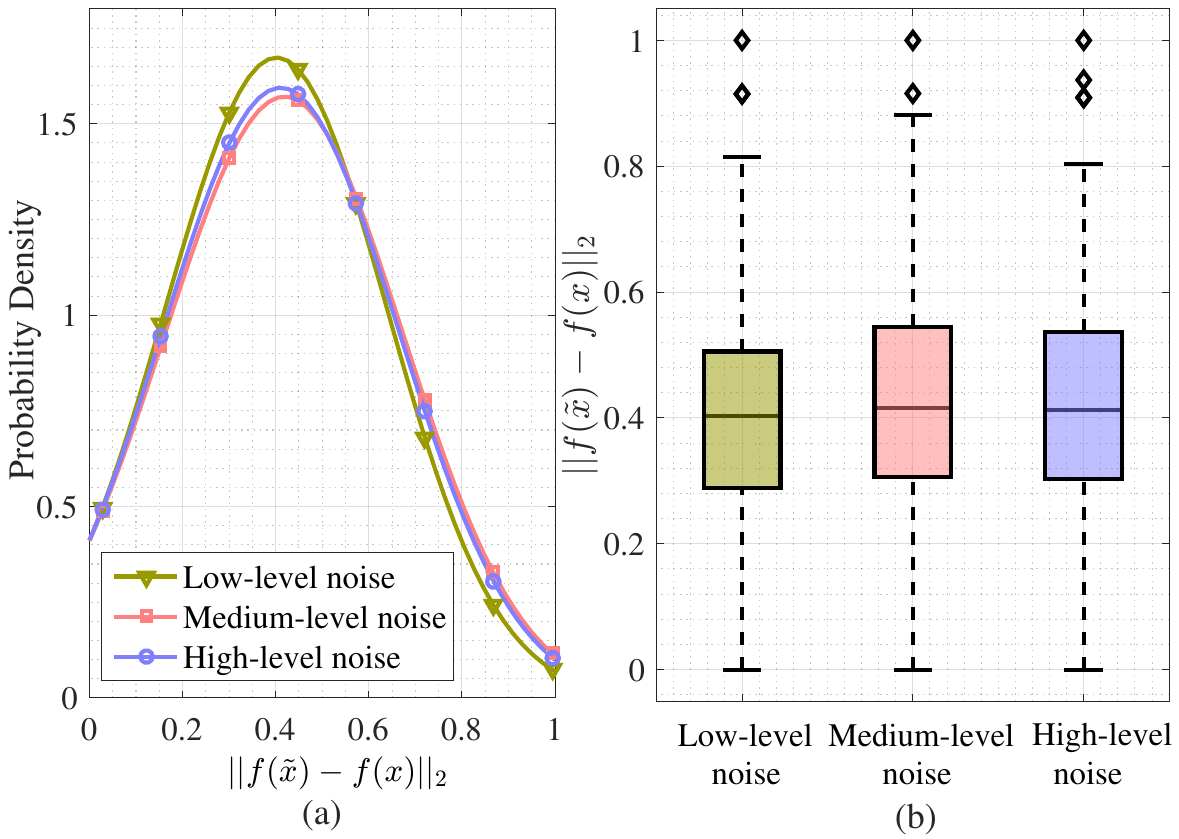}
\caption{
(a) Kernel density estimate plots and (b) box plots of $\| f({\tilde x}) - f(x) \|_2$ based on the RobustMat (GAT-PDE) using the Oxford dataset.
}
\label{fig:f_feature}
\end{figure}

\begin{figure}[!htb]
\centering
\includegraphics[width=\linewidth]{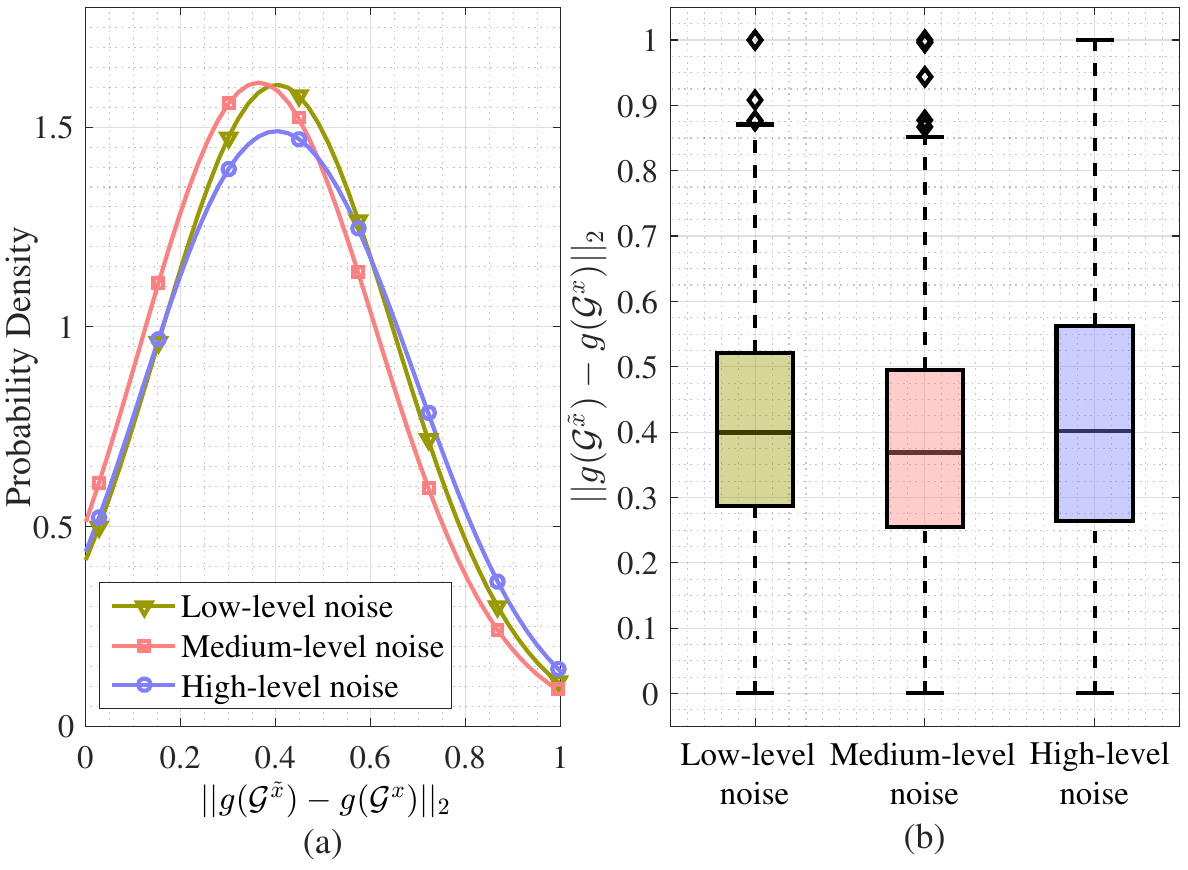}
\caption{
(a) Kernel density estimate plots and (b) box plots of $\| g(\calG^{\tilde x}) - g(\calG^{x}) \|_2$ based on the RobustMat (GAT-PDE) using the Oxford dataset.
}
\label{fig:g_feature}
\end{figure}

\subsection{Possible Applications}\label{sect:application}

In this section, some autonomous vehicle (AV) applications, as illustrated in \cref{fig:figure_vvapp_model}, are discussed.   

\textbf{Vehicular place recognition based on landmark maps:} 
RobustMat can be used to achieve local patch matching to determine the corresponding global image frame matching between the current frame extracted from a vehicle and keyframes from landmark maps \cite{sunderhauf2015place}. 
In the landmark maps, the landmark patches captured from a section of a road serve as the keyframe representatives for the beginning location of the road section. 
If a vehicular frame matches a particular keyframe representative, the location of the corresponding road section is regarded as the predicted location for the vehicle.
Furthermore, scenarios involving perturbations, such as variations in view angles, weather conditions, and illuminations, present common challenges in place recognition. To address these issues, RobustMat provides a promising solution based on robust landmark patch matching.
As demonstrated in \cref{Boreas-table} and \cref{fig_boreas_noise}, RobustMat performs robustly against various real environmental noises, such as weather and illuminations. This adaptability to noisy scenarios ensures the effectiveness and efficiency of RobustMat in place recognition tasks.

In addition, when dealing with a substantial landmark dataset for matching, RobustMat may require a longer processing time. 
To mitigate this issue, we can employ model optimization and pruning techniques \cite{liu2018rethinking,akiba2019optuna,fang2023structural} or explore other acceleration methods during the testing process.
To limit the number of landmark patches, we can select more appropriate landmark patches by setting pixel sizes and resolutions.  We can also leverage parallel computing for the matching inference to further reduce computational complexity.

\textbf{Odometry assistance:} For a moving vehicle, there may be shared landmarks between two frames captured from adjacent locations. RobustMat can match the landmarks in a current frame to those in the previous one. 
The matched landmarks can then serve as references for the LiDAR points or key visual pixels which are used in pixel- or point-level matching. More LiDAR points or key visual pixels on the matched landmarks can be used. This can be regarded as a kind of importance sampling, which makes LiDAR or keypoint odometry more accurate and efficient.
To further improve the accuracy of odometry estimation, it is promising to combine pixel- or point-level matching methods \cite{dusmanu2019d2,luo2020aslfeat,sarlin2020superglue,sun2021loftr} with our patch-level matching.
Specifically, the patch-level matching method can provide an attention region for the keypoints in pixel- or point-level methods, serving as a post-processing step to filter out weak corresponding keypoints.
One can also use the image patches of the shared landmarks and the LiDAR points on these landmarks as the input to a neural network to predict the odometry \cite{Kang22itsc}.  

Our method exploits only landmark patches rather than the whole image frame, thus saving storage memory and communication resources. 

\begin{figure}[!htb]
\centering
\includegraphics[width=\linewidth]{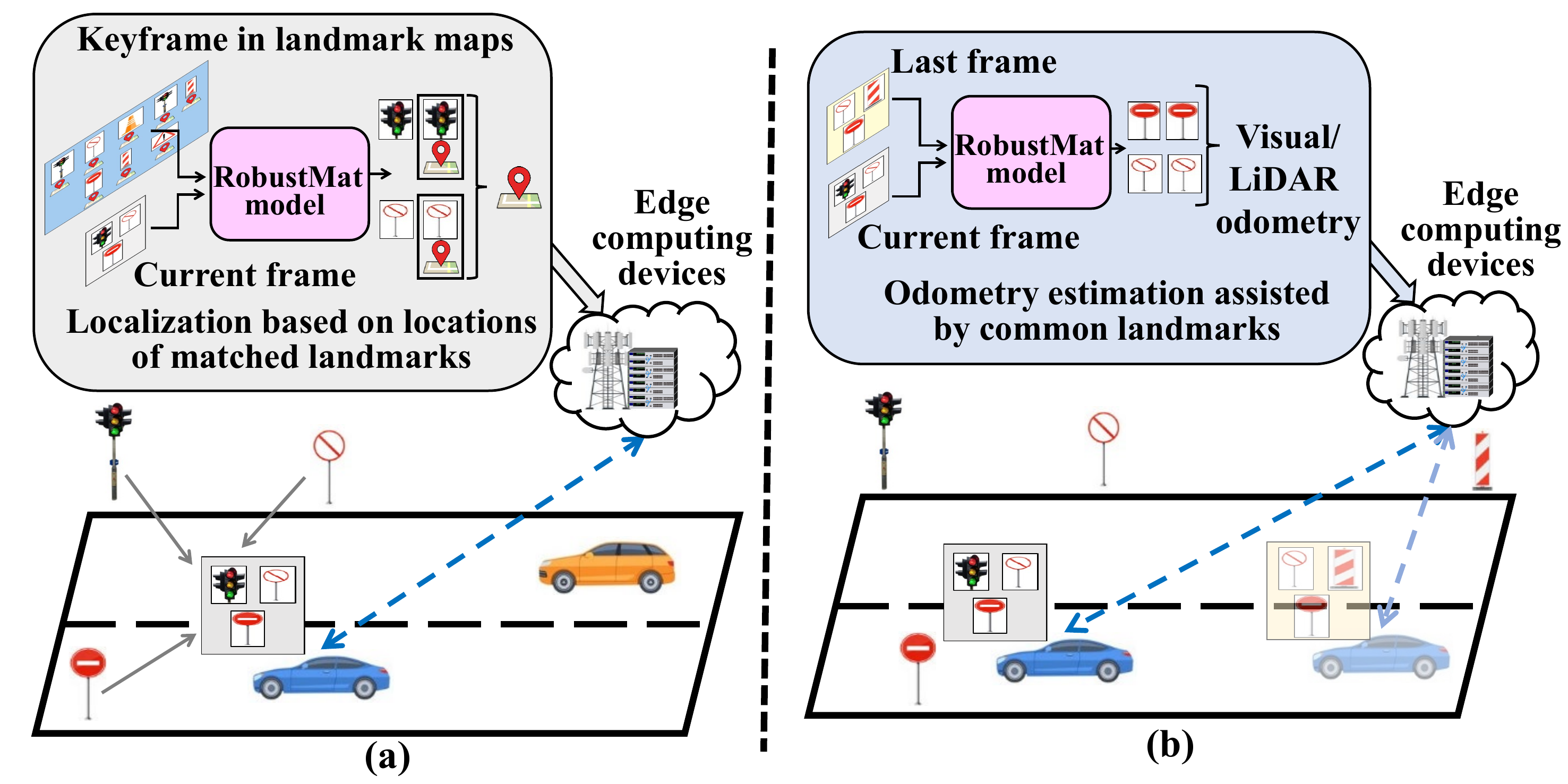}
\caption{The diagram of applications of RobustMat for AVs, where (a) is for vehicular place recognition based on landmark maps and (b) is for odometry assistance.}
\label{fig:figure_vvapp_model}
\end{figure}

\section{Conclusion}\label{sect:conc}

To improve the robustness of landmark patch matching, which is an essential component of autonomous driving systems, we have incorporated neural ODE/PDE modules for patch self-representation and patch neighborhood representation. Theoretical analysis shows the resilience of our approach against additive perturbations. Empirical evaluation on benchmark datasets has demonstrated the superiority of our approach over baseline methods, especially in the presence of noisy perturbations. Our results confirm the efficacy of using neural ODE/PDE modules for robust patch-matching.

An interesting future work is to adapt our approach to point-level matching methods or use our method as a post-processing step to emphasize regions where more important keypoints are. Methods mixing patch-level and point-level matching may further improve the accuracy and robustness.

\section*{Acknowledgments}
To improve the readability, parts of this paper have been grammatically revised using ChatGPT \cite{OpenAI}.

\appendices

\section{Proof of \cref{thm.Robustness_ode}}\label{app:thm.Robustness_ode}

Let $\tilde{z}(t)$ denote the trajectory $z(t)$ in \cref{eq:NODE} for the perturbed input $\tilde{z}(0)=h_\mathrm{DS}({\tilde x})$.
From \cref{ass:ass_ivp} and Gronwall’s Lemma \cite{Gronwall1998,yan2019robustness,dupont2019augmented}, we have $\norm{{\tilde z}(t)-z(t)} \le \exp{(C_0 t)}\norm{\tilde{z}(0)-z(0)}=\varepsilon \exp{(C_0 t)}$, where $C_0 \ge 0$ is a constant. 
For sufficiently small $\varepsilon$, $\norm{{\tilde z}^{(i)}(t)-z^{(i)}(t)}$ can be made arbitrarily small for all $t\in[0,T_f]$.
We then have 
\begin{align}
    \frac{h_{\mathrm{CNN}}^{(i)}({\tilde z}(t), t) - h_{\mathrm{CNN}}^{(i)}(z(t), t)}{{\tilde z}^{(i)}(t)-z^{(i)}(t)} = -C_{\mathrm{CNN}}^{(i)}(t),
\end{align}
where $C_{\mathrm{CNN}}^{(i)}(t)>0$ is an element of $C_{\mathrm{CNN}}(t)$, a $c\times H \times W$ matrix. 
I.e., we have 
\begin{align}
h_{\mathrm{CNN}}({\tilde z}(t), t) - h_{\mathrm{CNN}}(z(t), t) = -C_{\mathrm{CNN}}(t) \odot [{\tilde z}(t) - z(t) ],
\end{align}
where $\odot$ denotes the Hadamard product. 

For a sufficiently small time slot ${\Delta t}$ such that $ C_z^*\Delta t < 1$, we obtain
\begin{align}
& \big\| [h_{\mathrm{CNN}}({\tilde z}(t), t) - h_{\mathrm{CNN}}(z(t), t)]{\Delta t} + [{\tilde z}(t) - z(t) ] \big\|_2 \nn
& \le  (1 - C_{z}^*{\Delta t}) \|{\tilde z}(t) - z(t) \|_2. \label{eq.CNN_c_z*}
\end{align}
When $\Delta t \to 0$, we have 
\begin{align}
& \| {\tilde z}(t+{\Delta t}) - z(t+{\Delta t}) \|_2 - \|{\tilde z}(t) - z(t) \|_2 \nn
&= \big\| [{\tilde z}(t) + h_{\mathrm{CNN}}({\tilde z}(t), t){\Delta t}] - [ z(t) + h_{\mathrm{CNN}}(z(t), t){\Delta t} ] \big\|_2 \nn
& \quad - \|{\tilde z}(t) - z(t) \|_2 + o(\Delta t) \nn
& \le -C_{z}^*{\Delta t} \|{\tilde z}(t) - z(t) \|_2, \label{eq.Delta_0_CNN}
\end{align}
and 
\begin{align}
& \ddfrac{\|{\tilde z}(t) - z(t) \|_2}{t} \nn 
& = \lim_{{\Delta t}\to 0} \frac{\| {\tilde z}(t+{\Delta t}) - z(t+{\Delta t}) \|_2 - \|{\tilde z}(t) - z(t) \|_2}{\Delta t} \nn
&\le -C_{z}^* \|{\tilde z}(t) - z(t) \|_2. \label{eq.dd_z_CNN}
\end{align}
Furthermore, by integrating \cref{eq.dd_z_CNN} from $0$ to $T_f$, we have
\begin{align}
& \|{\tilde z}(T_f) - z(T_f) \|_2 \le \exp(-C_{z}^*T_f) \|{\tilde z}(0) - z(0) \|_2. \label{eq.dd_z_T_1}
\end{align}
Combining \cref{eq.dd_z_T_1} and \cref{ass:ass_fc}, we have
\begin{align}
    \| f(\tilde x) - f(x) \|_2 
    & \le C_{\mathrm{fc}} C_{\mathrm{fp}} \| {\tilde z}(T_f) -z(T_f) \|_2 \nn
    & \le C_{\mathrm{fc}}C_{\mathrm{fp}}  \exp{(-C_{z}^* T_f)} \| {\tilde z}(0) - z(0) \|_2 \nn
    & = \calO(\varepsilon\exp{(-C_{z}^* T_f)}),
\end{align}
which completes the proof.

\section{Proof of \cref{thm.Robustness_gde}}\label{app:thm.Robustness_gde}
Let $\tilde{Z}(t)$ denote the trajectory $Z(t)$ based on \cref{eq:GRAPH} for the perturbed input $\tilde{Z}(0)=(f(\tilde{v}))_{\tilde{v} \in \calV^{\tilde{x}}}$.
Similar to \cref{thm.Robustness_ode}, we have 
\begin{align}
    & \|\tilde Z(T_g) - Z(T_g) \|_2 \le \exp{(-C_A^* T_g)}  \|\tilde Z(0) - Z(0) \|_2.
\end{align}

From \cref{thm.Robustness_ode}, it follows that $\|\tilde Z(0) - Z(0) \|_2 \le \calO(\varepsilon)$.
From the Lipschitz continuous $h_\mathrm{gp}(\cdot)$ in \cref{ass:ass_Amatrix}, we have 
\begin{align}
    \| g(\calG^{\tilde x}) - g(\calG^x) \|_2 
    & \le C_{\mathrm{gp}} \|\tilde Z(T_g) - Z(T_g) \|_2 \nn
    & \le C_{\mathrm{gp}} \exp{(-C_A^* T_g)}  \|\tilde Z(0) - Z(0) \|_2 \nn
    & \le \calO(\varepsilon\exp{(-C_A^* T_g)}),
\end{align}
where $C_{\mathrm{gp}}$ is a Lipschitz constant. The proof is now complete. 

\bibliographystyle{IEEEtran}
\bibliography{IEEEabrv,StringDefinitions,manuscript_reference}

\begin{thebibliography}{10}
\providecommand{\url}[1]{#1}
\csname url@rmstyle\endcsname
\providecommand{\newblock}{\relax}
\providecommand{\bibinfo}[2]{#2}
\providecommand\BIBentrySTDinterwordspacing{\spaceskip=0pt\relax}
\providecommand\BIBentryALTinterwordstretchfactor{4}
\providecommand\BIBentryALTinterwordspacing{\spaceskip=\fontdimen2\font plus
\BIBentryALTinterwordstretchfactor\fontdimen3\font minus \fontdimen4\font\relax}
\providecommand\BIBforeignlanguage[2]{{%
\expandafter\ifx\csname l@#1\endcsname\relax
\typeout{** WARNING: IEEEtran.bst: No hyphenation pattern has been}%
\typeout{** loaded for the language `#1'. Using the pattern for}%
\typeout{** the default language instead.}%
\else
\language=\csname l@#1\endcsname
\fi
#2}}

\bibitem{wang2020deep}
S.~Wang, X.~Guo, Y.~Tie, L.~Qi, and L.~Guan, ``Deep local feature descriptor learning with dual hard batch construction,'' \emph{IEEE Trans. Image Process.}, vol.~29, pp. 9572--9583, Oct. 2020.

\bibitem{quan2021multi}
D.~Quan, S.~Wang, Y.~Li, B.~Yang, H.~Ning, J.~Chanussot, H.~Biao, and L.~Jiao, ``Multi-relation attention network for image patch matching,'' \emph{IEEE Trans. Image Process.}, vol.~30, pp. 7127--7142, Aug. 2021.

\bibitem{she2023image}
R.~She, Q.~Kang, S.~Wang, W.~P. Tay, Y.~L. Guan, D.~N. Navarro, and A.~Hartmannsgruber, ``Image patch-matching with graph-based learning in street scenes,'' \emph{IEEE Trans. Image Process.}, vol.~32, pp. 3465--3480, 2023.

\bibitem{wang2020robust}
Y.~Wang, Y.~Qiu, P.~Cheng, and X.~Duan, ``Robust loop closure detection integrating visual-spatial-semantic information via topological graphs and {CNN} features,'' \emph{J. Remote Sensing}, vol.~12, no.~23, p. 3890, Oct. 2020.

\bibitem{sunderhauf2015place}
N.~S{\"u}nderhauf, S.~Shirazi, A.~Jacobson, F.~Dayoub, E.~Pepperell, B.~Upcroft, and M.~Milford, ``Place recognition with convnet landmarks: Viewpoint-robust, condition-robust, training-free,'' in \emph{Proc. Robot.: Sci. Syst.}, 2015, pp. 5702--5708.

\bibitem{Wang2022robustloc}
S.~Wang, Q.~Kang, R.~She, W.~P. Tay, A.~Hartmannsgruber, and D.~N. Navarro, ``{RobustLoc}: Robust camera pose regression in challenging driving environments,'' in \emph{Proc. AAAI Conf. Artificial Intell.}, 2022, pp. 6209--6216.

\bibitem{Kang22itsc}
Q.~Kang, R.~She, S.~Wang, W.~Tay, D.~Navarro, and A.~Hartmannsgruber, ``Location learning for {AV}s: Lidar and image landmarks fusion localization with graph neural networks,'' in \emph{Proc. IEEE Int. Conf. Intell. Transp. Syst.}, 2022, pp. 1--6.

\bibitem{rublee2011orb}
E.~Rublee, V.~Rabaud, K.~Konolige, and G.~Bradski, ``{ORB}: An efficient alternative to {SIFT} or {SURF},'' in \emph{Proc. IEEE Int. Conf. Comput. Vision}, 2011, pp. 2564--2571.

\bibitem{alahi2012freak}
A.~Alahi, R.~Ortiz, and P.~Vandergheynst, ``Freak: Fast retina keypoint,'' in \emph{Proc. IEEE Int. Conf. Comput. Vision}, 2012, pp. 510--517.

\bibitem{tian2017l2}
Y.~Tian, B.~Fan, and F.~Wu, ``{L2-Net}: Deep learning of discriminative patch descriptor in euclidean space,'' in \emph{Proc. IEEE Int. Conf. Comput. Vision}, 2017, pp. 661--669.

\bibitem{Wang2023HypLiLoc}
S.~Wang, Q.~Kang, R.~She, W.~Wang, K.~Zhao, Y.~Song, and W.~P. Tay, ``{HypLiLoc: Towards effective LiDAR pose regression with hyperbolic fusion},'' in \emph{Proc. IEEE Int. Conf. Comput. Vision Pattern Recognit.}, 2023, pp. 5176--5185.

\bibitem{han2015matchnet}
X.~Han, T.~Leung, Y.~Jia, R.~Sukthankar, and A.~C. Berg, ``{MatchNet}: Unifying feature and metric learning for patch-based matching,'' in \emph{Proc. IEEE Int. Conf. Comput. Vision}, 2015, pp. 3279--3286.

\bibitem{quan2019afd}
D.~Quan, X.~Liang, S.~Wang, S.~Wei, Y.~Li, H.~Ning, and L.~Jiao, ``{AFD-Net}: Aggregated feature difference learning for cross-spectral image patch matching,'' in \emph{Proc. IEEE Int. Conf. Comput. Vision}, 2019, pp. 3017--3026.

\bibitem{sarlin2020superglue}
P.~E. Sarlin, D.~DeTone, T.~Malisiewicz, and A.~Rabinovich, ``{SuperGlue}: Learning feature matching with graph neural networks,'' in \emph{Proc. IEEE Int. Conf. Comput. Vision Pattern Recognit.}, 2020, pp. 4938--4947.

\bibitem{sun2021loftr}
J.~Sun, Z.~Shen, Y.~Wang, H.~Bao, and X.~Zhou, ``{LoFTR}: Detector-free local feature matching with transformers,'' in \emph{Proc. IEEE Int. Conf. Comput. Vision Pattern Recognit.}, 2021, pp. 8922--8931.

\bibitem{subramaniam2018ncc}
A.~Subramaniam, P.~Balasubramanian, and A.~Mittal, ``{NCC-Net}: Normalized cross correlation based deep matcher with robustness to illumination variations,'' in \emph{Proc. IEEE Winter Conf. Appl. Comput. Vision}, 2018, pp. 1944--1953.

\bibitem{tian2019sosnet}
Y.~Tian, X.~Yu, B.~Fan, F.~Wu, H.~Heijnen, and V.~Balntas, ``{SOSNet}: Second order similarity regularization for local descriptor learning,'' in \emph{Proc. IEEE Int. Conf. Comput. Vision}, 2019, pp. 11\,016--11\,025.

\bibitem{wang2019better}
S.~Wang, Y.~Li, X.~Liang, D.~Quan, B.~Yang, S.~Wei, and L.~Jiao, ``Better and faster: Exponential loss for image patch matching,'' in \emph{Proc. IEEE Int. Conf. Comput. Vision}, 2019, pp. 4812--4821.

\bibitem{ng2020solar}
T.~Ng, V.~Balntas, Y.~Tian, and K.~Mikolajczyk, ``{SOLAR}: Second-order loss and attention for image retrieval,'' in \emph{Proc. Eur. Conf. Comput. Vision}, 2020, pp. 253--270.

\bibitem{miao2021learning}
Y.~Miao, Z.~Lin, X.~Ma, G.~Ding, and J.~Han, ``Learning transformation-invariant local descriptors with low-coupling binary codes,'' \emph{IEEE Trans. Image Process.}, vol.~30, pp. 7554--7566, Aug. 2021.

\bibitem{dusmanu2019d2}
M.~Dusmanu, I.~Rocco, T.~Pajdla, M.~Pollefeys, J.~Sivic, A.~Torii, and T.~Sattler, ``D2-net: A trainable {CNN} for joint description and detection of local features,'' in \emph{Proc. IEEE Int. Conf. Comput. Vision Pattern Recognit.}, 2019, pp. 8092--8101.

\bibitem{luo2020aslfeat}
Z.~Luo, L.~Zhou, X.~Bai, H.~Chen, J.~Zhang, Y.~Yao, S.~Li, T.~Fang, and L.~Quan, ``{ASLFeat}: Learning local features of accurate shape and localization,'' in \emph{Proc. IEEE Int. Conf. Comput. Vision Pattern Recognit.}, 2020, pp. 6589--6598.

\bibitem{chen2018neural}
R.~T. Chen, Y.~Rubanova, J.~Bettencourt, and D.~K. Duvenaud, ``Neural ordinary differential equations,'' \emph{Advances Neural Inf. Process. Syst.}, 2018.

\bibitem{kang2021Neurips}
Q.~Kang, Y.~Song, Q.~Ding, and W.~P. Tay, ``Stable neural {ODE} with {Lyapunov-stable} equilibrium points for defending against adversarial attacks,'' \emph{Advances Neural Inf. Process. Syst.}, 2021.

\bibitem{chamberlain2021grand}
B.~P. Chamberlain, J.~Rowbottom, M.~Goronova, S.~Webb, E.~Rossi, and M.~M. Bronstein, ``{GRAND}: Graph neural diffusion,'' in \emph{Proc. Int. Conf. Mach. Learn.}, 2021.

\bibitem{song2022robustness}
Y.~Song, Q.~Kang, S.~Wang, Z.~Kai, and W.~P. Tay, ``On the robustness of graph neural diffusion to topology perturbations,'' in \emph{Advances Neural Inf. Process. Syst.}, 2022.

\bibitem{melekhov2016siamese}
I.~Melekhov, J.~Kannala, and E.~Rahtu, ``Siamese network features for image matching,'' in \emph{Proc. IEEE Int. Conf. Pattern Recognit.}, 2016, pp. 378--383.

\bibitem{cao2020unifying}
B.~Cao, A.~Araujo, and J.~Sim, ``Unifying deep local and global features for image search,'' in \emph{Proc. Eur. Conf. Comput. vision}, 2020, pp. 726--743.

\bibitem{lee2022correlation}
S.~Lee, H.~Seong, S.~Lee, and E.~Kim, ``Correlation verification for image retrieval,'' in \emph{Proc. IEEE Int. Conf. Comput. Vision Pattern Recognit.}, 2022, pp. 5374--5384.

\bibitem{zhang2022leveraging}
P.~Zhang, C.~Zhang, B.~Liu, and Y.~Wu, ``Leveraging local and global descriptors in parallel to search correspondences for visual localization,'' \emph{Pattern Recognit.s}, vol. 122, p. 108344, 2022.

\bibitem{kumar2016learning}
V.~Kumar~BG, G.~Carneiro, and I.~Reid, ``Learning local image descriptors with deep siamese and triplet convolutional networks by minimising global loss functions,'' in \emph{Proc. IEEE Int. Conf. Comput. Vision}, 2016, pp. 5385--5394.

\bibitem{balntas2016learning}
V.~Balntas, E.~Riba, D.~Ponsa, and K.~Mikolajczyk, ``Learning local feature descriptors with triplets and shallow convolutional neural networks,'' in \emph{Proc. British Mach. Vision Conf.}, 2016, pp. 1--11.

\bibitem{zhang2017learning}
X.~Zhang, F.~X. Yu, S.~Kumar, and S.~F. Chang, ``Learning spread-out local feature descriptors,'' in \emph{Proc. IEEE Int. Conf. Comput. Vision}, 2017, pp. 4595--4603.

\bibitem{mishchuk2017working}
A.~Mishchuk, D.~Mishkin, F.~Radenovic, and J.~Matas, ``Working hard to know your neighbor's margins: Local descriptor learning loss,'' in \emph{Neural Inform. Process. Syst.}, 2017, pp. 1--10.

\bibitem{tian2020hynet}
Y.~Tian, A.~Barroso~Laguna, T.~Ng, V.~Balntas, and K.~Mikolajczyk, ``{HyNet}: Learning local descriptor with hybrid similarity measure and triplet loss,'' \emph{Advances Neural Inf. Process. Syst.}, 2020.

\bibitem{pan2021tcdesc}
H.~Pan, Y.~Chen, Z.~He, F.~Meng, and N.~Fan, ``{TCDesc}: Learning topology consistent descriptors for image matching,'' \emph{IEEE Trans. Circuits Syst. Video Technol.}, vol.~32, no.~5, pp. 2845--2855, 2021.

\bibitem{yi2016lift}
K.~M. Yi, E.~Trulls, V.~Lepetit, and P.~Fua, ``{LIFT}: Learned invariant feature transform,'' in \emph{Proc. Eur. Conf. Comput. Vision}, 2016, pp. 467--483.

\bibitem{detone2018superpoint}
D.~DeTone, T.~Malisiewicz, and A.~Rabinovich, ``Superpoint: Self-supervised interest point detection and description,'' in \emph{Proc. IEEE Int. Conf. Comput. Vision Pattern Recognit. Workshops}, 2018, pp. 224--236.

\bibitem{brogan2021fast}
J.~Brogan, A.~Bharati, D.~Moreira, A.~Rocha, K.~W. Bowyer, P.~J. Flynn, and W.~J. Scheirer, ``Fast local spatial verification for feature-agnostic large-scale image retrieval,'' \emph{IEEE Trans. Image Process.}, vol.~30, pp. 6892--6905, 2021.

\bibitem{chen2022robust}
J.~Chen, M.~Yang, C.~Peng, L.~Luo, and W.~Gong, ``Robust feature matching via local consensus,'' \emph{IEEE Trans. Geosci. Remote Sens.}, vol.~60, pp. 1--16, 2022.

\bibitem{jiang2022robust}
X.~Jiang, Y.~Xia, X.-P. Zhang, and J.~Ma, ``Robust image matching via local graph structure consensus,'' \emph{Pattern Recognit.}, vol. 126, p. 108588, 2022.

\bibitem{chen2021statenet}
J.~Chen, S.~Chen, X.~Chen, Y.~Yang, and Y.~Rao, ``{StateNet}: Deep state learning for robust feature matching of remote sensing images,'' \emph{IEEE Trans. Neural Netw. Learn. Syst.}, vol.~34, no.~7, pp. 3284--3298, 2023.

\bibitem{huang2020robust}
H.~Huang, J.~Chen, X.~Li, L.~Wang, and Y.~Fang, ``Robust image matching by dynamic feature selection,'' \emph{arXiv preprint arXiv:2008.05708}, 2020.

\bibitem{jiang2020robust}
X.~Jiang, J.~Ma, A.~Fan, H.~Xu, G.~Lin, T.~Lu, and X.~Tian, ``Robust feature matching for remote sensing image registration via linear adaptive filtering,'' \emph{IEEE Trans. Geosci. Remote Sens.}, vol.~59, no.~2, pp. 1577--1591, 2020.

\bibitem{yang2019robust}
H.~Yang, C.~Huang, F.~Wang, K.~Song, and Z.~Yin, ``Robust semantic template matching using a superpixel region binary descriptor,'' \emph{IEEE Trans. Image Process.}, vol.~28, no.~6, pp. 3061--3074, 2019.

\bibitem{mousavi2022two}
V.~Mousavi, M.~Varshosaz, F.~Remondino, S.~Pirasteh, and J.~Li, ``A two-step descriptor-based keypoint filtering algorithm for robust image matching,'' \emph{IEEE Trans. Geosci. Remote Sens.}, vol.~60, pp. 1--21, 2022.

\bibitem{wiles2021co}
O.~Wiles, S.~Ehrhardt, and A.~Zisserman, ``Co-attention for conditioned image matching,'' in \emph{Proc. IEEE Int. Conf. Comput. Vision}, 2021, pp. 15\,920--15\,929.

\bibitem{li2019rift}
J.~Li, Q.~Hu, and M.~Ai, ``{RIFT}: Multi-modal image matching based on radiation-variation insensitive feature transform,'' \emph{IEEE Trans. Image Process.}, vol.~29, pp. 3296--3310, 2019.

\bibitem{zamir2022restormer}
S.~W. Zamir, A.~Arora, S.~Khan, M.~Hayat, F.~S. Khan, and M.-H. Yang, ``Restormer: Efficient transformer for high-resolution image restoration,'' in \emph{Proc. IEEE Int. Conf. Comput. Vision Pattern Recognit.}, 2022, pp. 5728--5739.

\bibitem{wang2022uformer}
Z.~Wang, X.~Cun, J.~Bao, W.~Zhou, J.~Liu, and H.~Li, ``Uformer: A general u-shaped transformer for image restoration,'' in \emph{Proc. IEEE Int. Conf. Comput. Vision Pattern Recognit.}, 2022, pp. 17\,683--17\,693.

\bibitem{YananZhao2023}
Y.~Zhao, Y.~Li, H.~Zhang, V.~Monga, and Y.~C. Eldar, ``A convergent neural network for non-blind image deblurrin,'' in \emph{Proc. IEEE Int. Conf. Image Process.}, 2023, pp. 1--5.

\bibitem{tu2022maxim}
Z.~Tu, H.~Talebi, H.~Zhang, F.~Yang, P.~Milanfar, A.~Bovik, and Y.~Li, ``Maxim: Multi-axis mlp for image processing,'' in \emph{Proc. IEEE Int. Conf. Comput. Vision Pattern Recognit.}, 2022, pp. 5769--5780.

\bibitem{zamir2021multi}
S.~W. Zamir, A.~Arora, S.~Khan, M.~Hayat, F.~S. Khan, M.-H. Yang, and L.~Shao, ``Multi-stage progressive image restoration,'' in \emph{Proc. IEEE Int. Conf. Comput. Vision Pattern Recognit.}, 2021, pp. 14\,821--14\,831.

\bibitem{chen2021hinet}
L.~Chen, X.~Lu, J.~Zhang, X.~Chu, and C.~Chen, ``{HINet}: Half instance normalization network for image restoration,'' in \emph{Proc. IEEE Int. Conf. Comput. Vision Pattern Recognit.}, 2021, pp. 182--192.

\bibitem{gu2019self}
S.~Gu, Y.~Li, L.~V. Gool, and R.~Timofte, ``Self-guided network for fast image denoising,'' in \emph{Proc. IEEE Int. Conf. Comput. Vision}, 2019, pp. 2511--2520.

\bibitem{ren2016faster}
S.~Ren, K.~He, R.~Girshick, and J.~Sun, ``Faster {R-CNN}: Towards real-time object detection with region proposal networks,'' \emph{IEEE Trans. Pattern Anal. Mach. Intell.}, vol.~39, no.~6, pp. 1137--1149, June 2016.

\bibitem{yan2019robustness}
H.~Yan, J.~Du, V.~Y. Tan, and J.~Feng, ``On robustness of neural ordinary differential equations,'' \emph{arXiv preprint arXiv:1910.05513}, 2019.

\bibitem{dupont2019augmented}
E.~Dupont, A.~Doucet, and Y.~W. Teh, ``{Augmented neural ODEs},'' in \emph{Proc. Conf. Neural Inform. Process. Syst.}, 2019, pp. 1--11.

\bibitem{geiger2012we}
A.~Geiger, P.~Lenz, and R.~Urtasun, ``Are we ready for autonomous driving? {T}he {KITTI} vision benchmark suite,'' in \emph{Proc. IEEE Int. Conf. Comput. Vision}, 2012, pp. 3354--3361.

\bibitem{barnes2020oxford}
D.~Barnes, M.~Gadd, P.~Murcutt, P.~Newman, and I.~Posner, ``The {Oxford} {Radar} {RobotCar} {Dataset}: A radar extension to the {Oxford} {RobotCar} {Dataset},'' in \emph{Proc. IEEE Int. Conf. Robot. Automat.}, 2020, pp. 6433--6438.

\bibitem{burnett2023boreas}
K.~Burnett, D.~J. Yoon, Y.~Wu, A.~Z. Li, H.~Zhang, S.~Lu, J.~Qian, W.-K. Tseng, A.~Lambert, K.~Y. Leung, \emph{et~al.}, ``{Boreas: A multi-season autonomous driving dataset},'' \emph{Int. J. Robot. Res.}, vol.~42, no. 1-2, pp. 33--42, 2023.

\bibitem{eliasof2021pde}
M.~Eliasof, E.~Haber, and E.~Treister, ``{PDE-GCN}: Novel architectures for graph neural networks motivated by partial differential equations,'' \emph{Advances Neural Inf. Process. Syst.}, 2021.

\bibitem{torchdiffeq}
\BIBentryALTinterwordspacing
R.~T.~Q. Chen, ``{Torchdiffeq},'' 2018. [Online]. Available: \url{https://github.com/rtqichen/torchdiffeq}
\BIBentrySTDinterwordspacing

\bibitem{liu2018rethinking}
Z.~Liu, M.~Sun, T.~Zhou, G.~Huang, and T.~Darrell, ``Rethinking the value of network pruning,'' in \emph{Proc. Int. Conf. Learn. Representations}, 2019, pp. 1--13.

\bibitem{akiba2019optuna}
T.~Akiba, S.~Sano, T.~Yanase, T.~Ohta, and M.~Koyama, ``Optuna: A next-generation hyperparameter optimization framework,'' in \emph{Proc. ACM SIGKDD Int. Conf. Knowl. Discov. \& Data Mining}, 2019, pp. 2623--2631.

\bibitem{fang2023structural}
G.~Fang, X.~Ma, and X.~Wang, ``Structural pruning for diffusion models,'' \emph{arXiv preprint arXiv:2305.10924}, 2023.

\bibitem{OpenAI}
\BIBentryALTinterwordspacing
OpenAI, ``{ChatGPT [Large language model]},'' 2023. [Online]. Available: \url{https://chat.openai.com/chat}
\BIBentrySTDinterwordspacing

\bibitem{Gronwall1998}
\BIBentryALTinterwordspacing
R.~Howard, \emph{The {Gronwall} Inequality}, 1998. [Online]. Available: \url{http://people.math.sc.edu/howard/Notes/gronwall.pdf}
\BIBentrySTDinterwordspacing

\end{thebibliography}


\newpage
\text{ }
\newpage

\section*{Supplementary Material}

\subsection{Dataset Preparation Details}\label[Appendix]{app.Landmarkpre}

\subsubsection{Landmark Patch Preparation}

To provide the training and testing data for the landmark patch matching task, we make use of the full-sized street image datasets including KITTI dataset\footnote{http://www.cvlibs.net/datasets/kitti/} and the Oxford Radar RobotCar dataset\footnote{http://ori.ox.ac.uk/datasets/radar-robotcar-dataset}, where $3$D LiDAR points are used for the ground truth matching.

In the literature of landmark-based applications, Edge Boxes are used to detect a bounding box around a patch that contains a large number of internal contours compared to the number of contours exiting from the box, which indicates the presence of an object in the enclosed patch. The DeepLabV3+ is used to extract significant landmark regions. However, all of the aforementioned patch extraction or landmark detection approaches are not stable when removing dynamic objects and many noisy regions are presented. 
By contrast, in our datasets, we exploit Faster R-CNN as the stable landmark object detector to locate the region of interest for static roadside objects including traffic lights, traffic signs, poles, and windows. 

In Faster R-CNN, we choose Resnet50 with Feature Pyramid Network (FPN) as the backbone, which is already pre-trained on the Imagenet dataset. During training, we use Adam optimizer with learning rate $0.0002$ and weight decay $0.0001$ to train the detector for $50$ epochs. The training batch size is set as $2$ and random horizontal flipping is used for data augmentation. For the KITTI and Oxford Radar RobotCar datasets, the stereo cameras capture full-sized image frames, and we use only \emph{the left frames} to extract landmark patches. The ground truth landmark object bounding box labels are from the segmentation masks for the KITTI dataset and manually collected for the Oxford Radar RobotCar dataset, respectively. 

To add environmental perturbations or noises to the patches, we exploit the Python library ``imgaug''\footnote{https://imgaug.readthedocs.io/en/latest/}. 
In particular, the perturbations include additive white Gaussian noise, image corruption (like rain and spatter), and brightness and saturation changes.  

To obtain the matching ground truth for patch pairs, vehicle ground truth locations and collected LiDAR scans are used. Similar to the processing in other patch-matching datasets, a $3$D LiDAR reference map is built to find the landmark locations after projecting $3$D LiDAR points to the frames. Then, the $\mathcal{L}_2$ distance between two landmark patches is computed, which determines the ground truth of the patch matching. 

More details for the KITTI and Oxford Noisy Landmark Patches are given as follows.

\begin{table*}[!htb]
\caption{Neural network modules and the parameters.}
\label{model-setting}
\begin{center}
\newcommand{\tabincell}[2]{\begin{tabular}{@{}#1@{}}#2\end{tabular}}
\begin{tabular}{cclc}
\toprule\toprule
\multicolumn{1}{c}{\bf Mapping}  & {\bf Models} & {\bf Layers (model parameters) } 
& \tabincell{c}{{\bf Dimension} {\bf of Outputs} }
\\ \midrule
$f$ & \tabincell{c}{Vetex-diffusion \\ embedding\\}      
                        & \tabincell{l}{A CNN (with 7 convolutional layers) \& \\
                        a CNN-based neural ODE (with the two-convolutional layer CNN \\
                        as the ODE function, the integration time as [0,1], the relative and absolute \\
                        tolerances both as 0.01, in the module ``odeint'') \& \\ 
                        an average pooling and an FC layer\\}
                        & \tabincell{c}{128*64*64 \\ 128*64*64\\ \ \\ \  \\ 512\\} \\ \midrule
\multirow{1}{*}{$g$} 
    & \tabincell{c}{Graph-diffusion \\ embedding (with \\ GAT-based/ \\ GCN-based neural PDE) \\}       
                        & \tabincell{l}{The ``odeint'' module for ODE solution with the integration time as [0,1], \\ 
                                        the relative and absolute tolerances both as 0.001, the method as ``dopri5'', \\
                                        the ODE function as the GAT/GCN including\\
                                        GAT block 1 (4 attention heads, 
                                        4*128 hidden features \& ELU) \\
                                        GAT block 2 (4 attention heads, 
                                        4*128 hidden features \& ELU)/ \\
                                        GCN block 1 (512 hidden features \& ReLU) \\
                                        GCN block 2 (512 hidden features \& ReLU)\\
                                        } 
                        & \tabincell{l}{512}\\ \midrule
$r$ & \tabincell{l}{FC Neural Network\\}
                        & \tabincell{l}{Linear layers ([1024,512,256] 
                                        hidden features \& ReLU) \\
                        Linear layer (1 hidden feature \& Sigmoid function) \\}
                        & \tabincell{c}{256\\ 1\\} \\ \midrule
$d$ & Discriminator      & \tabincell{l}{Bilinear layer (one 512*512 hidden matrix 
                                        \& Sigmoid function)\\} 
                        & \tabincell{l}{1\\} \\
\bottomrule\bottomrule
\end{tabular}
\end{center}
\end{table*}

\begin{figure}[!htb]
\centering
\includegraphics[width=0.5\linewidth, height=0.06\textheight]{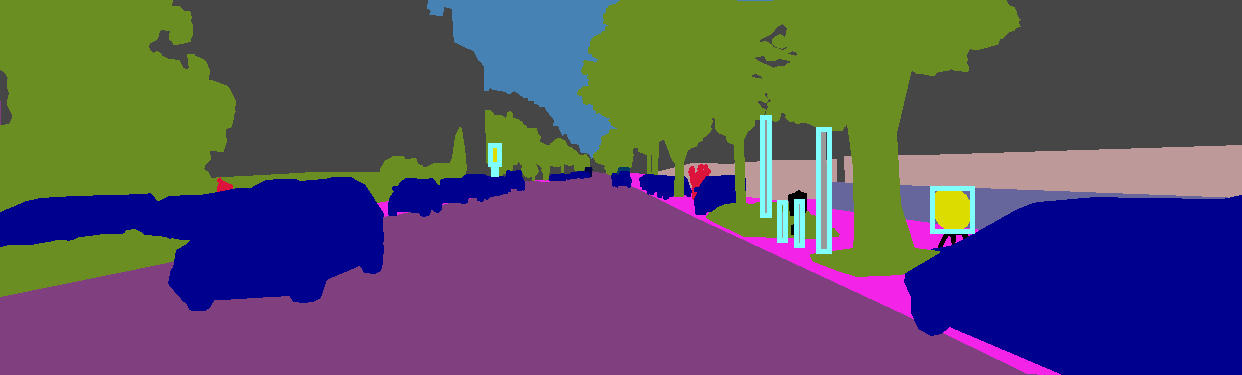}\hfill
\includegraphics[width=0.5\linewidth, height=0.06\textheight]{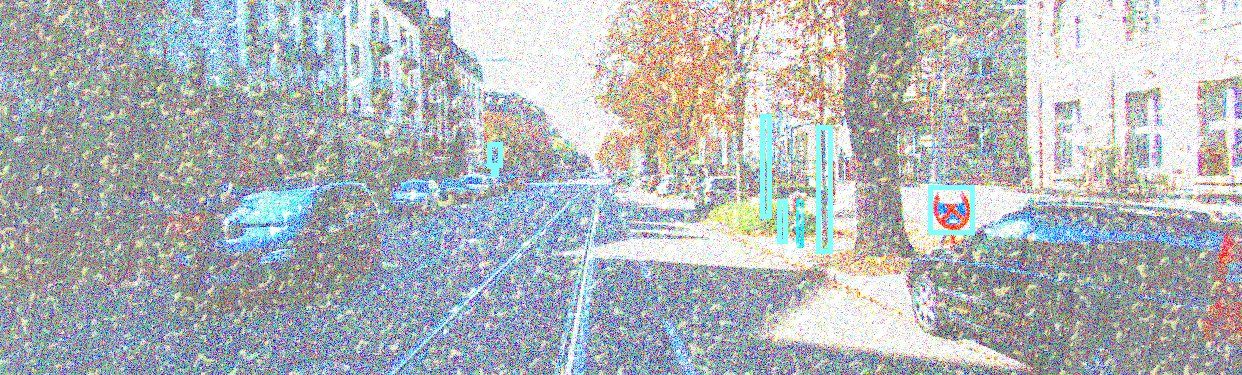}\hfill
\includegraphics[width=0.5\linewidth, height=0.06\textheight]{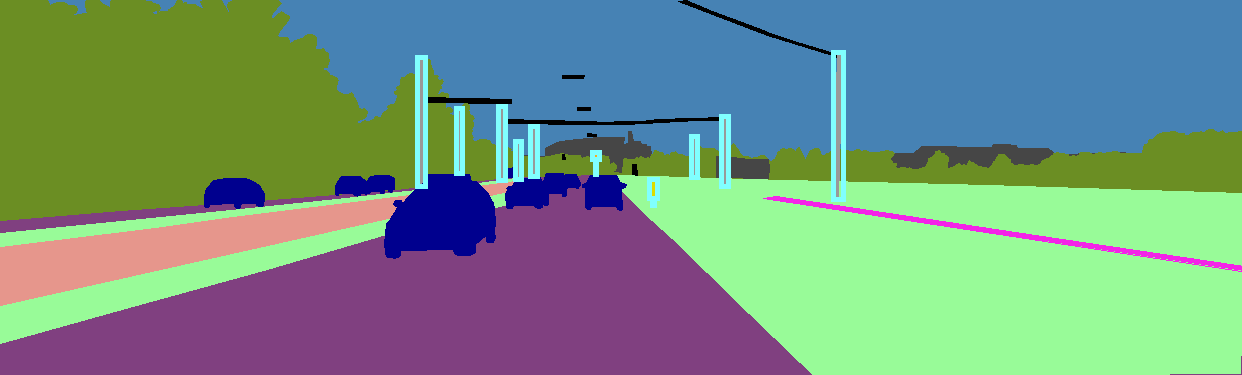}\hfill
\includegraphics[width=0.5\linewidth, height=0.06\textheight]{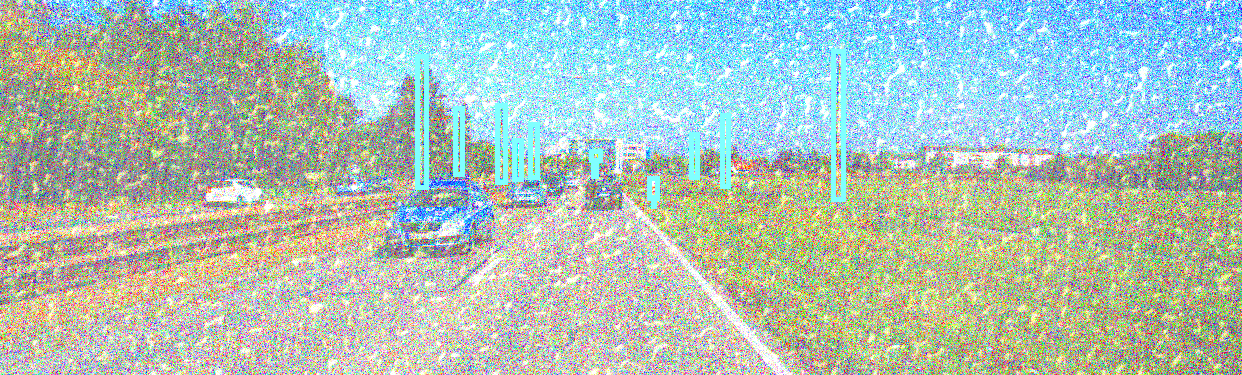}\hfill
\includegraphics[width=0.5\linewidth, height=0.06\textheight]{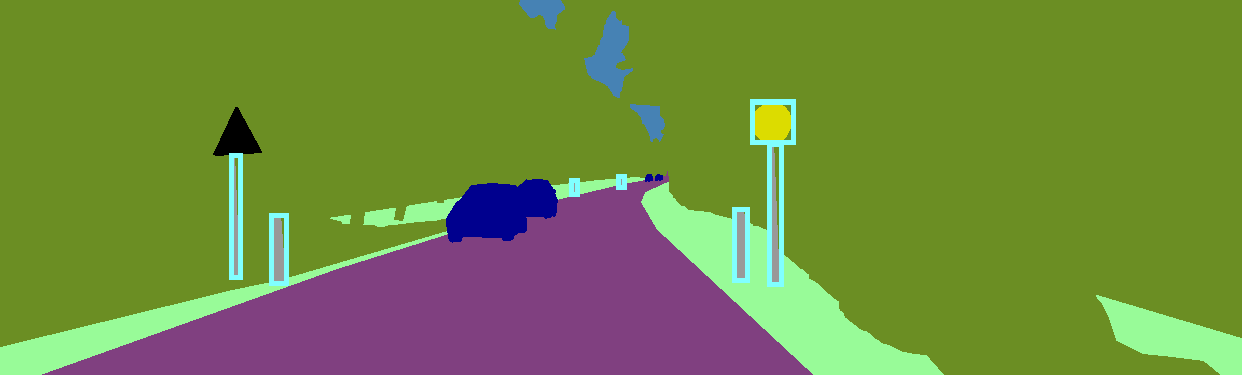}\hfill
\includegraphics[width=0.5\linewidth, height=0.06\textheight]{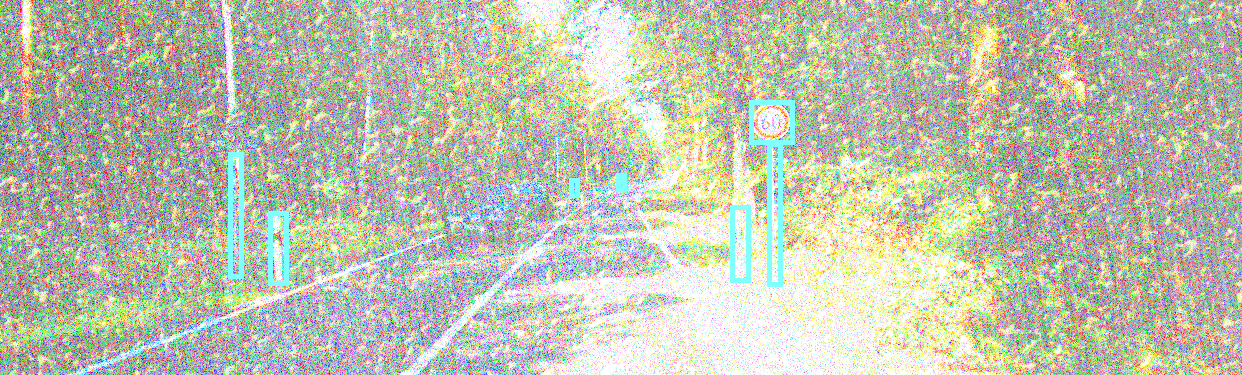}\hfill
\includegraphics[width=0.5\linewidth, height=0.06\textheight]{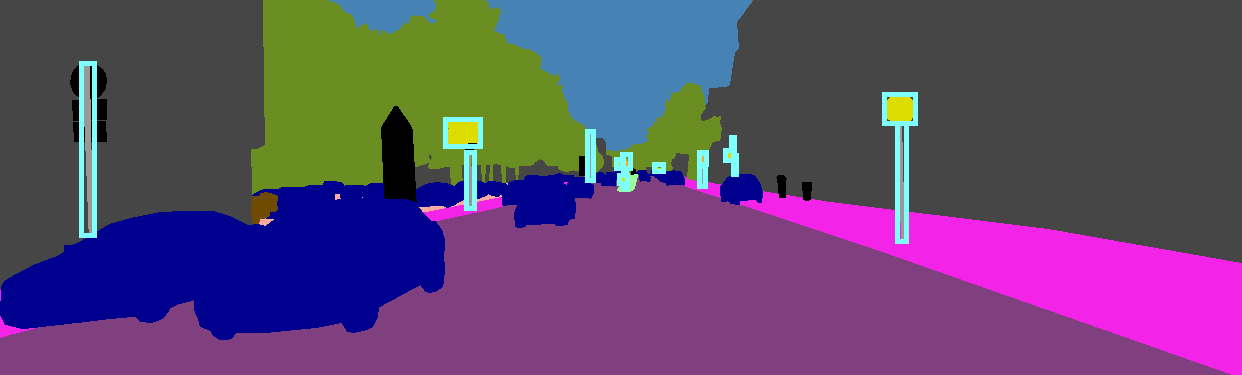}\hfill
\includegraphics[width=0.5\linewidth, height=0.06\textheight]{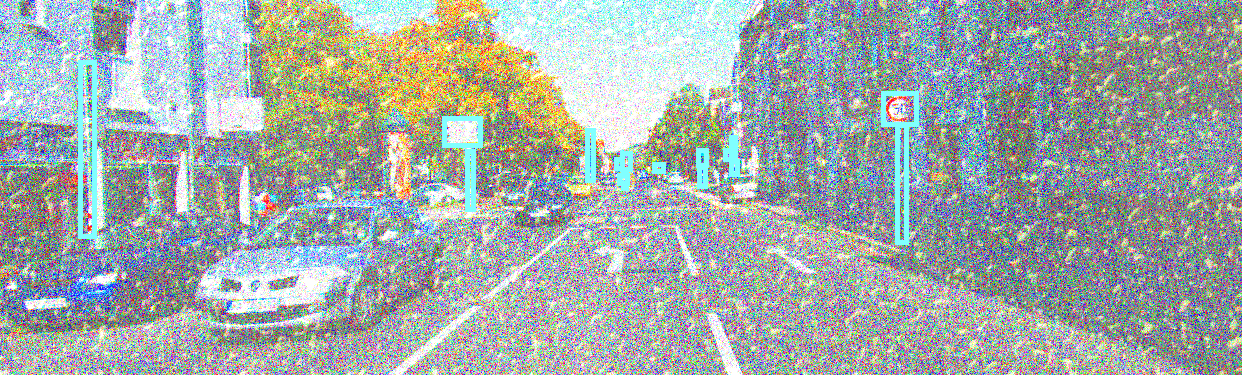}\hfill
\caption{Examples of ground truth landmark bounding box labels based on semantic segmentation masks in the KITTI dataset, w.r.t.\ the semantic segmentation images and the noisy images.
}
\label{fig:label_kitti}
\end{figure}

\textbf{KITTI Noisy Landmark Patches.}
%
The object labels are semantic segmentation masks, which can be used for landmark patch acquisition. 
To perform landmark object detection, we need to first convert the semantic segmentation labels to object bounding box labels. We use the ``skimage.measure.label'' to label connected regions for pixel classes including traffic lights, traffic signs and poles. See \cref{fig:label_kitti} for an example. In some rare cases, multiple poles overlap, and the connected region algorithm outputs an inaccurate bounding box. We exclude these overlapped objects in the generated bounding box labels. We project the surrounding LiDAR points, as mentioned in the section on dataset preparation, onto the image frame plane using the intrinsic camera matrix and extrinsic camera matrix. Here, we have used the sensors' information (i.e., vehicle global ground truth locations) to accumulate collected LiDAR scans to build the $3$D LiDAR reference map. Due to the limited LiDAR field of view, a single LiDAR scan may not have any LiDAR point corresponding to some landmarks. To avoid this, we build a $3$D LiDAR reference map similar to that in PointNetVlad. 
We have also gone through all the frames manually to remove or correct a few landmark objects with low quality. Finally, for each detected landmark object, we intentionally expand its bounding box by $15$ pixels on each side to include some background information. 

\begin{figure}[!htb]
\centering
\includegraphics[width=0.5\linewidth, height=0.12\textheight]{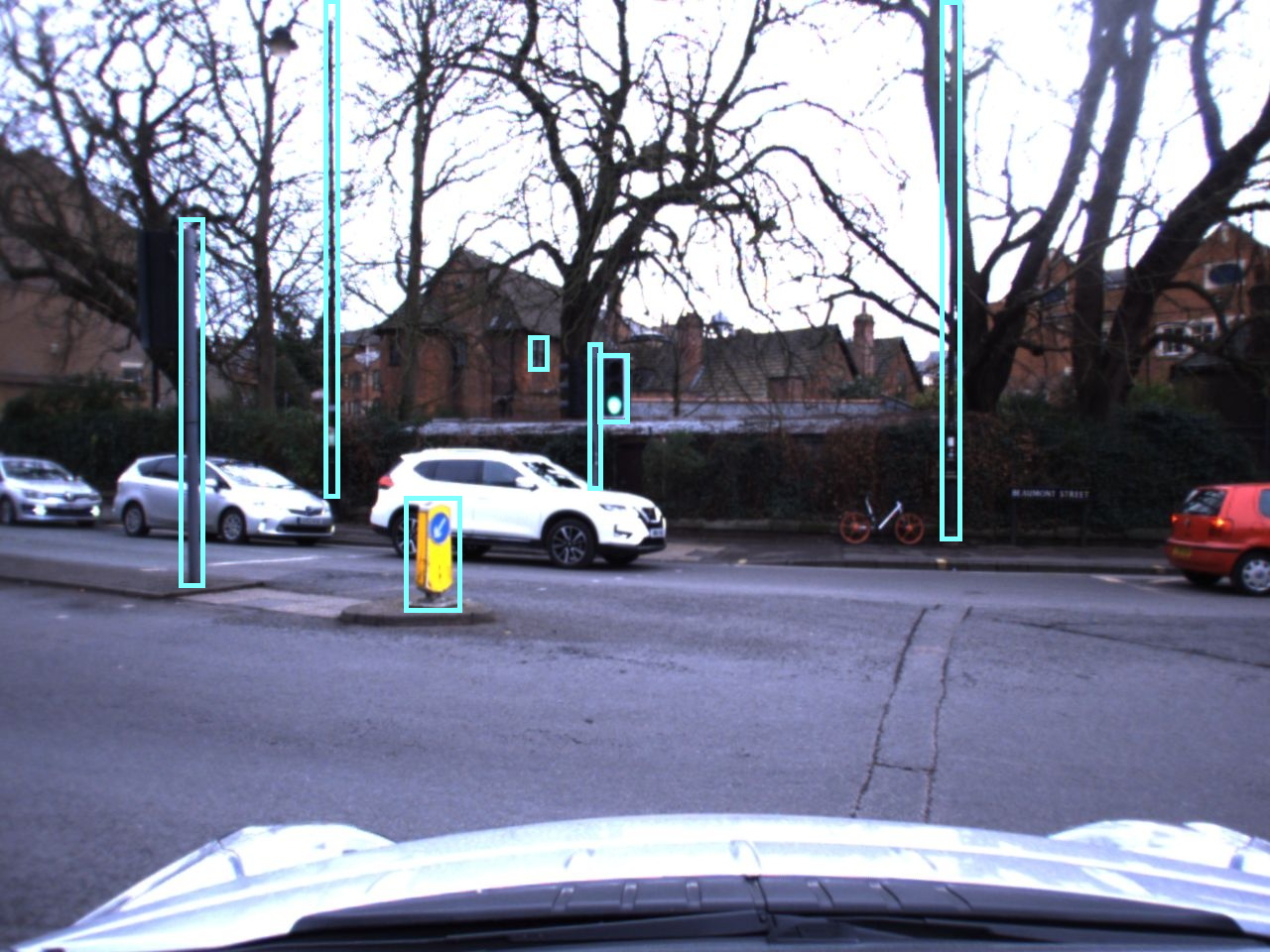}\hfill
\includegraphics[width=0.5\linewidth, height=0.12\textheight]{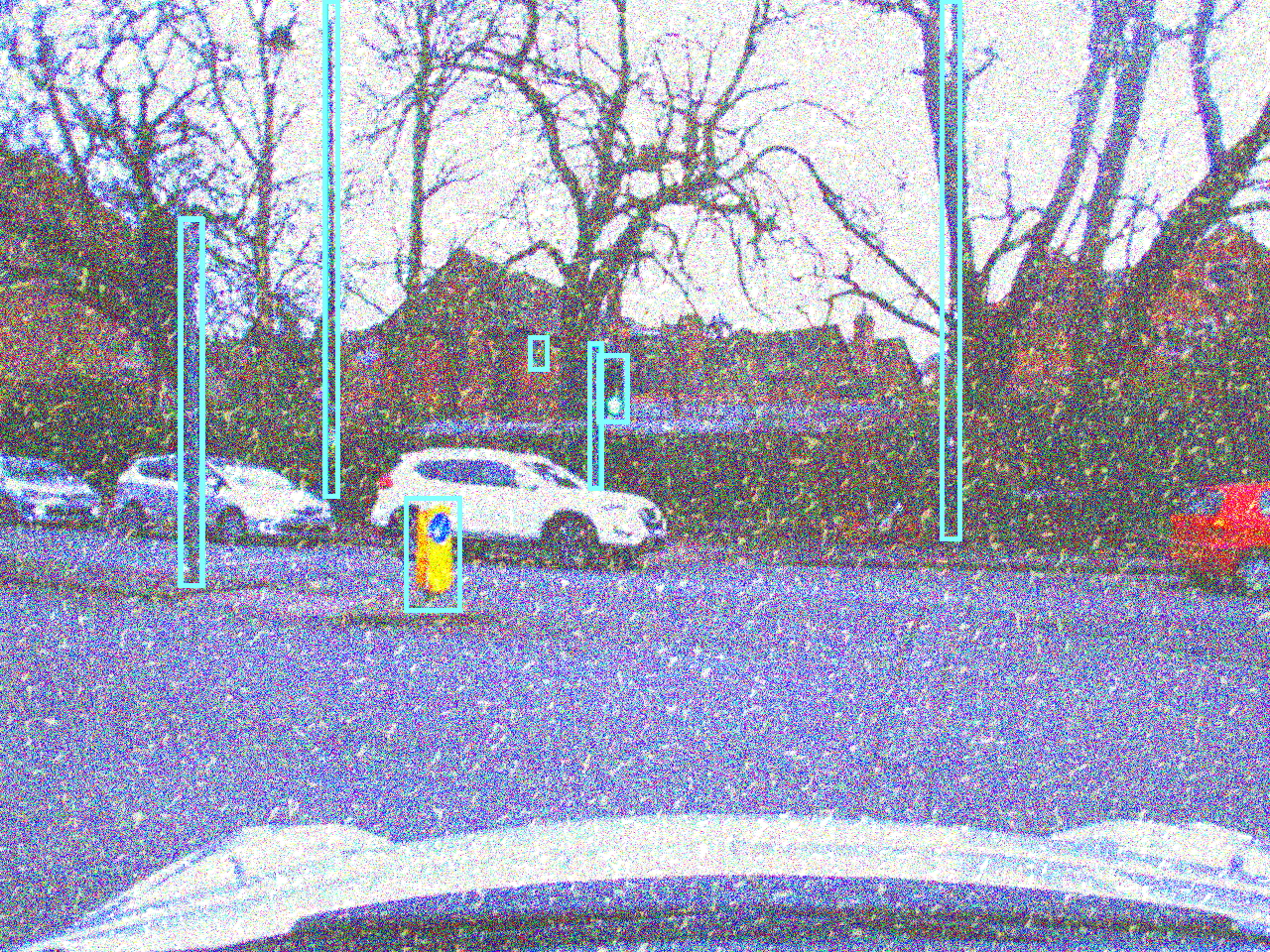}\hfill
\includegraphics[width=0.5\linewidth, height=0.12\textheight]{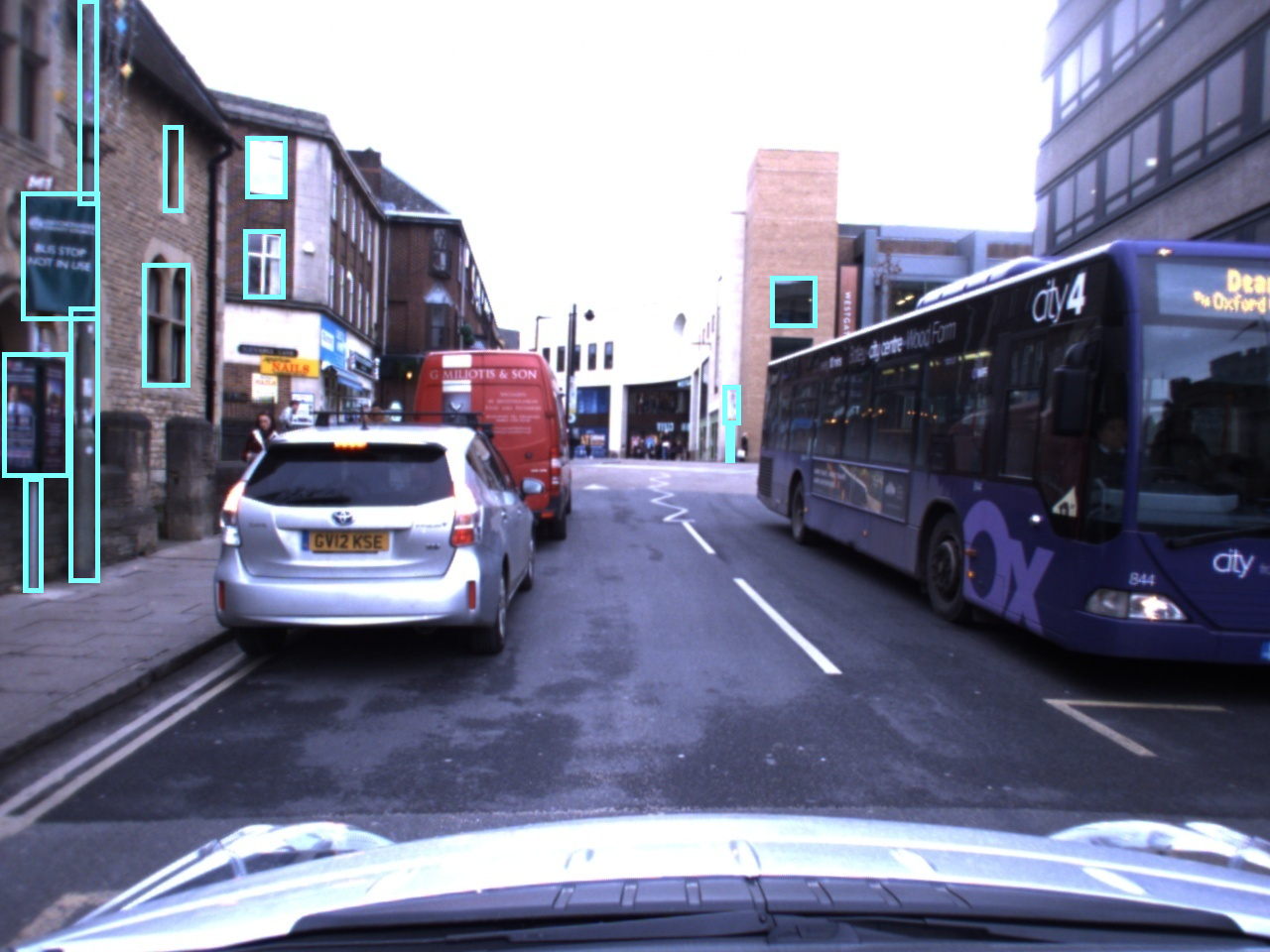}\hfill
\includegraphics[width=0.5\linewidth, height=0.12\textheight]{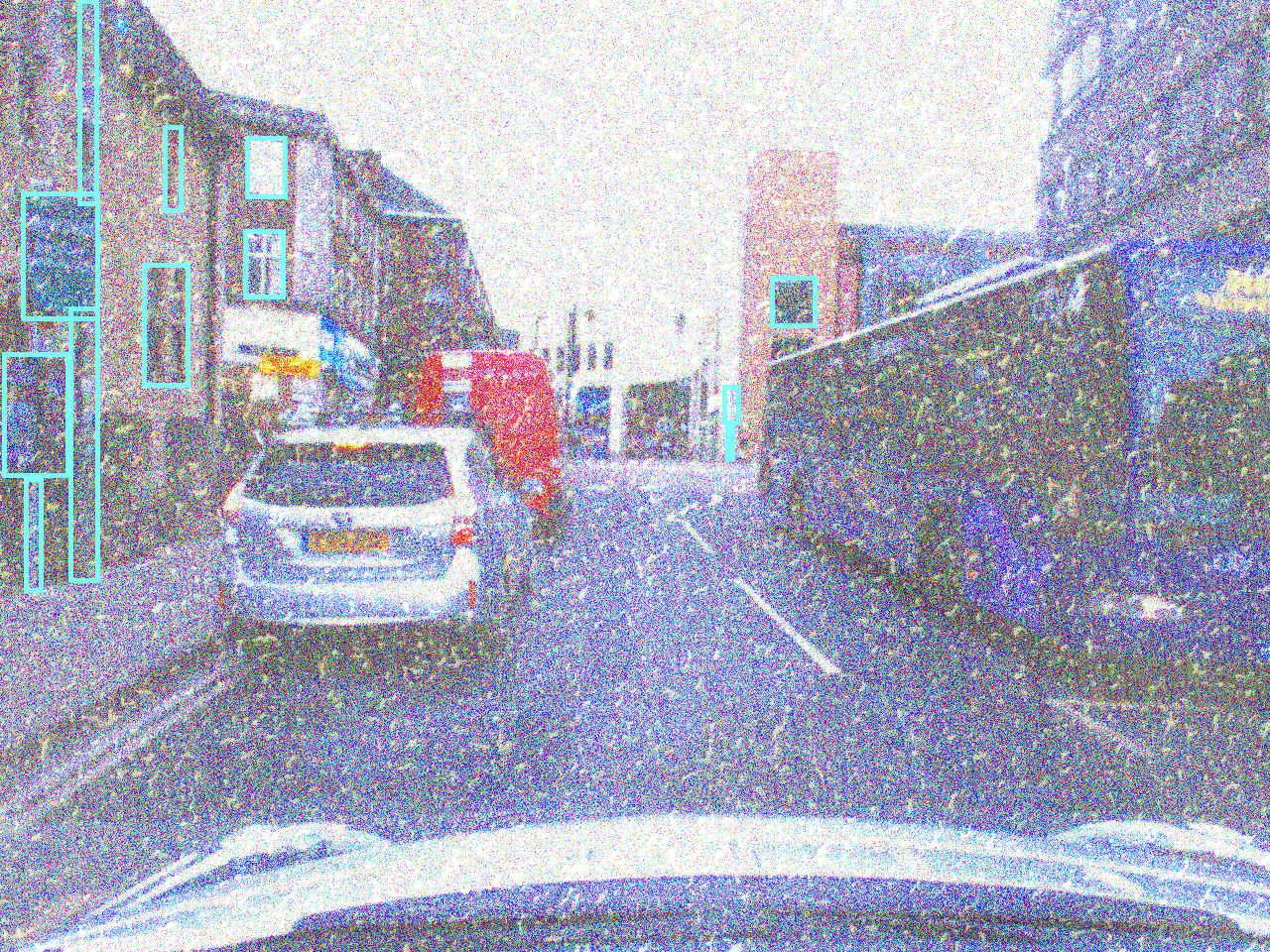}\hfill
\includegraphics[width=0.5\linewidth, height=0.12\textheight]{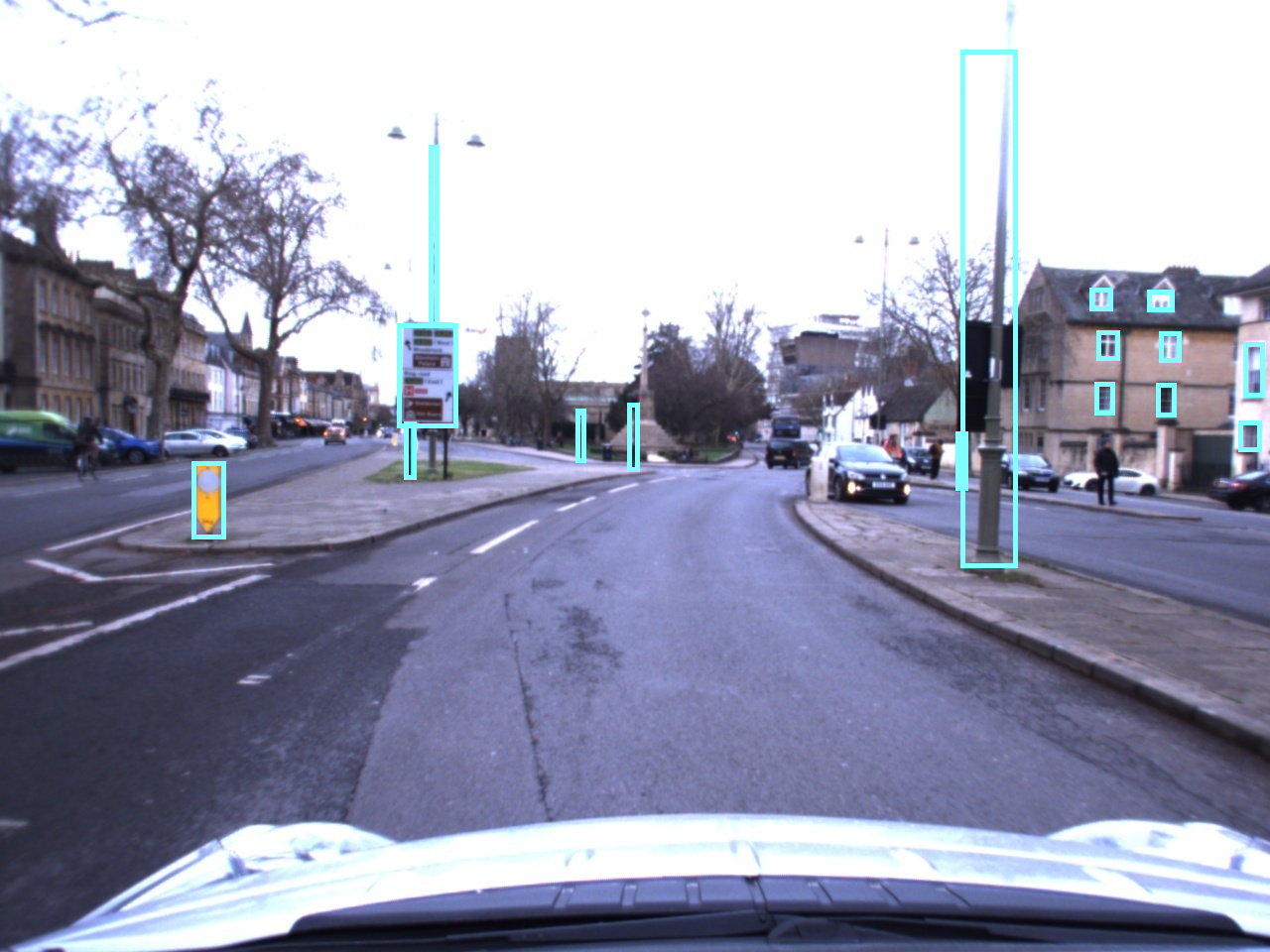}\hfill
\includegraphics[width=0.5\linewidth, height=0.12\textheight]{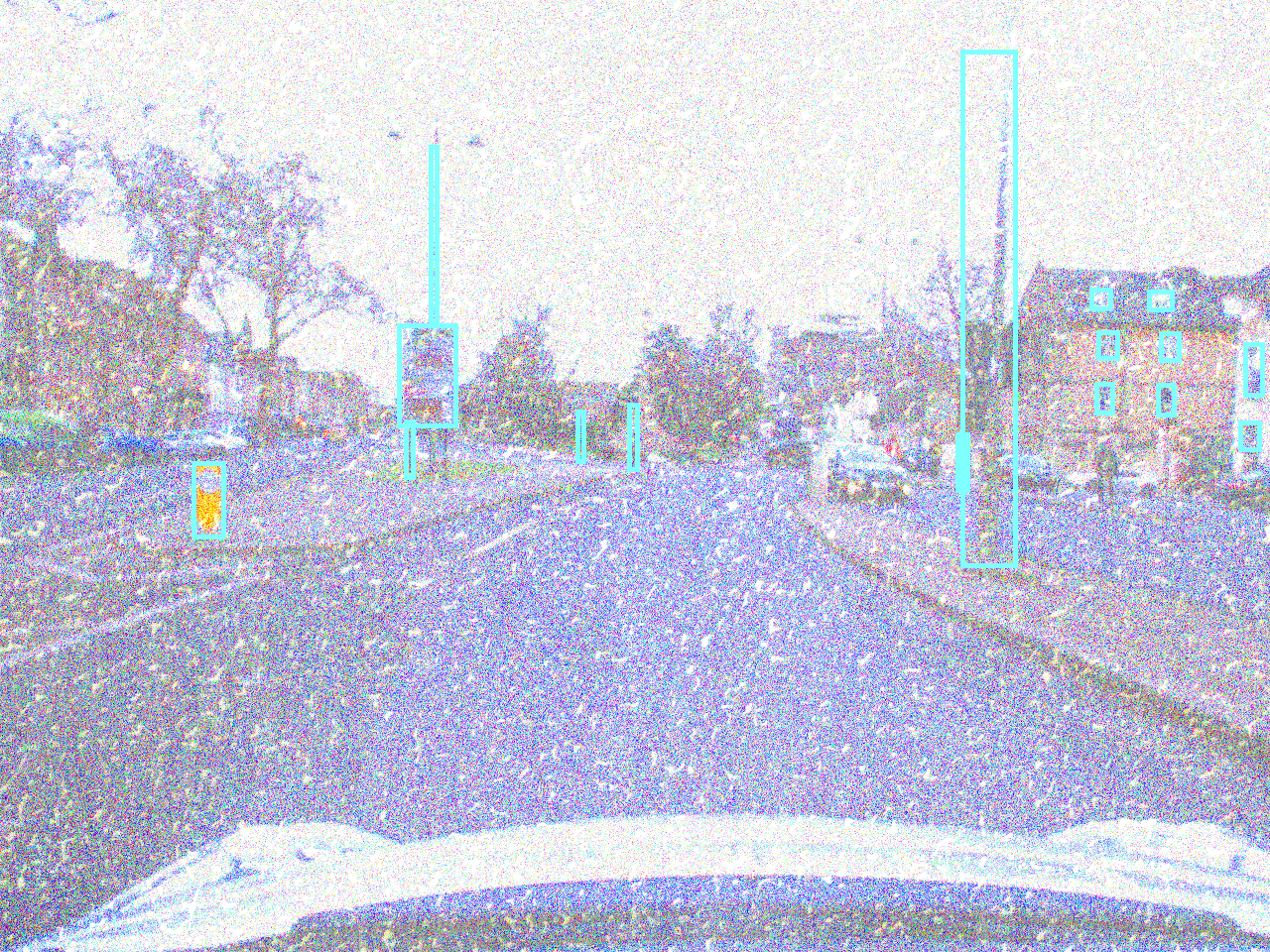}\hfill
\caption{Examples of the ground truth landmark bounding box labels for the Oxford Radar RobotCar dataset, w.r.t.\ the clear and the noisy images.}
\label{fig:label_oxford}
\end{figure}

\textbf{Oxford Noisy Landmark Patches.} 
To build the dataset of Oxford Noisy Landmark Patches, we manually labeled landmarks including traffic lights, traffic signs, poles, and windows for $500$ frames. See \cref{fig:label_oxford} for an example. Compared with the landmark patches from the KITTI dataset, we additionally include the window class in this dataset. 
(Window labels are not available for the KITTI dataset yet. We will enrich the KITTI dataset with window labels in future work. Other building facade elements  like doors and balconies will also be added to the two landmark patch datasets.) 
Similar to the operations for the KITTI dataset, we obtain the final landmark patches for the Oxford Radar RobotCar dataset. 

\begin{figure}[!htb]
\centering
\includegraphics[width=\linewidth]{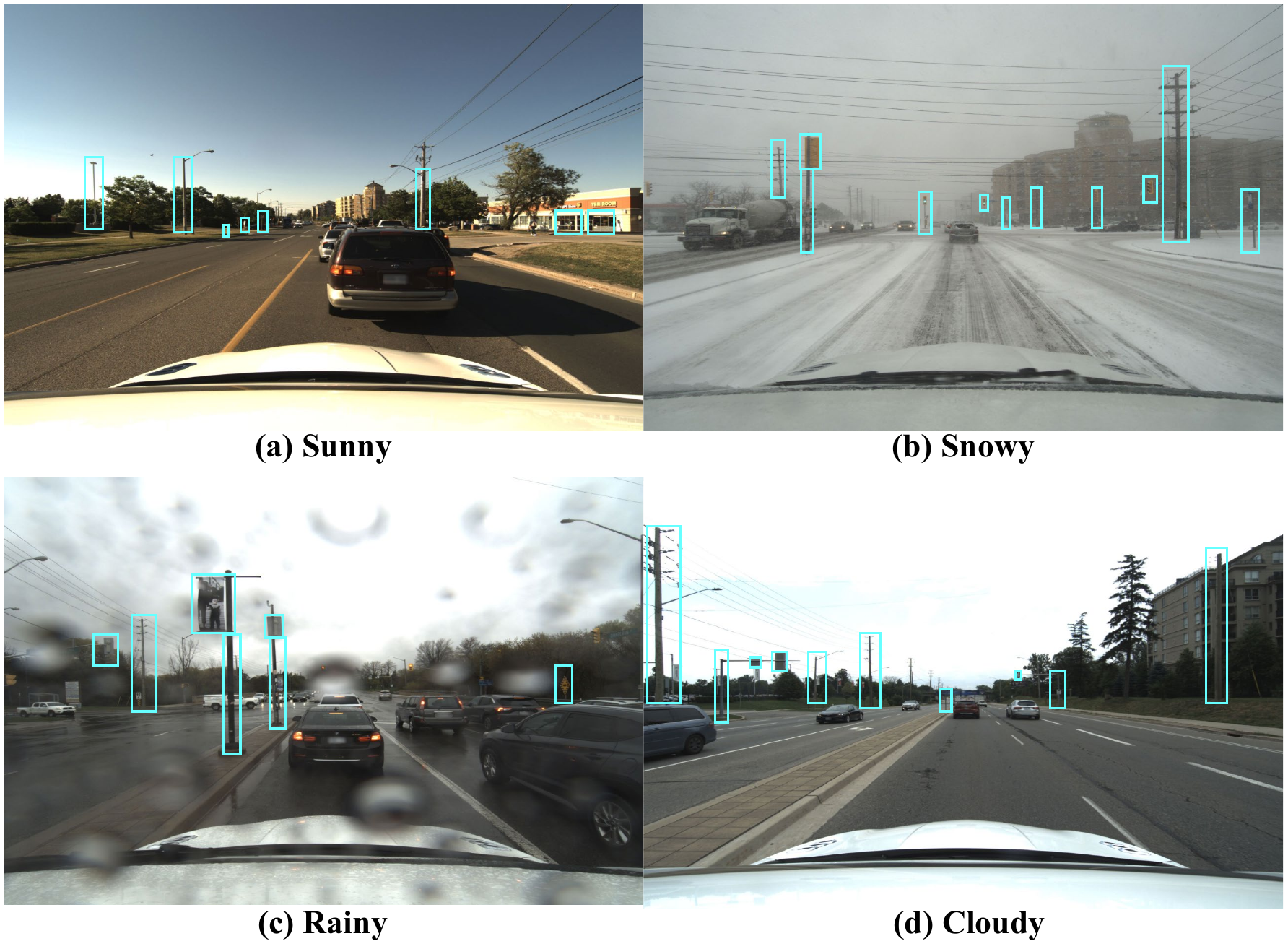}\hfill
\caption{
Examples of the ground truth landmark bounding box labels for the Boreas datasets under sunny, snowy, rainy and cloudy scenarios.
}
\label{fig:label_boreas}
\end{figure}

\textbf{Boreas Sunny (Snowy/Rainy/Cloudy) Landmark Patches.} 
To extract the Boreas Sunny (Snowy/Rainy/Cloudy) Landmark Patches, we follow a manual labeling process, similar to the procedure used for the Oxford dataset, to annotate landmarks such as traffic lights, traffic signs, poles, and windows. The Boreas dataset comprises $1500$ frames for the sunny scenario and $500$ frames each for the snowy, rainy, and cloudy scenarios, as illustrated in \cref{fig:label_boreas}.
For training and validation purposes, we utilize $1500$ frames from the sunny Boreas dataset, and for testing, $500$ frames each from the snowy, rainy, and cloudy Boreas datasets are employed. 
Employing operations similar to those used for the KITTI and Oxford datasets, we obtain the final landmark patches for the Boreas datasets under various scenarios, including sunny, snowy, rainy, and cloudy conditions. 

\begin{table*}[!htb]
\caption{Matching performance comparison with the keypoint/pixel-level baselines containing neighborhood information. The test dataset is the KITTI Noisy Landmark Patches. The best and the second-best results under different metrics are highlighted in \textbf{bold} and \underline{underlined}, respectively.} 
\label{table-keypoint-neighbor}
\centering
\begin{tabular}{c|c|c|c|c}
\hline\hline
{\bf Methods}  & {\bf Precision} & {\bf Recall} & {\bf $F_1$-Score} & {\bf AUC}
\\ \hline
D2-Net (neighbor)              & \underline{0.9254} \scriptsize{$\pm$ 0.0032}  & 0.8011 \scriptsize{$\pm$ 0.0196} & 0.8587 \scriptsize{$\pm$ 0.0123}                         & 0.8037 \scriptsize{$\pm$ 0.0113} \\ 
SuperGlue (neighbor)           & 0.8711 \scriptsize{$\pm$ 0.0023}              & 0.8437 \scriptsize{$\pm$ 0.0216} & 0.8571 \scriptsize{$\pm$ 0.0115}                         & 0.7347 \scriptsize{$\pm$ 0.0080} \\ 
LoFTR (neighbor)                & 0.9179 \scriptsize{$\pm$ 0.0020}              & \underline{0.8528} \scriptsize{$\pm$ 0.0072} & \underline{0.8841} \scriptsize{$\pm$ 0.0041}   & \underline{0.8120} \scriptsize{$\pm$ 0.0042} \\ 
RobustMat (GAT-PDE) \&  Trained on Oxford
                   & {\textbf{0.9408}} \scriptsize{$\pm$ 0.0024}   & {\textbf{0.8560}} \scriptsize{$\pm$ 0.0174} & {\textbf{0.8963}} \scriptsize{$\pm$ 0.0101} & {\textbf{0.8472}} \scriptsize{$\pm$ 0.0093} \\ 
\hline\hline
D2-Net (neighbor) + Denoising   
& 0.9229 \scriptsize{$\pm$ 0.0042}             & \underline{0.8757} \scriptsize{$\pm$ 0.0179}  & \underline{0.8987} \scriptsize{$\pm$ 0.0112} & \underline{0.8283} \scriptsize{$\pm$ 0.0127} \\ 
SuperGlue (neighbor) + Denoising      
& 0.8976 \scriptsize{$\pm$ 0.0036}             & 0.8411 \scriptsize{$\pm$ 0.0075}              & 0.8684 \scriptsize{$\pm$ 0.0051}       
        & 0.7765 \scriptsize{$\pm$ 0.0072} \\ 
LoFTR (neighbor) + Denoising                 
& \underline{0.9255} \scriptsize{$\pm$ 0.0012}  & 0.8619 \scriptsize{$\pm$ 0.0109}             & 0.8925 \scriptsize{$\pm$ 0.0057}       
        & 0.8269 \scriptsize{$\pm$ 0.0041} \\ 
RobustMat (GAT-PDE) + Denoising \& Trained on Oxford
& \textbf{0.9327} \scriptsize{$\pm$ 0.0037}  & \textbf{0.8869} \scriptsize{$\pm$ 0.0191} & \textbf{0.9091} \scriptsize{$\pm$ 0.0113} & \textbf{0.8475} \scriptsize{$\pm$ 0.0120}\\ 
\hline\hline
\end{tabular}
\end{table*}

\subsection{Model Setting Details} \label[Appendix]{app.model_setting}
The detailed model setting of RobustMat is provided in \cref{model-setting}. 
The ODE solution package for the neural ODE and PDE modules is the ``odeint'' from the Python library ``torchdiffeq''\footnote{https://github.com/saheya/torchdiffeq}.

\subsection{More Experimental Results}\label[Appendix]{more_results}

In the main paper, we have compared our RobustMat with the baselines with keypoint/pixel level correspondence, including D2-Net, SuperGlue and LoFTR.  
However, since our method considers neighborhood information, we also introduce it into the keypoint-level baselines, including D2-Net, SuperGlue and LoFTR , on the KITTI and Oxford Noisy Landmark Patches. 
Specifically, for a given patch, we sort its neighbors according to increasing distances from it, where the distances are based on the coordinates of the central pixels of the patches. We then use each baseline method to compare not only the patch pair but also the pairs of their corresponding neighbors with the same sort order. Then, we calculate the average of the predicted scores for the patch pair and its neighbor pairs. Finally, we decide whether there is a match based on a threshold, which is a hyperparameter tuned separately to achieve the best performance for each baseline. 
From \cref{table-keypoint-neighbor}, we observe that our method still surpasses the other methods with neighborhood information. 
This implies that the neighborhood information has been used more efficiently in our method.

To assess the runtime performance, we conducted experiments on an NVIDIA RTX A5000 GPU to evaluate RobustMat. 
\cref{time-table} presents the inference runtime measured in seconds (s), denoting the mean time required for processing one pair of frames during the testing phase, for various RobustMat variants incorporating different graph diffusion modules. We also compare the inference runtime of our RobustMat with that of other baselines including SOLAR, DELG and CVNet. 
For each pair of landmark patches, the average inference runtime of our RobustMat is less than $0.2$ seconds. Although the inference runtime of our RobustMat is longer than that of the baselines, this achieved processing speed is considered acceptable for real-world applications, including place recognition and autonomous driving.

\begin{table}[!htb]
\caption{
Inference runtime of matching methods on Boreas Snowy Landmark Patches.
}
\label{time-table}
\vspace{-0.3cm}
\begin{center}
\newcommand{\tabincell}[2]{\begin{tabular}{@{}#1@{}}#2\end{tabular}}
\begin{tabular}{c|c}
\hline\hline 
{\bf Methods}      & \tabincell{c}{{\bf Inference} {\bf Runtime (s)} \\} \\
\hline
\tabincell{c}{SOLAR\\} & 0.0565 \\
\tabincell{c}{DELG\\}  & 0.0634 \\
\tabincell{c}{CVNet\\} & 0.0801 \\ 
\tabincell{c}{RobustMat (GCN-PDE) \\} & 0.1253 \\
\tabincell{c}{RobustMat (GAT-PDE) \\} & 0.1842 \\
\hline\hline 
\end{tabular}
\end{center}
\end{table}

\subsection{More Theoretical Basis for Loss}

Perturbations in the input lead to perturbations in the matched and unmatched conditional probability distributions. 
We next consider how the perturbations in these conditional probabilities influence $L_d$ in \cref{eq.L_ID_expect}. As the term $\hat{L}_d(f(x), g(\calG^y))$ in \cref{eq.Lce_ID} is symmetrical to $\hat{L}_d(f(y),g(\calG^x))$,
we discuss only $\hat{L}_d(f(x),g(\calG^y))$.

For simplicity, we assume that the pair $(f(x),g(\calG^y))$ are continuous variables randomly generated from a distribution $\P$ with probability density $p:\bbA \to \bbR_{+}$, where $(f(x),g(\mathcal{G}^{y}))$ take values in the set $\mathbb{A}$. We have $\int_{\mathbb{A}} p(f(x),g(\mathcal{G}^{y})) \ud (f(x),\allowbreak g(\mathcal{G}^{y}))=1$. 
 
Assume that $\P(x\leftrightarrow y)>0$ and $\P(x\nleftrightarrow y)>0$, \gls{wrt} any given landmark patch pairs $(x,y)$. 
The probability densities of $(f(x), g(\mathcal{G}^{y}))$ conditioned on $x \leftrightarrow y$ and $x \nleftrightarrow y$ are respectively denoted by  
\begin{align}
\pfgm & := \pfgm (f(x),g(\mathcal{G}^{y})) = p((f(x),g(\mathcal{G}^{y}))\mid {x \leftrightarrow y}), \\
\pfgu & := \pfgu(f(x),g(\mathcal{G}^{y})) = p((f(x),g(\mathcal{G}^{y}))\mid {x \nleftrightarrow y}).
\end{align}
The expectation of $\hat{L}_d(f(x),g(\calG^y))$ is given by 
\begin{align}
    & L_d\nn
    & = \int_{\mathbb{A}} \Big\{ \P(x\leftrightarrow y) \pfgm \log d(f(x),g(\mathcal{G}^y))\nn
    & \quad + \P(x\nleftrightarrow y) \pfgu \log(1-d(f(x), g(\mathcal{G}^{y}))) \Big\} \ud{(f(x),g(\mathcal{G}^y))}.
    \label{eq.L_ID_expect}
\end{align}
To minimize $\ell_{\mathrm{v\calG}}$ in the asymptotic regime $|\calM|\to\infty$ is equivalent to $\max\limits_{f,g,d} L_d$.

Let $\tildepfgm$ and  $\tildepfgu$ be in the $\epsilon$-neighborhoods (cf.\ ``An information-theoretic approach to universal feature selection in high-dimensional inference'') \gls{wrt} $\pfgm$ and $\pfgu$, respectively, i.e., 
\begin{align}
\tildepfgm 
& = \pfgm + \epsilon_{x \leftrightarrow y}, \label{eq.tildepfgm} \\
\tildepfgu 
& = \pfgu + \epsilon_{x \nleftrightarrow y}, \label{eq.tildepfgu}
\end{align}
where 
\begin{align}
& \epsilon_{x \leftrightarrow y} =  \epsilon \sqrt{\pfgm} \phi(f(x),g(\mathcal{G}^{y})), \\
& \epsilon_{x \nleftrightarrow y} = \epsilon \sqrt{\pfgu} \psi(f(x),g(\mathcal{G}^{y})), 
\end{align}
and $\phi(\cdot)$ and $\psi(\cdot)$ are perturbation functions so that 
\begin{align}
& \int_{\bbA}\epsilon_{x \leftrightarrow y} \ud (f(x), g(\calG^y))=0, \label{eq.sum_matched_eps}\\
& \int_{\bbA}\epsilon_{x \nleftrightarrow y} \ud (f(x), g(\calG^y))=0. \label{eq.sum_unmatched_eps}
\end{align}
If the conditional distributions are $\tildepfgm$ and $\tildepfgu$, we denote $L_d$ as ${\tilde L}_d$.

\begin{Theorem}[Effect of conditional probability perturbation]
\label{thm.disturbance_effect}
Assume the perturbed $\tildepfgm$ and $\tildepfgu$ are given by \cref{eq.tildepfgm} and \cref{eq.tildepfgu}. 
Then, we have 
\begin{align}
& |\max_{d} \allowbreak {\tilde L}_d -  \max_d L_d| 
=\calO(\epsilon).
\end{align}
\end{Theorem}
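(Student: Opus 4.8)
The plan is to express $\max_d L_d$ through its pointwise-optimal discriminator, observe that for a \emph{fixed} discriminator the dependence of $L_d$ on the conditional densities is exactly affine in $\epsilon$, and then control $\max_d\tilde L_d$ and $\max_d L_d$ simultaneously by a suboptimality sandwich.

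\emph{Optimal discriminator.} Write $P:=\P(x\leftrightarrow y)>0$ and $Q:=\P(x\nleftrightarrow y)>0$. The integrand in \cref{eq.L_ID_expect} is, at each point $(f(x),g(\calG^y))$, of the form $a\log d+b\log(1-d)$ with $a=P\,\pfgm\ge 0$ and $b=Q\,\pfgu\ge 0$; this is maximized over $d\in[0,1]$ at $d^{\star}=a/(a+b)$, that is,
\[
d^{\star}(f(x),g(\calG^y))=\frac{P\,\pfgm}{P\,\pfgm+Q\,\pfgu},
\]
the same computation that produces the optimal discriminator in adversarial training. Let $\tilde d^{\star}$ be the corresponding maximizer built from $\tildepfgm,\tildepfgu$, so that $\max_d L_d=L_{d^{\star}}$ and $\max_d\tilde L_d=\tilde L_{\tilde d^{\star}}$.

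\emph{Suboptimality sandwich and linearization.} Because $d^{\star}$ is feasible for $\tilde L$ and $\tilde d^{\star}$ is feasible for $L$,
\[
\tilde L_{d^{\star}}-L_{d^{\star}}\ \le\ \max_d\tilde L_d-\max_d L_d\ \le\ \tilde L_{\tilde d^{\star}}-L_{\tilde d^{\star}},
\]
so it suffices to bound $|\tilde L_d-L_d|$ for the two choices $d\in\{d^{\star},\tilde d^{\star}\}$. For \emph{any} fixed $d$, subtracting the integrals and inserting \cref{eq.tildepfgm,eq.tildepfgu} gives the exact identity
\[
\tilde L_d-L_d=\epsilon\int_{\bbA}\Bigl[P\sqrt{\pfgm}\,\phi\,\log d+Q\sqrt{\pfgu}\,\psi\,\log(1-d)\Bigr]\ud(\cdot),
\]
with no higher-order term. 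Using $P,Q\le 1$ and Cauchy--Schwarz,
\[
|\tilde L_d-L_d|\le\epsilon\Bigl(\|\phi\|_2\,\|\sqrt{\pfgm}\,\log d\|_2+\|\psi\|_2\,\|\sqrt{\pfgu}\,\log(1-d)\|_2\Bigr),
\]
and $\|\phi\|_2,\|\psi\|_2<\infty$ by the definition of the $\epsilon$-neighborhood. If moreover $d$ is bounded away from $0$ and $1$, say $|\log d|,|\log(1-d)|\le C$, then $\|\sqrt{\pfgm}\,\log d\|_2\le C\|\sqrt{\pfgm}\|_2=C$ (since $\int\pfgm=1$) and likewise for the other term, whence $|\tilde L_d-L_d|\le C\epsilon(\|\phi\|_2+\|\psi\|_2)=\calO(\epsilon)$; feeding this into the sandwich yields $|\max_d\tilde L_d-\max_d L_d|=\calO(\epsilon)$.

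The main obstacle is precisely the boundedness of $d^{\star}$ (and of $\tilde d^{\star}$): $d^{\star}$ can drift toward $0$ or $1$ wherever one conditional density dominates the other, making $\log d^{\star}$ unbounded. I expect this to be absorbed into the mild regularity already implicit in the setup: with $P,Q>0$ fixed and the conditional densities well-behaved (for instance bounded above and below on a compact $\bbA$, or with tails tame enough that $\int\pfgm(\log d^{\star})^2$ and $\int\pfgu(\log(1-d^{\star}))^2$ are finite), $d^{\star}$ stays uniformly in a compact subset of $(0,1)$, and for $\epsilon$ small so does $\tilde d^{\star}$ by continuity; the displayed bound then applies to both endpoints of the sandwich and completes the proof, with a constant depending only on $\|\phi\|_2$, $\|\psi\|_2$, and the uniform bound on $|\log d^{\star}|$, $|\log(1-d^{\star})|$.
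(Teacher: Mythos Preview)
Your argument is correct at the same level of rigor as the paper's, but it takes a genuinely different route. The paper does not use a suboptimality sandwich: it computes $\max_d\tilde L_d-\max_d L_d=\tilde L_{\tilde d^\star}-L_{d^\star}$ directly by Taylor-expanding $\log\tilde d^\star$ and $\log(1-\tilde d^\star)$ to first order in $\epsilon$ (so both the discriminator and the densities are perturbed simultaneously), and then reads off the leading term as an explicit integral plus an $\calO(\epsilon^2)$ remainder. Your approach instead freezes the discriminator, exploits the exact affine dependence of $L_d$ on the conditional densities, and brackets the difference of maxima between $\tilde L_{d^\star}-L_{d^\star}$ and $\tilde L_{\tilde d^\star}-L_{\tilde d^\star}$.

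What each buys: the paper's expansion yields the explicit first-order coefficient, which is informative if one cares about the actual size of the shift; your sandwich is more elementary (no expansion of $\tilde d^\star$ is needed) and it isolates the regularity requirement cleanly as finiteness of $\int\pfgm(\log d^\star)^2$ and $\int\pfgu(\log(1-d^\star))^2$. Both proofs in fact rest on the same implicit assumption that $d^\star$ stays away from $\{0,1\}$ (the paper hides this in the validity of the Taylor remainder, you state it openly), so your caveat at the end is not a weakness relative to the paper but an honest accounting of what is being assumed.
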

\begin{proof}
From \cref{eq.L_ID_expect}, $L_d$ is concave in $d(f(x), g(\mathcal{G}^{y}))$ for each $(x,y)$. By setting the first derivatives to zero, we obtain $d^*$ in \cref{eq.D*}.
Replacing $d$ in \cref{eq.L_ID_expect} with $d^{*}$, we obtain $L_{d^*}$.

Similarly, using $\tildepfgm$ and $\tildepfgu$ to replace $\pfgm$ and $\pfgu$ in $d^{*}(f(x), g(\mathcal{G}^{y}))$ and $L_{d^*}$ respectively, we obtain $\tilde{d}^{*}(f(x), g(\mathcal{G}^{y}))$ and ${\tilde L}_{\tilde{d}^*}$.
%

Using Taylor's series expansion, we substitute \cref{eq.tildepfgm} and \cref{eq.tildepfgu} into ${\tilde d}^{*}(f(x), g(\mathcal{G}^{y}))$ to obtain
\begin{align}
    & \log{\tilde d}^{*}(f(x),g(\mathcal{G}^y)) \nn
    & = \log\bigg\{\frac{{\P(x\leftrightarrow y)}\pfgm}{p(f(x),g(\mathcal{G}^{y}))}\bigg\}
    + \epsilon \bigg\{ [\pfgm]^{-\frac{1}{2}}\phi(f(x),g(\mathcal{G}^{y})) \nn
    & \qquad - \frac{{\P(x\leftrightarrow y)}\sqrt{\pfgm} \phi(f(x),g(\mathcal{G}^{y}))}{p(f(x),g(\mathcal{G}^{y}))} \nn
    & \qquad - \frac{{\P(x\nleftrightarrow y)}\sqrt{\pfgu}\psi(f(x),g(\mathcal{G}^{y}))}{p(f(x),g(\mathcal{G}^{y}))}\bigg\} 
    + \calO(\epsilon^2),\label{eq.log_tilded*} \\
    &\log(1-{\tilde d}^{*}(f(x),g(\mathcal{G}^y))) \nn
    & = \log\bigg\{\frac{{\P(x\nleftrightarrow y)}\pfgu}{p(f(x),g(\mathcal{G}^{y}))}\bigg\}
    + \epsilon \bigg\{ [\pfgu]^{-\frac{1}{2}}\psi(f(x),g(\mathcal{G}^{y})) \nn
    & \qquad - \frac{{\P(x\leftrightarrow y)}\sqrt{\pfgm} \phi(f(x),g(\mathcal{G}^{y}))}{p(f(x),g(\mathcal{G}^{y}))} \nn
    & \qquad - \frac{{\P(x\nleftrightarrow y)}\sqrt{\pfgu}\psi(f(x),g(\mathcal{G}^{y}))}{p(f(x),g(\mathcal{G}^{y}))}\bigg\}
    + \calO(\epsilon^2). \label{eq.log_1-tilded*}
\end{align} 

Substituting \cref{eq.log_tilded*} and \cref{eq.log_1-tilded*} into ${\tilde L}_{\tilde{d}^*}$, we have 
\begin{align}
    & {\tilde L}_{{\tilde d}^{*}} - L_{d^*} \nn
    & = \epsilon \int_{\mathbb{A}}\bigg\{ {\P(x\leftrightarrow y)}\sqrt{\pfgm} \phi(f(x),g(\mathcal{G}^{y})) \nn
    & \qquad \cdot \log\bigg( \frac{{\P(x\leftrightarrow y)}\pfgm}{p(f(x),g(\mathcal{G}^{y}))} \bigg) \nn
    & \quad + {\P(x\nleftrightarrow y)}\sqrt{\pfgu} \psi(f(x),g(\mathcal{G}^{y})) \nn
    & \qquad \log\bigg( \frac{{\P(x\nleftrightarrow y)}\pfgu}{p(f(x),g(\mathcal{G}^{y}))} \bigg) 
    \bigg\} \ud (f(x),g(\mathcal{G}^y)) 
    + \calO(\epsilon^2), 
\end{align}
and the proof is completed.
\end{proof}

\begin{Remark}
As the neural diffusion modules introduce limited noise in the feature representations, the perturbations in the matched and unmatched conditional probabilities are also controlled.
From \cref{thm.disturbance_effect}, we conclude that the perturbation in the loss is also controlled, which leads to more accurate feature learning. 
\end{Remark}

To minimize $\ell_{\mathrm{v\calG}}$ in \cref{eq.Lce_ID}, we seek to maximize $L_d$. The following result provides an upper bound for $L_d$. 
\begin{Theorem}[Upper bound for $L_d$.]
\label{thm.optimal_solution}
For all discriminators $d$, we have 
\begin{align}
    L_d \le 0.
\end{align}
A sufficient condition for equality to hold is when $\pfgm$ and $\pfgu$ are mutually singular measures and the optimal discriminator given by
\begin{align}
    & d^{*}(f(x), g(\mathcal{G}^{y})) 
    = \frac{\P(x\leftrightarrow y) \pfgm}
    {p(f(x),g(\mathcal{G}^{y}))} \label{eq.D*} 
\end{align}
is used.
\end{Theorem}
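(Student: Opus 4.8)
The plan is to treat the objective $L_d$ in \cref{eq.L_ID_expect} as a pointwise optimization problem over the value $d(f(x), g(\calG^y)) \in (0,1)$ for each fixed $(x,y)$, exactly as done implicitly in the proof of \cref{thm.disturbance_effect}. First I would observe that the integrand in \cref{eq.L_ID_expect} has the form $a \log t + b \log(1-t)$ with $a = \P(x\leftrightarrow y)\pfgm \ge 0$ and $b = \P(x\nleftrightarrow y)\pfgu \ge 0$, which for $a,b$ not both zero is strictly concave in $t\in(0,1)$ and maximized at $t^* = a/(a+b)$. Here $a + b = \P(x\leftrightarrow y)\pfgm + \P(x\nleftrightarrow y)\pfgu = p(f(x), g(\calG^y))$ by the law of total probability, which gives precisely the claimed $d^*$ in \cref{eq.D*}. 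Substituting $t = t^*$ back, the integrand becomes $a\log\!\frac{a}{a+b} + b\log\!\frac{b}{a+b}$, so
\begin{align}
L_{d^*} = -\int_{\bbA} \braces*{ a \log\frac{a+b}{a} + b\log\frac{a+b}{b} } \ud(f(x), g(\calG^y)),
\end{align}
and each summand $a\log\frac{a+b}{a}$, $b\log\frac{a+b}{b}$ is nonnegative since $\frac{a+b}{a}\ge 1$ and $\frac{a+b}{b}\ge 1$. Hence $L_{d^*}\le 0$, and since $d^*$ is the pointwise maximizer, $L_d \le L_{d^*} \le 0$ for every $d$.

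For the equality condition, I would analyze when $L_{d^*} = 0$: this forces $a\log\frac{a+b}{a} = 0$ and $b\log\frac{a+b}{b} = 0$ for ($\P$-almost) every $(f(x),g(\calG^y))$, i.e. at each point either $a = 0$ or $b = 0$ (equivalently either $\pfgm = 0$ or $\pfgu = 0$). This is exactly the statement that the measures with densities $\pfgm$ and $\pfgu$ have disjoint supports, i.e.\ are mutually singular; one can then exhibit a measurable set $\calN$ carrying all the mass of $\pfgm$ and none of $\pfgu$, and observe $d^*$ equals $\P(x\leftrightarrow y)\pfgm / p$ which is $1$ on (most of) $\calN$ and $0$ off it, making the integrand vanish everywhere. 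I would present this as a sufficient condition, matching the statement.

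The main subtlety—rather than a genuine obstacle—is handling the boundary/degenerate cases cleanly: the expression $a\log t$ must be interpreted with the convention $0\log 0 = 0$, and one should note that $d^*$ takes values in $[0,1]$ (possibly hitting the endpoints), so strictly speaking $L_{d^*}$ is a supremum over $d$ valued in $(0,1)$ that is attained in the closure; I would remark that the bound $L_d \le 0$ holds for all admissible $d$ regardless, by pointwise concavity. A secondary point is justifying the interchange of ``maximize'' and ``integrate'' — since the maximization is pointwise in the integrand and the pointwise maximum is a measurable function of $(x,y)$, no nontrivial exchange is needed; the integral of the pointwise maxima dominates $L_d$ for every fixed measurable $d$. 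With these conventions fixed, the proof is a short direct computation, so I would keep the write-up to the two displays above plus a sentence on the singular-support case.
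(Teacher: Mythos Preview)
Your proposal is correct and takes a genuinely more elementary route than the paper. Both arguments begin identically---pointwise concavity of $a\log t + b\log(1-t)$ yields the optimal discriminator $d^* = a/(a+b)$ with $a+b = p(f(x),g(\calG^y))$---but they diverge when bounding $L_{d^*}$. The paper rewrites $L_{d^*}$ as a sum of two $\calF$-divergence terms with generator $\calF_S(u) = -\log(\P(x\leftrightarrow y) + \P(x\nleftrightarrow y)\,u)$, then invokes a cited upper bound $D_{\calF}(p\|q) \le \lim_{t\to 0^+}(\calF(t)+\calF^*(t))$ for $f$-divergences together with L'H\^opital's rule to evaluate the limit of the perspective $\calF_S^*$. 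Your argument bypasses all of this: after substitution the integrand is $-\bigl(a\log\frac{a+b}{a} + b\log\frac{a+b}{b}\bigr)$, and each summand is trivially nonnegative because $a,b\ge 0$ and $\frac{a+b}{a},\frac{a+b}{b}\ge 1$. This is shorter, self-contained (no external divergence bound), and as a bonus gives the equality analysis for free: the integrand vanishes iff $ab=0$ pointwise, which is exactly mutual singularity. The paper's approach, by contrast, establishes sufficiency of singularity by a separate direct computation. What the paper's route buys is a connection to the $\calF$-divergence literature, which may be conceptually useful elsewhere; what yours buys is brevity and no dependence on an outside reference.
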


\begin{proof}
With $d^*$ the optimal discriminator given by \cref{eq.D*}, we have $\max_{\{f,g,d\}} L_d=\max_{\{f,g\}} L_{d^*}$, 
where $L_{d^*}$ is obtained by substituting \cref{eq.D*} into \cref{eq.L_ID_expect}:
\begin{align}
     L_{d^*} 
    & = \P(x\leftrightarrow y)
    \E[\log \frac{\pfgm}{ p(f(x),g(\mathcal{G}^{y}))} \given x \leftrightarrow y] \nn
    & \quad + \P(x\nleftrightarrow y)
    \E[\log \frac{\pfgu}{p(f(x),g(\mathcal{G}^{y}))} \given x \nleftrightarrow y] \nn
    & \quad + \P(x\leftrightarrow y)\log\P(x\leftrightarrow y) \nn
    & \quad + \P(x\nleftrightarrow y)\log\P(x\nleftrightarrow y). \label{eq.Ld*_standard} 
\end{align}

A $\cal F$-divergence is written as $D_{\cal F}(p(x) \| q(x)) \allowbreak = \int q(x) \allowbreak {\cal F}(\frac{p(x)}{q(x)}) \ud x$, where ${\cal F}(\cdot)$ is a convex and lower-semicontinuous function satisfying ${\cal F}(1)=0$.
The first term of \cref{eq.Ld*_standard} can be expressed as 
\begin{align}
    \P(x\leftrightarrow y) D_{\cal F_{S}}(\pfgu \| \pfgm),
\end{align}
where 
\begin{align}
    & D_{\cal F_{S}}(\pfgu \| \pfgm) \nn
    & = \int_{\bbA} \pfgm \Big\{ - \log\Big( \P(x\leftrightarrow y) \nn
    & \qquad + \P(x\nleftrightarrow y) \frac{\pfgu}{\pfgm}  \Big)  \Big\} \ud (f(x),g(\calG^y)), 
\end{align}
and the function ${\cal F_{S}}(\cdot)$ is given by
\begin{align}
    {\cal F_{S}}(u) = -\log(\P(x\leftrightarrow y) + \P(x\nleftrightarrow y) u), 
\end{align}
as well as $u = \frac{\pfgu}{\pfgm}$.

From the reference (titled ``Some bounds for skewed $\alpha$-Jensen-Shannon divergence''), 
$D_{\cal F}(p(x)\|q(x)) \le \lim_{t \to 0_{+}}({\cal F}(t)+{\cal F}^*(t))$, where ${\cal F}^*(t)$ is the conjugate function of ${\cal F}(t)$ and ${\cal F}^*(t)=t{\cal F}(\frac{1}{t})$. 
According to L'Hospital's rule, it is readily seen that $\lim_{u\to 0_{+}}{\cal F_{S}}^*(u) = {\cal F_{S}}'(\infty) = 0$. 
As a result, we have
\begin{align}
    & D_{\cal F_{S}}(\pfgu \| \pfgm) \le -\log(\P(x\leftrightarrow y)).
\end{align}

Similarly, we have 
\begin{align}
    & D_{\cal F_{S}}(\pfgm \| \pfgu) \le -\log(\P(x\nleftrightarrow y)).
\end{align}
Therefore,
\begin{align}
    L_{d^*}
    & \le \P(x\leftrightarrow y) \{-\log(\P(x\leftrightarrow y)) \} \nn
    & \quad + \P(x\nleftrightarrow y) \{-\log(\P(x\nleftrightarrow y)) \} \nn 
    & \quad + \P(x\leftrightarrow y)\log\P(x\leftrightarrow y) + \P(x\nleftrightarrow y)\log\P(x\nleftrightarrow y) \nn
    & \quad = 0, \label{eq.upper_bound}
\end{align}
which yields $ L_d \le L_{d^*} \le 0$. 

If $\pfgm$ and $\pfgu$ are mutually singular measures, we have
\begin{align}
    & L_{d^*} \nn
    & = \P(x\leftrightarrow y)  \log\frac{1}{\P(x\leftrightarrow y)} \int_{\mathbb{A}} \pfgm \ud (f(x),\allowbreak g(\mathcal{G}^{y})) \nn   
    & \qquad + \P(x\nleftrightarrow y) \log\frac{1}{\P(x\nleftrightarrow y)} \int_{\mathbb{A}} \pfgu \ud (f(x),\allowbreak g(\mathcal{G}^{y})) \nn
    & \qquad + \P(x\leftrightarrow y)\log\P(x\leftrightarrow y) + \P(x\nleftrightarrow y)\log\P(x\nleftrightarrow y) \nn
    & = \P(x\leftrightarrow y)  \log\frac{1}{\P(x\leftrightarrow y)}  
    + \P(x\nleftrightarrow y) \log\frac{1}{\P(x\nleftrightarrow y)}  \nn
    & \qquad + \P(x\leftrightarrow y)\log\P(x\leftrightarrow y) + \P(x\nleftrightarrow y)\log\P(x\nleftrightarrow y) \nn
    & = 0,
\end{align}
which achieves the upper bound in \cref{eq.upper_bound}. The proof is now complete.
\end{proof}

\Cref{thm.optimal_solution} gives the form of the ideal discriminator. However, the probability density $p$ is unknown a priori. Therefore, we propose to learn it using the structure given by \cref{eq:ddd}.
\Cref{thm.optimal_solution} indicates that in the \emph{ideal case}, the matched and unmatched conditional distributions are mutually singular. In other words, $(f(x), g(\calG^y))$ cannot be realized under both the matched and unmatched cases simultaneously.

\vfill

\end{document}